\theoremstyle{plain}
\newtheorem{defn}{Definition}
\newtheorem{theorem}{Theorem}
\newtheorem{prop}[theorem]{Proposition}
\newtheorem{lemma}[theorem]{Lemma}
\newtheorem{corollary}[theorem]{Corollary}
\newtheorem*{rep@theorem}{\rep@title}
\newcommand{\newreptheorem}[2]{%
	\newenvironment{rep#1}[1]{%
		\def\rep@title{#2 \ref{##1}}%
		\begin{rep@theorem}}%
		{\end{rep@theorem}}}
\theoremstyle{definition}
\newtheorem{remark}{Remark}
\newcommand{\R}{\mathbb{R}}
\newcommand{\E}{\mathbb{E}}
\newcommand{\cX}{\mathcal{X}}
\newcommand{\cC}{\mathcal{C}}
\newcommand{\cS}{\mathcal{S}}
\newcommand{\bI}{\mathbb{I}}
\newcommand{\etaorg}{\frac{8(\log k+\log(m+1)+\log(1/\delta))}{(\gamma-1)^2}}
\newcommand{\etaass}{\frac{\log (2k)+\log(m+1)+\log(1/\delta)}{\log\left(1/(1-q_{assign}+q_{assign} e^{-(\gamma-1)^2/8})\right)}}
\newcommand{\etaran}{\frac{\log (2k)+\log(m+1)+\log(1/\delta)}{\log\left(1/(1-q^{k-1}+q^{k-1} e^{-(\gamma-1)^2/8})\right)}}
\newcommand{\betaran}{\frac{\log(2k)+\log\log n+\log(1/\delta)}{\log(1/(1-q))}}
\newcommand{\etadistlocal}{\frac{\log (k)+\log(m+1)+\log(1/\delta)}{\log\left(1/(1-q^{k-1}+q^{k-1} e^{-(\gamma-1)^2/8})\right)}}
\newcommand{\dimbound}{(k/\delta)^{\frac{1}{(\gamma -1)^2}-1}}
\newcommand{\iIF}[1]{\STATE\algorithmicif\ #1\ \algorithmicthen}
\newcommand{\iELSIF}[1]{\STATE\algorithmicelsif\ #1\ \algorithmicthen}
\newcommand{\iELSE}[1]{\STATE\algorithmicelse\ #1}
\newcommand{\iENDIF}{\STATE\algorithmicend\ \algorithmicif}
\title{Semi-Supervised Active Clustering with Weak Oracles}
\begin{document}

\title{\bf Semi-Supervised Active Clustering with Weak Oracles}
\author{Taewan Kim}
\author{Joydeep Ghosh}
\affil{\normalsize Department of Electrical and Computer Engineering,\\The University of Texas at Austin, USA\\
\texttt{\{twankim,jghosh\}@utexas.edu}}
\date{\vspace{-5ex}}
\maketitle

\begin{abstract}
	Semi-supervised active clustering (SSAC) utilizes the knowledge of a domain expert to cluster data points by interactively making pairwise ``same-cluster'' queries. However, it is impractical to ask human oracles to answer every pairwise query. In this paper, we study the influence of allowing ``not-sure'' answers from a weak oracle and propose algorithms to efficiently handle uncertainties. Different types of model assumptions are analyzed to cover realistic scenarios of oracle abstraction. In the first model, random-weak oracle, an oracle randomly abstains with a certain probability. We also proposed two distance-weak oracle models which simulate the case of getting confused based on the distance between two points in a pairwise query. For each weak oracle model, we show that a small query complexity is adequate for the effective $k$ means clustering with high probability. Sufficient conditions for the guarantee include a $\gamma$-margin property of the data, and an existence of a point close to each cluster center. Furthermore, we provide a sample complexity with a reduced effect of the cluster's margin and only a logarithmic dependency on the data dimension. Our results allow significantly less number of same-cluster queries if the margin of the clusters is tight, i.e. $\gamma \approx 1$. Experimental results on synthetic data show the effective performance of our approach in overcoming uncertainties.
\end{abstract}

\section{Introduction}\label{sec:intro}
Clustering is one of the most popular approaches for extracting meaningful insights from unlabeled data. However, clustering is also very challenging for a wide variety of reasons \cite{jain1999data}. Finding the optimal solution of even the simple $k$-means objective is known to be NP-hard \cite{davidson2005clustering,mahajan2009planar,vattani2009hardness,reyzin2012data}. Second, the quality of a clustering algorithm is difficult to evaluate without context.

Semi-supervised clustering is one way to overcome these problems by providing a small amount of additional knowledge related to the task. Various kinds of supervision can help unsupervised clustering: labeled samples, pairwise must-link or cannot-link constraints on elements, and split/merge requests \cite{basu2002semi,basu2004active,balcan2008clustering}. As domain experts have clear understanding about the nature of datasets, generating a small amount of supervised information should not be a challenging task for them. For example, a few pairs of samples among the large number of unlabeled animal images can be provided, and a participant can decide whether each pair must be in the same cluster or not.

Assumptions on the characteristics of a dataset itself can also assist a clustering problem. Constraints related to a margin, or a distance between different clusters, are widely used in theoretical works. Although these are strong assumptions, a margin ensures the existence of an optimal clustering, which coincides with a human expert's judgment.

The semi-supervised active clustering (SSAC) framework proposed by \citet{ashtiani2016clustering} combines both margin property and pairwise constraints in the active query setting. A domain expert can help clustering by answering same-cluster queries, which ask whether two samples belong to the same cluster or not. By using an algorithm with two phases, it was shown that the oracle's clustering can be recovered in polynomial time with high probability. However, their formulation of the same-cluster query has only two choices of answers, \textit{yes} or \textit{no}. This might be impractical as a domain expert can also encounter ambiguous situations which are difficult to respond to in a short time.

Therefore, we provide a SSAC framework that can also make a good use of weak supervision by allowing a ``not-sure'' response. We first analyze the effect of a weak oracle with a random behavior and provide the possibility of discovering the oracle's clustering with active same-cluster queries. Then several types of weak oracles are defined, and a minor assumption is shown to ensure the recovery of the oracle's clustering with high probability.

\subsection{Our Contributions}\label{subsec:contribution}
We provide novel and efficient semi-supervised active clustering algorithms for center-based clustering task, which can discover the inherent clustering of an imperfect oracle. Our work is motivated by the SSAC algorithm \cite{ashtiani2016clustering}, and the following question: ``Is it possible to perform a clustering task efficiently even with a non-ideal domain expert?''. We answer this question by formulating different types of weak oracles and prove that the SSAC algorithm can still work well under uncertainties by using properly modified binary search schemes.

The SSAC algorithm is composed of two phases, estimation of a cluster center and then of the cluster radius. Both phases are affected by not-sure answers and each phase is investigated to have a good estimation. Non-trivial strategies are developed by utilizing the characteristics of weak oracles. Our paper combines discoveries from both phases and provides a unified probabilistic guarantee for the entire algorithm's success.

Two realistic weak oracle models are introduced in the paper. First, if an oracle answers ``not-sure'' randomly with at most some fixed probability, we prove that reasonably increased sampling and query sizes can lead to a successful approximation of true cluster centers and radii of clusters. Our result generalizes the SSAC without a not-sure option in a query.

Next, we suggest practical weak-oracle model assumptions based on reasonable cases that may lead to ambiguity in answering a same-cluster query. In particular, we considered two scenarios: (i) a distance between two points from different clusters is too small, and (ii) a distance between two points within the same cluster is too large. If there exists at least one cluster element close enough to the center, an oracle's clustering can be recovered with high probability. This close point is identified from a good approximation of the cluster center and removes the uncertainty in estimating the radius of a cluster. In fact, this practical strategy is based on the idea to make use of deterministic behaviors of distance-weak oracles, and our assumption on the existence of points close to the center is very natural. Two different distance-based weak oracles are considered, and our algorithm can resolve both types.

Query complexity is obtained by utilizing a matrix concentration inequality \cite{tropp2012user}, which relies on the $\gamma$-margin property. Our new theoretical result shows that the SSAC algorithm requires less number of samples compared to the one proved by \citet{ashtiani2016clustering} when the margin between clusters is tight and the dimension of data is $O\left(\dimbound\right)$.

Finally, experimental results on synthetic data show the effective performance of our approach in overcoming uncertainties. In particular, our weak oracle model with random behavior is simulated with known ground truth, and the algorithm successfully deals with not-sure answers.

\begin{remark}
	Proofs for theoretical results are deferred to Appendix \ref{sec:append_proof} with additional analyses.
\end{remark}

\subsection{Related Work}\label{subsec:related}
Semi-supervised clustering ideas have been actively studied in 2000s \cite{basu2002semi,basu2004active,basu2004probabilistic,kulis2009semi}. \citet{basu2002semi} used seeding, or given cluster assignments on a subset of the data, as a way of supervision. Later, a similar form was considered by \citet{ashtiani2015representation}. They mapped data to a proper representation space based on the clustering of small random samples and applied $k$-means in the new space.

One of the most popular forms of supervision are pairwise constraints, i.e. must-link/cannot-link type of knowledge. \citet{basu2004active} introduced the application of these pairwise constraints in a clustering objective function and formulated based on Hidden Markov Random Fields. Then, a probabilistic framework with pairwise constraints were introduced \cite{basu2004probabilistic}, which was generalized by \citet{kulis2009semi} as a weighted kernel k-means and a graph clustering problems. Our work uses similar same-cluster queries, but interactively queries the oracle.

Active semi-supervised clustering frameworks were investigated in earlier works. \citet{cohn2003semi} proposed an iterative solution to the clustering problem using active reactions from users, but provided no theoretical guarantees on the result. \citet{basu2004active} also suggested an active semi-supervised clustering algorithm similar to our approach with an additional step of finding good pairs from the dataset, which improves the initial guess of clusters. Our result differs from their work as we consider uncertainties in queries and provide different types of probabilistic guarantees.

\citet{mazumdar2016clustering} also tackle a clustering problem with the support of oracles and side information. Distance between points in our work can be one example of side information. However, the main motivation is different from our work as we focused on not-sure answers, where they consider incorrect answers.

In this paper, we assume that the problem satisfies a center-based clustering framework. \citet{balcan2016clustering} studied algorithms to deal with perturbations on the center-based objectives including $\alpha$-center proximity. On the other hand, our work relies on the $\gamma$-margin property of data and perturbation resilience is provided as a form of high probability guarantee of success.

The most related work to this paper is \citet{ashtiani2016clustering}, which first introduced the SSAC framework. They presented the probability of recovering an oracle's clustering with an additional proof on the hardness of the problem. Instead of analyzing a NP-hardness proved by them, this paper focuses more on the performance of SSAC algorithm with weak oracles to deal with practical uncertainty issues.


\section{Problem Setting}\label{sec:problemsetup}

\subsection{Background}\label{subsec:background}
The SSAC framework was originally developed based on two important assumptions: a center-based clustering and a $\gamma$-margin property \cite{ashtiani2016clustering}. For the purpose of theoretical analysis, the domain of data is assumed to be the Euclidean space, and each center of a clustering $\cC$ is defined as a mean of elements in the corresponding cluster, i.e. $\forall i\in[k], \mu_i = \frac{1}{|C_i|}\sum_{x\in C_i}x$. Then, an optimal solution of the $k$-means clustering satisfies the conditions for center-based clustering\footnote{This holds for all Bregman divergences \cite{banerjee2005clustering}.}. 

\begin{defn}[Center-based clustering]\label{def:cb_clustering}
	Let $\cX \subset \R^m$ with $|\cX|=n$. A clustering $\cC = \{C_1,C_2,\cdots,C_k\}$ is a center-based clustering of $\cX$ with $k$ clusters, if there exists a set of centers $\mu = \{\mu_1,\cdots,\mu_k\}\subset \R^m$ satisfying the following condition: $\forall x\in\cX$ and $i \in [k], x\in C_i \Leftrightarrow i = \arg\min_{j} d(x,\mu_j),$ where $d(x,y)$ is a distance measure.
\end{defn}

Also, a $\gamma$-margin property ensures the existence of an optimal clustering. Figure \ref{fig:gamma} visually depicts the $\gamma$-margin property to help understanding the characteristic of it.

\begin{defn}[$\gamma$-margin property - Clusterability]\label{def:gamma}
	Let $\cC$ be a center-based clustering of $\cX$ with clusters $\cC =\{C_1,\cdots,C_k\}$ and corresponding centers $\{\mu_1, \cdots,\mu_k\}$. $\cC$ satisfies the $\gamma$-margin property if the following condition is true: $\forall i\in[k], \forall x\in C_i, \forall y\in \cX\setminus C_i,$
	\begin{gather*}
	\gamma d(x,\mu_i)<d(y,\mu_i)
	\end{gather*}
\end{defn}

\begin{figure}[ht]
	\begin{center}
		\centerline{\includegraphics[width=.45\linewidth]{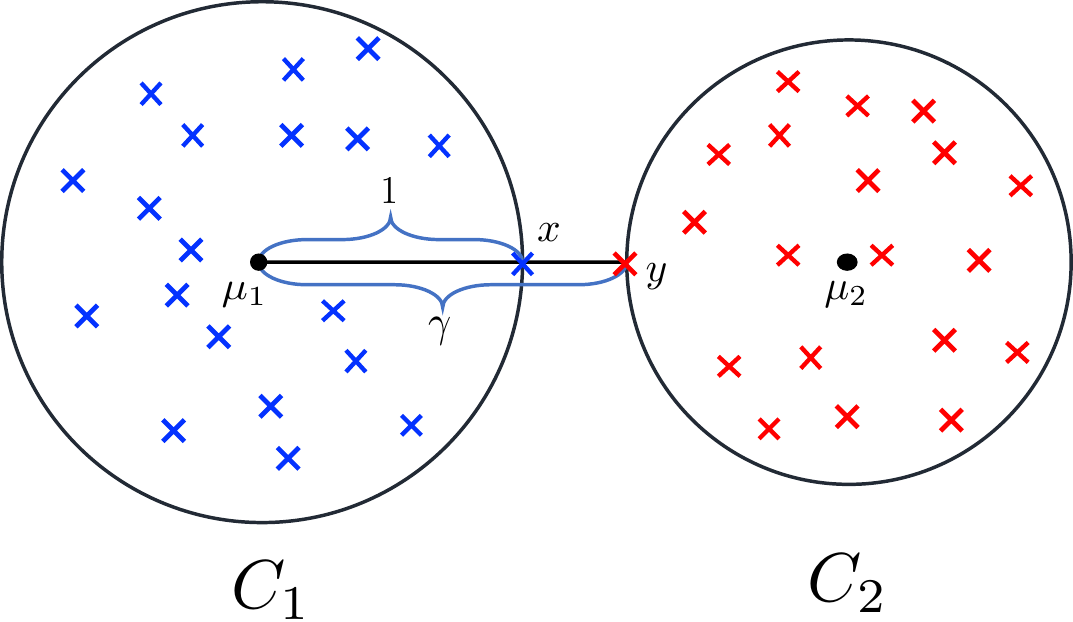}}
		\caption{Visual representation of the $\gamma$-margin property.}
		\label{fig:gamma}
	\end{center}
\end{figure}

\subsection{Problem Formulation}\label{subsec:problem}
A semi-supervised clustering algorithm is applied on data $\cX$ satisfying the $\gamma$-margin property with the oracle's clustering $\cC$, which is supported by a weak oracle that receives weak same-cluster queries.
\begin{defn}[Weak Same-cluster Query]\label{def:weak_query}
	A weak same-cluster query asks whether two data points $x_1,x_2 \in \cX$ belong to the same cluster and receives one of three responses from an oracle with a clustering $\cC$.
	\begin{align*}
	Q(x_1,x_2) = \begin{cases}
	1&\text{if } x_1, x_2\text{ are in the same cluster}\\
	0&\text{if not-sure}\\
	-1&\text{if } x_1, x_2\text{ are in different clusters}
	\end{cases}
	\end{align*}
\end{defn}

In our framework, the cluster-assignment process uses $k$ weak same-cluster queries and therefore only depends on pairwise information provided by weak oracles. In short, $k$ points with known cluster assignments from different clusters are used to determine the assignment of a given point. If an oracle outputs yes or no answer for at least $k-1$ pairwise weak queries, we can perfectly discover the cluster assignment of the point. Also, one yes answer among the $k$ weak same-cluster queries directly gives the cluster it belongs to. See Appendix \ref{subsec:append_proof_ran} for the detailed pairwise cluster-assignment process. The term ``cluster-assignment query'' is also used instead of ``weak pairwise cluster-assignment query'' throughout the paper.

\begin{defn}[Weak Pairwise Cluster-assignment Query]\label{def:weak_assign_query}
	A weak pairwise cluster-assignment query identifies the cluster index of a given data point $x$ by asking $k$ weak same-cluster queries $Q(x,y_i)$, where $y_i \in C_{\pi(i)},~i\in[k]$. One of $k+1$ responses is inferred from an oracle with $\cC=\{C_1,\cdots,C_k\}$. $\pi(\cdot)$ is a permutation defined on $[k]$ which is determined during the assignment process accordingly.
	\begin{align*}
	Q(x) = \begin{cases}
	t&\text{if } x\in C_{\pi(t)}, t\in[k]\\
	0&\text{if not-sure}\\
	\end{cases}
	\end{align*}
\end{defn}


\section{SSAC with Random-Weak Oracles}\label{sec:weak_oracle_ran}
\subsection{Random-Weak Oracle}\label{subsec:ran_weak_oracle}
One way of modeling the performance of weak oracles is to define the maximum probability of answering not-sure. We call it as a random-weak oracle, which is a natural assumption and mathematical abstraction for theoretical studies such as a binary erasure channel in information theory. This fundamental assumption is meaningful as domain experts can make mistakes or encounter hard samples with a certain frequency. In addition, some realistic scenarios can be covered by this model where there exists a chance of losing signals or not receiving answers. For example, if a restriction in time for answering a query is considered to increase the speed of an algorithm, even a perfect domain expert can miss answering a same-cluster query because of the time limit. This situation can be well depicted by the random-weak oracle model by replacing the role of a not-sure option with an event of missing an answer.

\begin{defn}[Random-Weak Oracle]\label{def:weak_oracle_ran}
	An oracle is said to be $q$ random-weak with a parameter $q\in(0,1]$, if $Q(x,y)=0$ with probability at most $1-q~$ for given two points $x,y \in \cX$.
\end{defn}

\begin{algorithm} [ht]
	\caption{SSAC for Weak Oracles}
	\label{alg:weak_SSAC_ran}
	\begin{algorithmic} [1] 
		\REQUIRE Dataset $\cX$, an oracle for weak query $Q$, target number of clusters $k$, sampling numbers $(\eta,\beta)$, and a parameter $\delta\in(0,1)$.
		\STATE $\cC=\{\},~~\cS_1 = \cX,~~r=\lceil k\eta\rceil$
		\FOR{$i=1$ to $k$}
		\STATE \vspace{.2em}\textbf{- Phase 1:}
		\STATE $Z\sim \text{Uniform}(\cS_i,r)$ \hspace{2em}// Draw $r$ samples from $S_i$
		\FOR{$1\leq t \leq k$}
		\STATE $Z_t=\{x\in Z:Q(x)=t\}$ \hspace{2em}// Pairwise cluster-assignment query
		\ENDFOR
		\STATE $p=\arg\max_t |Z_t|$, $\mu_p' \triangleq \frac{1}{|Z_p|}\sum_{x\in Z_p} x$
		\STATE \vspace{.2em}\textbf{- Phase 2:}
		\STATE $\hat{\cS_i} = \text{sorted}(\cS_i)$ \hspace{2em}// Increasing order of $d(x,\mu_p'),~x\in\cS_i$
		\STATE Select BinarySearch algorithm based on the type of a weak oracle\\
		$r_i' =$ BinarySearch($\hat{\cS_i},Z_p,\mu_p',\beta$)\hspace{2em}// Same-cluster query
		\STATE $C_p'=\{x\in \cS_i: d(x,\mu_p') < r_i' \},~~\cS_{i+1} = \cS_i \setminus C_p',~~\cC=\cC \cup \{C_p'\}$
		\ENDFOR
		\ENSURE A clustering $\cC$ of the set $\cX$
	\end{algorithmic}
\end{algorithm}

Two parts of the SSAC algorithm should be reconsidered to analyze the influence of not-sure answers from the oracle. First, number of sampled elements for cluster-assignment queries must be sufficient to accurately approximate the cluster center. Intuitively, more samples or queries are required if our semi-supervision has a chance of failure. The second step of the algorithm estimates a radius from the sample mean to recover the oracle's cluster based on distances. A binary search technique plays an important role to minimize the query complexity in logarithmic scale. However, weak oracles can lead to a situation of having failure in the intermediate \textit{search} step. Therefore, we provide a simple extension of a binary search with repetitive weak same-cluster queries, Algorithm \ref{alg:rBinary}, to mitigate the effect of uncertainties in queries. Our first main result shows the perfect recovery of the oracle's clustering on the random-weak model.
\begin{algorithm} [tb]
	\caption{Random-Weak BinarySearch}
	\label{alg:rBinary}
	\begin{algorithmic} [1] 
		\REQUIRE Sorted dataset $\hat{\cS_i}=\{x_1,\cdots,x_{|\hat{\cS_i}|}\}$ in increasing order of $d(x_j,\mu_p')$, an oracle for weak query $Q$, target cluster $p$, set of assignment-known points $Z_p$, empirical mean $\mu_p'$, and a sampling number $\beta$.
		\STATE Standard binary search algorithm with the following rules
		\STATE \vspace{.2em}\textbf{- Search}($x_j\in\hat{\cS_i}$)\textbf{:}
		\STATE Sample $\beta$ points from $Z_p$. $B\subseteq Z_p,~|B|=\beta$
		\STATE Weak same-cluster query $Q(y,x_j)$, for all $y\in B$
		\iIF{$x_j$ is in cluster $C_p$} Set left bound index as $j+1$
		\iELSIF{$x_j$ is not in Cluster $C_p$} Set right bound index as $j-1$
		\iELSIF{not-sure based on $\beta$ queries} Return fail\hspace{2em}// See Appendix \ref{sec:append_unified_binary} to handle failure
		\iENDIF
		\STATE \vspace{.2em}\textbf{- Stop:}\hspace{.5em}Found the smallest index $j^*$ such that $x_{j^*}$ is not in $C_p$
		\ENSURE $r_i'=d(x_{j^*},\mu_p')$
	\end{algorithmic}
\end{algorithm}

\begin{theorem}\label{thm:optimal_cover_ran}
	For given data $\cX$ and a distance metric $d(\cdot,\cdot)$, let $\cC$ be a center-based clustering with the $\gamma$-margin property. Let $\delta \in (0,1)$ and $\gamma>1$. If parameters $(\eta,\beta)$ for the sampling satisfy $\eta\geq \etaran$ and $\beta \geq \betaran$, then combination of Algorithm \ref{alg:weak_SSAC_ran} and \ref{alg:rBinary} outputs the oracle's clustering $\cC$ with probability at least $1-\delta$.
\end{theorem}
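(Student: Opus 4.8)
The plan is to control the failure probability of the whole procedure by a union bound over its $k$ iterations and its two phases, allocating $\delta/2$ to all Phase~1 (center-estimation) events and $\delta/2$ to all Phase~2 (radius-estimation) events; this split is exactly what produces the factor $2k$ inside the logarithms of both $\eta$ and $\beta$. A preliminary step is to lower-bound the success probability of one cluster-assignment query $Q(x)$. Since $x$ lies in exactly one cluster, its index is recovered whenever at least $k-1$ of the $k$ constituent same-cluster queries return a definite ($\pm1$) answer: either a single ``yes'' pins the cluster down, or $k-1$ ``no''s leave only one possibility. As each same-cluster query is non-``not-sure'' with probability at least $q$ (independently, per the model), a cluster-assignment query succeeds with probability at least $q_{assign}=q^{k-1}$, which is precisely the quantity that specializes $\eta$ in this model.

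For Phase~1 in a fixed iteration I would first convert exact recovery into a deterministic tolerance on the center estimate. Writing $r_p=\max_{x\in C_p}d(x,\mu_p)$ for the radius of the majority cluster $p$ and combining the $\gamma$-margin property with the triangle inequality, one checks that if $d(\mu_p',\mu_p)<\tfrac{\gamma-1}{2}r_p$ then every point of $C_p$ is strictly closer to $\mu_p'$ than every point of $\cS_i\setminus C_p$; hence the distance-sorted order driving Phase~2 separates $C_p$ cleanly and a correct radius exists. It then remains to show that $r=\lceil k\eta\rceil$ draws place $\mu_p'$ inside this ball with the required probability. Here I would invoke a matrix/vector concentration bound in the spirit of Tropp: each correctly-assigned sample is a bounded, mean-$\mu_p$ vector, the deviation of $\mu_p'$ along a fixed direction obeys a Hoeffding-type tail with per-sample base $e^{-(\gamma-1)^2/8}$ (taking deviation $\tfrac{\gamma-1}{2}r_p$ gives the exponent $(\gamma-1)^2/8$), and the dimensional factor of the matrix inequality supplies the $(m+1)$ term. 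The factor $k$ in $r=\lceil k\eta\rceil$ guarantees the target cluster is effectively sampled $\eta$ times, and folding the assignment-failure and concentration-failure events into a single per-draw ``bad'' probability gives $\rho=1-q_{assign}\bigl(1-e^{-(\gamma-1)^2/8}\bigr)$. Thus the Phase~1 failure in one iteration is at most $(m+1)\rho^{\eta}$, and the stated bound on $\eta$ forces $(m+1)\rho^{\eta}\le \delta/(2k)$.

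For Phase~2, the binary search of Algorithm~\ref{alg:rBinary} runs for $O(\log n)$ steps, and a single step fails only when all $\beta$ sampled same-cluster queries return ``not-sure'', which happens with probability at most $(1-q)^{\beta}$; whenever a step does not fail, the margin guarantees its comparison is correct, so the search returns the exact boundary radius $r_i'$ and $C_p'=C_p$. A union bound over the at most $\log n$ steps yields per-iteration failure at most $(1-q)^{\beta}\log n$, and the stated bound on $\beta$ drives this below $\delta/(2k)$. Summing the two per-iteration contributions over all $k$ iterations gives total failure at most $\delta/2+\delta/2=\delta$, so the algorithm outputs $\cC$ exactly with probability at least $1-\delta$.

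I expect Phase~1 to be the main obstacle: one must carefully couple the randomness of the assignment queries — which governs both how many and which samples populate $Z_p$ — with the concentration of their empirical mean, so that the clean product form $\rho=1-q_{assign}(1-e^{-(\gamma-1)^2/8})$ genuinely emerges and the dimension enters only as $\log(m+1)$ rather than linearly in $m$. Additional care is needed because $p=\arg\max_t|Z_t|$ is data-dependent and the draws come from the shrinking residual set $\cS_i$ (which still contains whole clusters), so one must argue that the selected majority cluster is well-sampled and correctly recovered at every iteration, thereby justifying the effective exponent $\eta$ in the bound $(m+1)\rho^{\eta}$.
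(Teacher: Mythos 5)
Your proposal is correct and takes essentially the same approach as the paper's proof: lower-bounding the cluster-assignment success probability by $q^{k-1}$, reducing exact recovery to the tolerance $d(\mu_p',\mu_p)<\frac{\gamma-1}{2}\,r(C_p)$, applying a vector Hoeffding inequality (obtained from Tropp's matrix bounds by dilation) mixed over the binomial count of successful assignments to get the per-iteration Phase~1 failure bound $(m+1)\bigl(1-q^{k-1}+q^{k-1}e^{-(\gamma-1)^2/8}\bigr)^{\eta}\leq\delta/(2k)$, and bounding the Phase~2 failure by $(1-q)^{\beta}\log n\leq\delta/(2k)$, with union bounds over the $k$ iterations. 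The coupling issue you flag at the end is resolved in the paper exactly as you anticipate: by conditioning on the number of successful assignment queries and summing the resulting binomial series, together with a pigeonhole argument ensuring the majority cluster receives at least $\eta$ of the $\lceil k\eta\rceil$ draws.
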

To prove Theorem \ref{thm:optimal_cover_ran}, we first show that a good approximate cluster center can be obtained with high probability, which leads to a simple recovery of points within the radius. Then, the probability of success in binary search steps is evaluated. Refer Appendix \ref{sec:append_proof} for detailed proofs, a query complexity, and runtime.
\begin{remark}
	The sampling number $\eta$ in theorem \ref{thm:optimal_cover_ran} generalizes the result of \citet{ashtiani2016clustering}. If queries are not weak, i.e. $q=1$, we can achieve the sampling complexity $\etaorg$, and their bound can be recovered by using a dimension independent concentration inequality. Section \ref{subsec:compare} explains the advantage of our approach.
\end{remark}
\begin{remark}
	Number of samples $(\eta,\beta)$ in Theorem \ref{thm:optimal_cover_ran} depend on $q$ and $\gamma$, both of which are unknown in real settings. Although there is no explicit way to calculate the margin $\gamma$, $q$ can be approximated with the ratio of answer from an oracle. Proper parameters for the sampling, $(\eta,\beta)$, can also be obtained through trial and error.
\end{remark}
\begin{remark}
	Since our algorithm utilizes only pairwise feedback from oracles, it subsumes a vast range of general and practical assumptions on oracles. A key motivation of our weak oracle models is uncertainty caused by obscure characteristics in a pair of samples. Therefore, even if some answers for same-cluster queries in a cluster-assignment step are not-sure, remaining answers are not necessarily determined to be not-sure. Also, some answers can provide hints to discover the cluster-assignment of a given point in practice. However, our theoretical analysis on the random-weak oracle model provides a lower bound of possible realistic situations, and more practical models for the motivation are covered in Section \ref{sec:weak_oracle_dist}.
\end{remark}

\subsection{Comparison to Dimension Independent Result}\label{subsec:compare}
A sampling number provided by \citet{ashtiani2016clustering} is $O\left(\frac{\log k+\log(1/\delta)}{(\gamma-1)^4}\right)$, which is required to guarantee a good approximation of a cluster center with high probability. This result is founded on the dimension independent concentration inequality \cite{ashtiani2015dimension}. However, $1/(\gamma-1)^4$ can be extremely large if the margin between clusters $\gamma$ is tight, i.e. $\gamma=1+\varepsilon_{\cX}$ for some small $\varepsilon_{\cX}\in(0,1)$. Our result decreased the influence of $\gamma$ by using \textit{Vector Hoeffding's Inequality} (See Theorem \ref{thm:hoeff_vector} in Appendix \ref{sec:append_concentration}) to obtain $O\left(\frac{\log k+\log m +\log(1/\delta)}{(\gamma-1)^2}\right)$ sample complexity when the oracle is not weak. In particular, if the dimension of data is $m = O\left(\dimbound\right)$, our approach ensures smaller query complexity.


\section{SSAC with Distance-Weak Oracles}\label{sec:weak_oracle_dist}
In the previous section, an oracle is assumed to have an arbitrary behavior for answering weak same-cluster queries. One advantage of such an assumption is a wide coverage over different realistic situations. However, it is more reasonable to evaluate the performance of domain experts reflecting the range of knowledge or inherent ambiguities of the given pairs of samples. The cause of not-sure answer for the same-cluster query can be investigated based on a distance between the elements in a feature space. Two cases for having indefinite answers are considered in this work: (i) points from different clusters are too close, and (ii) points within the same cluster are too far. The first situation happens a lot in the real world. For instance, distinguishing wolves from dogs is not an easy task if a data sample like a Siberian Husky is provided as visual features. The second case is also rational, because it might be difficult to compare characteristics of two points within the same cluster if they have quite dissimilar features.

\begin{algorithm} [ht]
	\caption{Distance-Weak BinarySearch}
	\label{alg:dBinary}
	\begin{algorithmic} [1] 
		\REQUIRE Sorted dataset $\hat{\cS_i}=\{x_1,\cdots,x_{|\hat{\cS_i}|}\}$ in increasing order of $d(x_j,\mu_p')$, a distance-weak oracle for weak query $Q$, target cluster $p$, set of assignment-known points $Z_p$, and empirical mean $\mu_p'$.
		\STATE Select a point $x_1$ and use it for same-cluster queries
		\STATE \vspace{.2em}\textbf{- Search}($x_j\in\hat{\cS_i}$)\textbf{:}
		\iIF{$Q(x_1,x_j)=1$} Set left bound index as $j+1$
		\iELSE Set right bound index as $j-1$ \hspace{2em}// $Q(x_1,x_j)= -1$ or not-sure
		\iENDIF
		\STATE \vspace{.2em}\textbf{- Stop:}\hspace{.5em}Found the smallest index $j^*$ such that $x_{j^*}$ is not in $C_p$
		\ENSURE $r_i'=d(x_{j^*},\mu_p')$
	\end{algorithmic}
\end{algorithm}

\subsection{Local Distance-Weak Oracle}\label{subsec:dist_weak_local}
We define the first weak-oracle model sensitive to distance, a local distance-weak oracle, in a formal way to include two vague situations described before. Condition (a) and (b) in Definition \ref{def:weak_oracle_dist_local} are mathematical expression of two depicted cases (i) and (ii) respectively. These confusing cases for local distance-weak oracle are visually depicted in Figure \ref{fig:local_dist_weak} for better explanation.

\begin{figure}[ht]
	\begin{center}
		\centerline{\includegraphics[width=.75\linewidth]{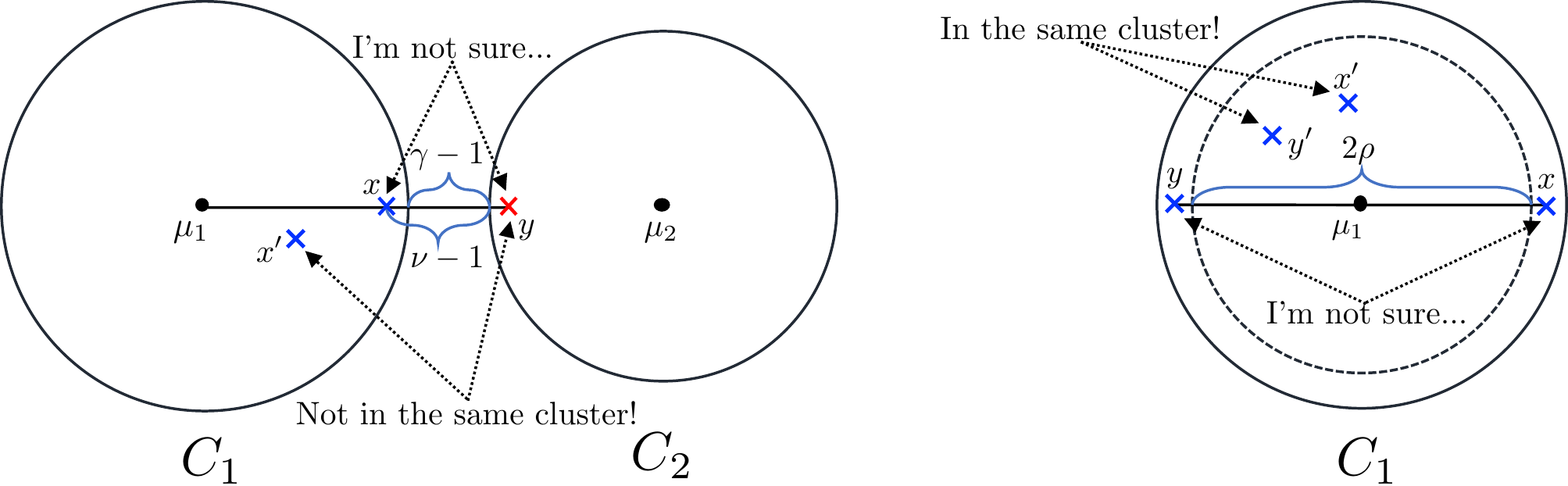}}
		\caption{Visual representation of two \textit{not-sure} cases for the local distance-weak oracle. (\textit{Left}) Two points from the different clusters are too close. (\textit{Right}) Two points from the same clusters are too far.}
		\label{fig:local_dist_weak}
	\end{center}
	\vskip -0.1in
\end{figure} 

\begin{defn}[Local Distance-Weak Oracle]\label{def:weak_oracle_dist_local}
	An oracle having a clustering $\cC=\{C_1,\cdots,C_k\}$ for data $\cX$ is said to be $(\nu,\rho)$ local distance-weak with parameters $\nu\geq 1$ and $\rho\in(0,1]$, if $Q(x,y)=0$ for any given two points $x,y \in \cX$ satisfying one of the following conditions:
	\vspace{-.5em}
	\begin{itemize}
		\setlength\itemsep{0em}
		\item[(a)] $d(x,y)<(\nu-1)\min\{d(x,\mu_i),d(y,\mu_j)\}$, where
		$x\in C_i, y\in C_j, i\neq j$
		\item[(b)] $d(x,y)>2\rho r(C_i)$, where $x,y\in C_i$
	\end{itemize}
\end{defn}

One way to overcome the local distance-weakness is to provide at least one good point in a query. If one of the points $x$ and $y$ for the query $Q(x,y)$ is close enough to the center of a cluster, a local distance-weak oracle does not get confused in answering. This situation is realistic because one representative data sample of a cluster might be a good baseline when comparing to other elements. The next theorem is founded on this intuition, and we show that a modified version of SSAC will succeed if at least one representative sample per cluster is suitable for the weak oracle. In the proof, we first show the effect of a point close to the center on weak queries. Then the possibility of having a close empirical mean is provided by defining good sets and calculating data-driven probability of failure from it. Last, an assignment-known point is identified to remove the uncertainty of same-cluster queries used in the binary search step.

\begin{theorem}\label{thm:optimal_cover_local}
	For given data $\cX$ and a distance metric $d(\cdot,\cdot)$, let $\cC$ be a center-based clustering with the $\gamma$-margin property. Let $\delta \in (0,1)$, $\gamma>1$, $\rho\in(0,1]$, $\epsilon\leq\frac{\gamma-1}{2}$, and $\gamma\leq \nu \leq \gamma+1$. If a cluster $C_i$ contains at least one point $x^*\in C_i$ satisfying $d(x^*,\mu_i)<\left(\min\{2\rho-1,\gamma-\nu+1\}-2\epsilon\right) r(C_i)$ for all $i\in[k]$, then combination of Algorithm \ref{alg:weak_SSAC_ran} and \ref{alg:dBinary} outputs the oracle's clustering $\cC$ with probability at least $1-\delta$ by asking weak same-cluster queries to a $(\nu,\rho)$ local distance-weak oracle.
\end{theorem}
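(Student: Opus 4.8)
The plan is to route all of the randomness through Phase 1 (center estimation) and to make Phase 2 (the distance-weak binary search of Algorithm~\ref{alg:dBinary}) deterministically correct once a good empirical center is in hand, exploiting the close point $x^*$. The conceptual core is a \emph{reliability lemma}: I would first show that any $z\in C_i$ with $d(z,\mu_i)<\min\{2\rho-1,\gamma-\nu+1\}\,r(C_i)$ answers every same-cluster query correctly, i.e.\ $Q(z,w)=1$ for all $w\in C_i$ and $Q(z,w)=-1$ for all $w\in\cX\setminus C_i$. For $w\in C_i$ the triangle inequality gives $d(z,w)\le d(z,\mu_i)+d(w,\mu_i)<(2\rho-1)r(C_i)+r(C_i)=2\rho r(C_i)$, so condition (b) cannot fire. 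For $w\notin C_i$ the $\gamma$-margin yields $d(w,\mu_i)>\gamma r(C_i)$, whence $d(z,w)\ge d(w,\mu_i)-d(z,\mu_i)>(\nu-1)r(C_i)$, while $(\nu-1)\min\{d(z,\mu_i),d(w,\mu_j)\}\le(\nu-1)d(z,\mu_i)<(\nu-1)r(C_i)$ because $\gamma-\nu+1\le1$; hence condition (a) cannot fire either and the oracle must return $-1$. This is exactly where both branches of the hypothesis on $x^*$ are consumed: $2\rho-1$ controls the within-cluster case and $\gamma-\nu+1$ the cross-cluster case.

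For Phase 1 I would use the good-set framework together with the Vector Hoeffding inequality (Theorem~\ref{thm:hoeff_vector}) to lower-bound the probability that the sample drawn from the current cluster produces an empirical mean $\mu_p'$ with $d(\mu_p',\mu_p)<\epsilon r(C_p)$, and then union-bound over the $k$ outer iterations and the $m+1$ coordinates to obtain overall success probability at least $1-\delta$ for the appropriate sampling number $\eta$ (the parameter $\beta$ is immaterial here, since Algorithm~\ref{alg:dBinary} queries a single reference point). The subtle accounting is that a local distance-weak oracle may abstain on cluster-assignment queries and thereby drop far-from-center points from $Z_p$; I would argue that the good-set definition and the data-driven failure probability are set up precisely to absorb this so that the retained, correctly-assigned sample still concentrates around $\mu_p$.

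Conditioned on a good $\mu_p'$, the rest is deterministic. First, the $\gamma$-margin together with $\epsilon\le\frac{\gamma-1}{2}$ gives the separation $\max_{x\in C_p}d(x,\mu_p')\le(1+\epsilon)r(C_p)<(\gamma-\epsilon)r(C_p)<\min_{y\notin C_p}d(y,\mu_p')$, so in the sorted list of Phase 2 the elements of $C_p$ form a prefix and membership is monotone — the precondition for binary search. Second, transferring the hypothesis on $x^*$ through $\mu_p'$ gives $d(x^*,\mu_p')<(\min\{2\rho-1,\gamma-\nu+1\}-\epsilon)r(C_p)$, so the reference point $x_1$ (the closest element to $\mu_p'$, which lies in $C_p$ by the separation) satisfies the reliability-lemma hypothesis after one more transfer to $\mu_p$; the remaining slack is what the factor $2\epsilon$ in the statement buys. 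Hence every query $Q(x_1,x_j)$ returns $1$ exactly when $x_j\in C_p$, the binary search locates the true boundary index $j^*$, and $C_p'$ recovers $C_p\cap\cS_i$ exactly. Since deleting $C_p$ preserves both the $\gamma$-margin and the close-point assumption for the remaining clusters, an induction over the $k$ iterations completes the argument.

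I expect the main obstacle to be the Phase 1 concentration under abstentions. Unlike the random-weak model, where an abstention is independent of a point's position, a distance-weak oracle preferentially discards points far from the representative, which can bias $\mu_p'$ toward the center; certifying that the retained sample nonetheless yields a mean within $\epsilon r(C_p)$ with the claimed probability is the step that needs the most care in setting up the good sets. By contrast, the reliability lemma is only a fiddly but routine chain of triangle-inequality and margin estimates, and the separation argument is standard; the real delicacy is making the good-set probability robust to this position-dependent censoring of the sample.
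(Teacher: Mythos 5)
Your proposal reproduces the paper's own proof almost step for step: your ``reliability lemma'' is the paper's opening argument (and your derivation, via $d(z,w)>(\nu-1)r(C_i)\geq(\nu-1)\min\{d(z,\mu_i),d(w,\mu_j)\}$, is in fact cleaner than the chain printed there); the good sets $G_i(c)$ with the data-driven probability $q_d=\min_i |G_i(c)|/|C_i|$ and the substitution $q=1-(1-q_d)^2$ into the random-weak analysis is exactly the paper's Phase 1; the transfer of the hypothesis on $x^*$ to the closest point to $\mu_p'$, consuming the $2\epsilon$ slack via $d(x_1,\mu_p)\leq d(x^*,\mu_p)+2d(\mu_p',\mu_p)$, is exactly the paper's Phase 2; and the union bound over the $k$ iterations closes both arguments the same way. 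As a comparison of approaches there is nothing to separate them.

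The problem is the step you flag and then defer. You are right that it is the crux, but wrong to hope that the good-set machinery ``absorbs'' it --- and the paper is no help to you here, because it silently commits the same error: it plugs $q=1-(1-q_d)^2$ into Lemma \ref{lemma:distance_ran}, whose proof requires the abstention indicators $\xi_i$ to be \emph{independent} of the sampled positions $X_i$, whereas a distance-weak oracle abstains deterministically as a function of position. The bias this creates does not shrink with $\eta$. Concretely, take a one-dimensional cluster with $N$ points at $-1$, $N$ points at $+1$, and one point at $0$ (so $\mu_i=0$, $r(C_i)=1$, and the hypothesis on $x^*$ holds), and $\rho<1$ so that antipodal same-cluster pairs satisfy $d(x,y)=2>2\rho r(C_i)$ and force abstention. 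If the reference point used for cluster-assignment queries lies at $-1$ --- a constant-probability event over the sampling, not one decaying in $\eta$ --- then once no other clusters remain to permit assignment by elimination (e.g.\ in the last iteration of Algorithm \ref{alg:weak_SSAC_ran}), every $+1$ point draws an abstention on its only same-cluster query and fails assignment, while every $-1$ point is retained. The retained mean then sits near $-1$, at distance $\approx r(C_p)\gg\epsilon r(C_p)$ from $\mu_p$ regardless of sample size; the closest point to $\mu_p'$ is a $-1$ point lying outside the good set, and Algorithm \ref{alg:dBinary}, which treats abstention as ``different cluster,'' truncates the cluster at the $+1$ points. So the position-dependent censoring you identified is a genuine gap, not a routine accounting issue: closing it requires either a new concentration argument or a strengthened hypothesis (for instance, one guaranteeing that the assignment references themselves lie in the good sets), and neither your proposal nor the paper's proof supplies one.
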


\subsection{Global Distance-Weak Oracle}\label{subsec:dist_weak_global}
A global distance-weak oracle fails to answer depending on the distance of each point to its respective cluster center. In this case, both elements $x$ and $y$ should be in the covered range of an oracle if they don't belong to the same cluster. This represents an oracle that is weaker when one of points is out of its knowledge. We assume to preserve the characteristic of a distance-weakness within a same cluster, i.e. the second condition of the local distance-weak oracle.

\begin{figure}[ht]
	\begin{center}
		\centerline{\includegraphics[width=.6\linewidth]{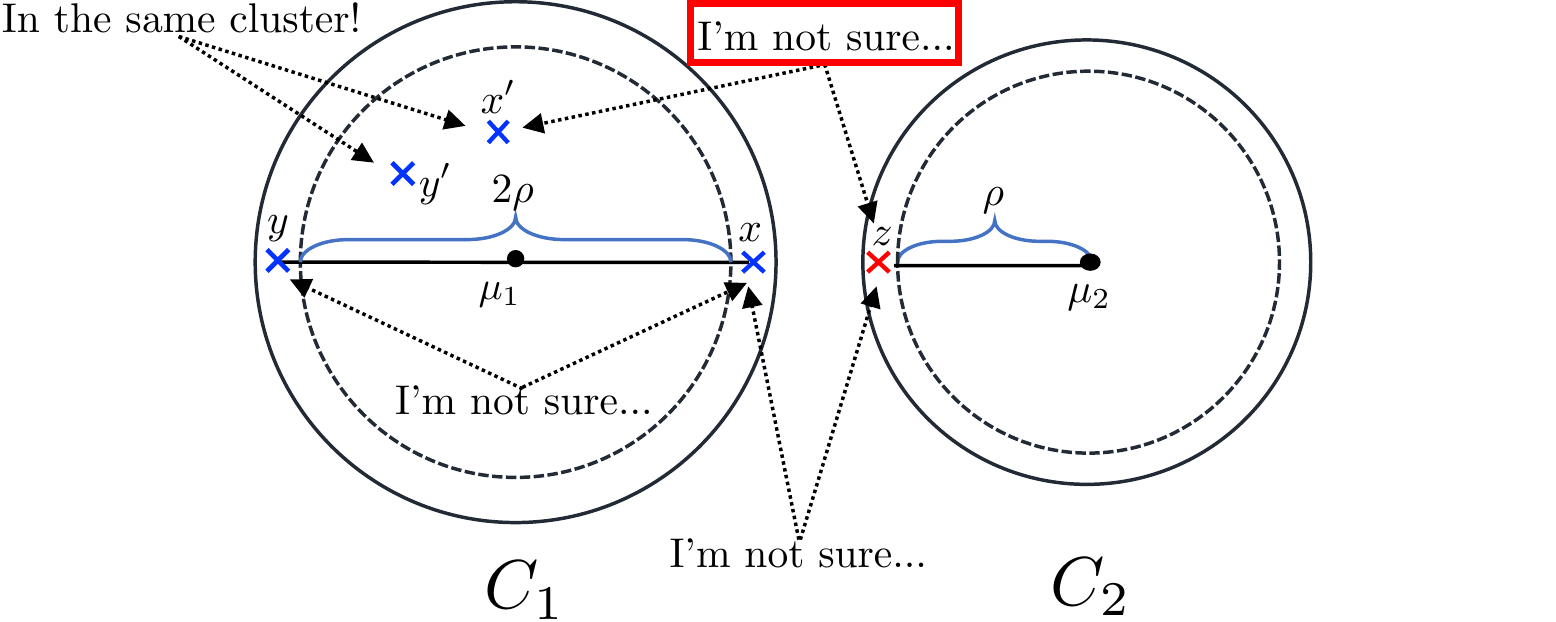}}
		\caption{Visual representation of two \textit{not-sure} cases for the global distance-weak oracle. The red box indicates the difference with the local distance-weak oracle.}
		\label{fig:global_dist_weak}
	\end{center}
	\vskip -0.1in
\end{figure} 
\begin{defn}[Global Distance-Weak Oracle]\label{def:weak_oracle_dist_global}
	An oracle having a clustering $\cC=\{C_1,\cdots,C_k\}$ for data $\cX$ is said to be $\rho$ global distance-weak with parameter $\rho\in(0,1]$, if $Q(x,y)=0$ for any given two points $x,y \in \cX$ satisfying one of the following conditions:
	\vspace{-.5em}
	\begin{itemize}
		\setlength\itemsep{0em}
		\item[(a)] $d(x,\mu_i)>\rho r(C_i)$ or $d(y,\mu_j)>\rho r(C_j),$ where
		$x\in C_i, y\in C_j, i\neq j$
		\item[(b)] $d(x,y)>2\rho r(C_i)$, where $x,y\in C_i$
	\end{itemize}
\end{defn}

The problem of a global distance-weak oracle compared to the local distance-weak model is the increased ambiguity in distinguishing elements from different clusters. Nevertheless, once we get a good estimate of the center, one good point can be still found to support same-cluster queries in the binary search step. Therefore, Algorithm \ref{alg:weak_SSAC_ran} and \ref{alg:dBinary} can guarantee the recovery of oracle's cluster with high probability by utilizing a global distance-weak oracle.

\begin{theorem}\label{thm:optimal_cover_global}
	For given data $\cX$ and a distance metric $d(\cdot,\cdot)$, let $\cC$ be a center-based clustering with the $\gamma$-margin property. Let $\delta \in (0,1)$, $\gamma>1$, $\rho\in(0,1]$, and $\epsilon\leq\frac{\gamma-1}{2}$. If a cluster $C_i$ contains at least one point $x^*\in C_i$ satisfying $d(x^*,\mu_i)<\left(2\rho-1-2\epsilon\right) r(C_i)$ for all $i\in[k]$, then combination of Algorithm \ref{alg:weak_SSAC_ran} and \ref{alg:dBinary} outputs the oracle's clustering $\cC$ with probability at least $1-\delta$, by asking weak same-cluster queries to a $\rho$ global distance-weak oracle.
\end{theorem}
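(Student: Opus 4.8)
The plan is to follow the same two-stage template used for Theorems~\ref{thm:optimal_cover_ran} and \ref{thm:optimal_cover_local}: first argue that Phase~1 of Algorithm~\ref{alg:weak_SSAC_ran} produces an empirical center $\mu_p'$ with $d(\mu_p',\mu_p)\le \epsilon\, r(C_p)$ with high probability, and then argue that, conditioned on such a good center, Phase~2 driven by Algorithm~\ref{alg:dBinary} recovers the target cluster exactly. The theorem then follows from a union bound over the $k$ outer iterations, each allotted failure budget $\delta/k$. The decisive structural feature to exploit is that the global oracle's abstention rule~(a) depends only on each point's distance to its \emph{own} center, so, unlike the local case, no inter-point, $\nu$-dependent quantity enters the analysis; this is exactly why the admissible proximity of $x^*$ relaxes from $\min\{2\rho-1,\gamma-\nu+1\}-2\epsilon$ to $2\rho-1-2\epsilon$.

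For Phase~1 I would first observe that the hypothesized close point $x^*$ is well covered: since $\rho\le 1$ we have $2\rho-1-2\epsilon<\rho$, so $d(x^*,\mu_i)<\rho\, r(C_i)$, and $x^*$ can serve as a \emph{central} representative for its cluster. The key point is that using such a central $y_t$ in the assignment queries removes any position-dependent bias: for every $x\in C_i$ the triangle inequality gives $d(x,y_t)\le r(C_i)+(2\rho-1-2\epsilon)r(C_i)=(2\rho-2\epsilon)r(C_i)<2\rho\, r(C_i)$, so rule~(b) never fires and $Q(x,y_t)=1$, i.e. every cluster member receives a correct and complete assignment. Consequently the points of $Z_p$ form an unbiased uniform subsample of $C_p$ rather than a center-truncated one, and I can invoke the Vector Hoeffding inequality (Theorem~\ref{thm:hoeff_vector}) together with the $\gamma$-margin property, exactly as in Theorem~\ref{thm:optimal_cover_ran}, to conclude $d(\mu_p',\mu_p)\le \epsilon\, r(C_p)$ once $\eta$ is large enough; the failure probability is controlled by the data-driven fraction of good points through a \emph{good set} argument and folded into the per-iteration budget.

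For Phase~2 I would take $x_1$ to be the element of $\hat{\cS_i}$ nearest to $\mu_p'$. Chaining the close-point bound with $d(\mu_p',\mu_p)\le\epsilon\, r(C_p)$ gives $d(x_1,\mu_p')\le d(x^*,\mu_p')<(2\rho-1-\epsilon)r(C_p)$ and hence $d(x_1,\mu_p)<(2\rho-1)r(C_p)\le r(C_p)<\gamma\, r(C_p)$; by the $\gamma$-margin property the last inequality forces $x_1\in C_p$. Then for any $x_j\in C_p$ we get $d(x_1,x_j)\le d(x_1,\mu_p)+r(C_p)<2\rho\, r(C_p)$, so rule~(b) does not fire and $Q(x_1,x_j)=1$; for any $x_j\notin C_p$ the oracle must return $-1$ or not-sure, both of which Algorithm~\ref{alg:dBinary} treats as a negative response. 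Thus the test $Q(x_1,x_j)=1$ holds if and only if $x_j\in C_p$. Finally, because $\epsilon\le(\gamma-1)/2$, the margin property makes the $d(\cdot,\mu_p')$-ordering separate all of $C_p$ (distances $<(1+\epsilon)r(C_p)$) from the rest (distances $>(\gamma-\epsilon)r(C_p)$), so this test is monotone along $\hat{\cS_i}$ and the binary search returns the threshold $r_i'$ isolating exactly $C_p$.

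I expect the main obstacle to be Phase~1 rather than Phase~2. Under a distance-weak oracle the abstentions are position-correlated, so a naive empirical mean of the successfully assigned samples would be biased toward the center; the crux is therefore to show that a central representative exists and is in fact the one driving the assignment, so that this bias disappears and a clean concentration bound applies. Making the good-set / data-driven failure probability rigorous, in particular ensuring a central point is both available as a representative and sampled with high enough probability, and verifying that the several slacks (the $\epsilon$ center error, the $2\rho-1$ budget, and the $\gamma$-separation) all close simultaneously, is where the real care is needed; Phase~2, by contrast, is essentially deterministic once $\mu_p'$ is shown to be good.
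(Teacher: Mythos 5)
Your Phase 2 argument is correct and is essentially the paper's own: you take the element of $\hat{\cS_i}$ nearest to $\mu_p'$, chain the hypothesis $d(x^*,\mu_i)<(2\rho-1-2\epsilon)r(C_i)$ with $d(\mu_p',\mu_p)\le\epsilon\,r(C_p)$ to place that element within $(2\rho-1)r(C_p)$ of $\mu_p$, conclude that it deterministically receives a ``yes'' against every member of $C_p$ (rule (b) cannot fire), and note that Algorithm \ref{alg:dBinary} treats $-1$ and not-sure identically so the search test is monotone along the sorted order guaranteed by Lemma \ref{lemma:close_center}. No issues there.

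The genuine gap is in your Phase 1. You assume the representative $y_t$ used in the cluster-assignment queries is (or can be chosen to be) the central point $x^*$, and from this you deduce that every sampled point is assigned and that $Z_p$ is an \emph{unbiased} uniform subsample of $C_p$. The algorithm cannot arrange this: in Phase 1 the representatives are just previously sampled, assignment-known points, drawn uniformly at random, and there is no way to recognize a central point before a good estimate $\mu_p'$ exists --- identifying such a point is exactly the output of Phase 1 that your argument presupposes, so the reasoning is circular. Concretely, with a $\rho$ global distance-weak oracle even the first few queries between uniformly sampled points can return not-sure (rule (a) fires whenever \emph{either} point lies farther than $\rho\,r$ from its own center), so ``complete and correct assignment of every cluster member'' simply does not hold. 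The paper's proof takes a different, non-circular route: it defines the good sets $G_i(2\rho-1)$ from the proof of Theorem \ref{thm:optimal_cover_local} (noting $2\rho-1\le\rho$, so good points also evade rule (a)), observes that any query whose two endpoints both lie in good sets is surely answered, lower-bounds the per-pair success probability by ${q_d'}^2$ with $q_d'=\min_i |G_i(2\rho-1)|/|C_i|$, and then reuses the random-weak Phase 1 machinery (Lemma \ref{lemma:distance_ran}) with $q={q_d'}^2$. Your worry about center-truncated bias in $Z_p$ --- that position-correlated abstention breaks the independence of $\xi_i$ and $X_i$ assumed in Lemma \ref{lemma:distance_ran} --- is a legitimate criticism of that substitution, but your proposed fix is not implementable by the algorithm; to close the theorem you must either follow the paper's substitution or re-prove the concentration step under position-dependent abstention, neither of which your proposal does.
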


\begin{remark}
	Our novel approach (to use the closest point from the estimated center) make the binary search steps avoid simple repetitive samplings. In fact, this practical strategy is based on the idea to make use of deterministic behaviors of distance-weak oracles.
\end{remark}

\begin{remark}
	Although different binary search algorithms are developed for each weak oracle model, it is possible to unify Algorithm \ref{alg:rBinary} and \ref{alg:dBinary}. First, process a same-cluster query $Q(x_1,\cdot)$ using $x_1$, the closest point from $\mu_p'$. Then, $\beta-1$ more queries can be provided to the weak oracle with additional samples from $Z_p$ if $Q(x_1,\cdot)$ gives a not-sure answer. In fact, this unified binary search algorithm strengthens the coverage of our approach because it can handle both random and distance-weak oracles at once. (See Appendix \ref{sec:append_unified_binary} for the detailed algorithm.)
\end{remark}


\section{Experimental Results}\label{sec:exp_results}
In practice, simulating active queries with a domain expert and evaluating probabilistic results is not easy as one can ``game'' the system. Therefore, simple cases on synthetic data are simulated where the true cluster assignments are known, and the oracle follows the random-weak model.\footnote{The source code is available online. \url{https://github.com/twankim/weaksemi}}

\subsection{Data Generation}\label{subsec:exp_data_gen}
For simulated dataset, points of each cluster are generated from isotropic Gaussian distribution. We assume that there exists a ground truth oracle's clustering, and the goal is to recover it where labels are partially provided via weak same-cluster queries. Various parameters are considered in generating clusters: number of samples $n$, dimension of data $m$, number of clusters $k$, and standard deviation of each Gaussian distribution $\sigma_{std}$. For visual representation, 2-dimensional data points are considered, and other parameters are set to $n=1500$, $k=3$, and $\sigma_{std}=1.75$. Data points satisfy $\gamma$-margin property with condition $\gamma_{\min}\leq \gamma \leq \gamma_{\max}$.

\subsection{Evaluation}\label{subsec:exp_eval}
Each round of the evaluation is composed of experiments with different parameter settings on $(\eta,q)$; $q$ is the probability of successful response. The unified binary search algorithm is used which handles uncertainty by regarding `not-sure' as `in different clusters'; hence $\beta$ is fixed as $10$. Parameters are varied as $q\in\{0.7,0.85,1\}$ and $\eta\in\{2,5,10,20,50\}$ in each round, and $N_{rep}=5000$ rounds are repeated. 

Two evaluation metrics are considered: $Accuracy$ is the ratio of correctly recovered data points averaged over $n$ points, and $\# Failure$ is the total number of failures occurred at cluster-assignments. The best permutation for the cluster labels is investigated based on the distances between estimated centers and true centers for the evaluation. Formal definitions of the evaluation metrics are stated below. $\bI(\cdot)$ represents the indicator function, and $z,\hat{z}$ represent true and estimated cluster labels respectively. As similar number of points are generated per cluster, a mean accuracy averaged over clusters is not considered.
\begin{align*}
Accuracy&=\frac{1}{N_{rep}}\sum_{i=1}^{N_{rep}}\sum_{j=1}^{n}\frac{\bI_{z_j=\hat{z}_j}}{n}\\
\# Failure&=\sum_{i=1}^{N_{rep}}\bI_{fail}
\end{align*}

\subsection{Results}\label{subsec:exp_results}
\begin{figure}[ht]
	\centering
	\begin{subfigure}{.38\linewidth}
		\centering
		\includegraphics[width=\linewidth]{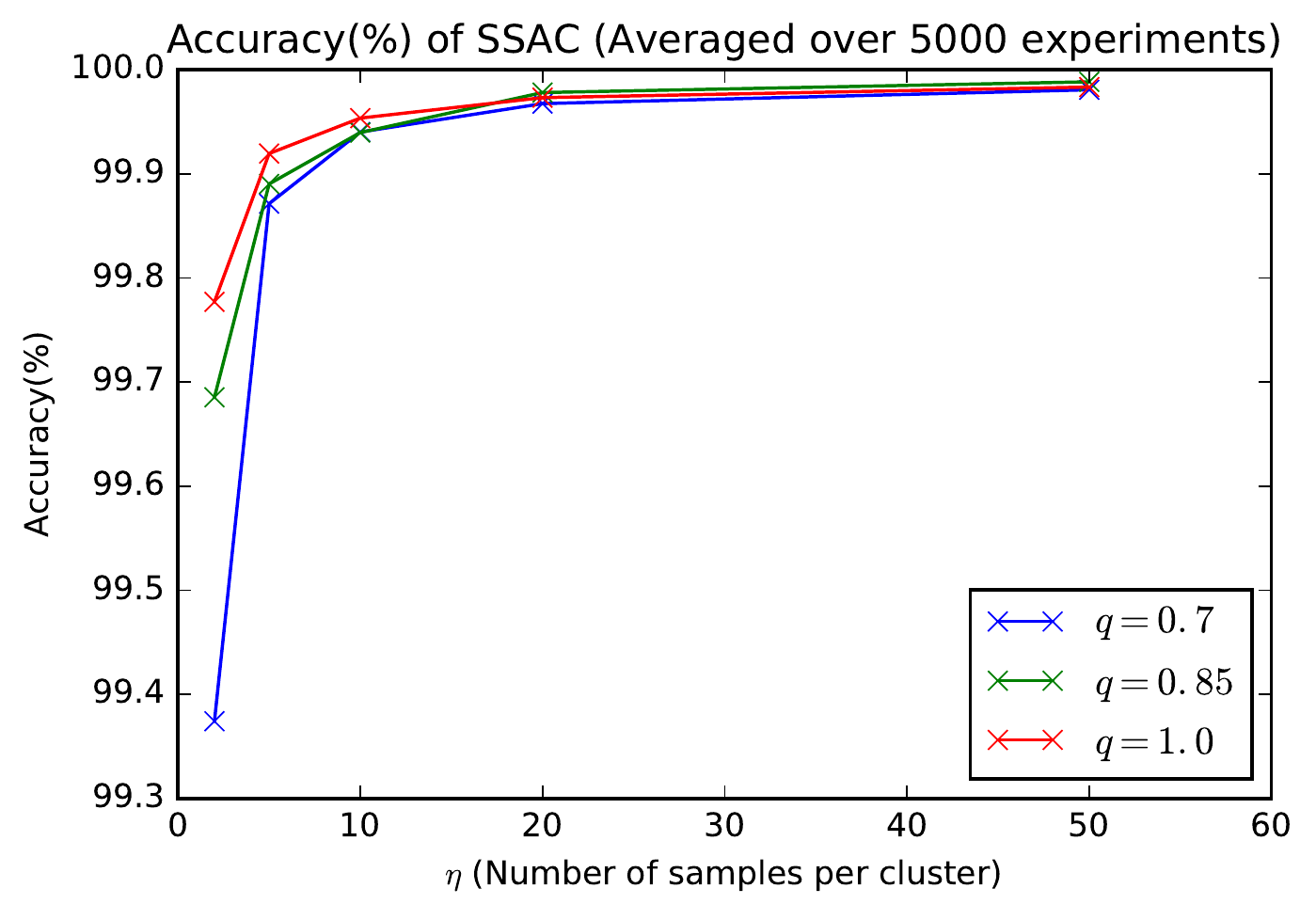}
		\caption{$Accuracy~(\%)$ of SSAC algorithm}\label{fig:res_acc_main}
	\end{subfigure}
	\hspace {2em}
	\begin{subfigure}{.38\linewidth}
		\centering
		\includegraphics[width=\linewidth]{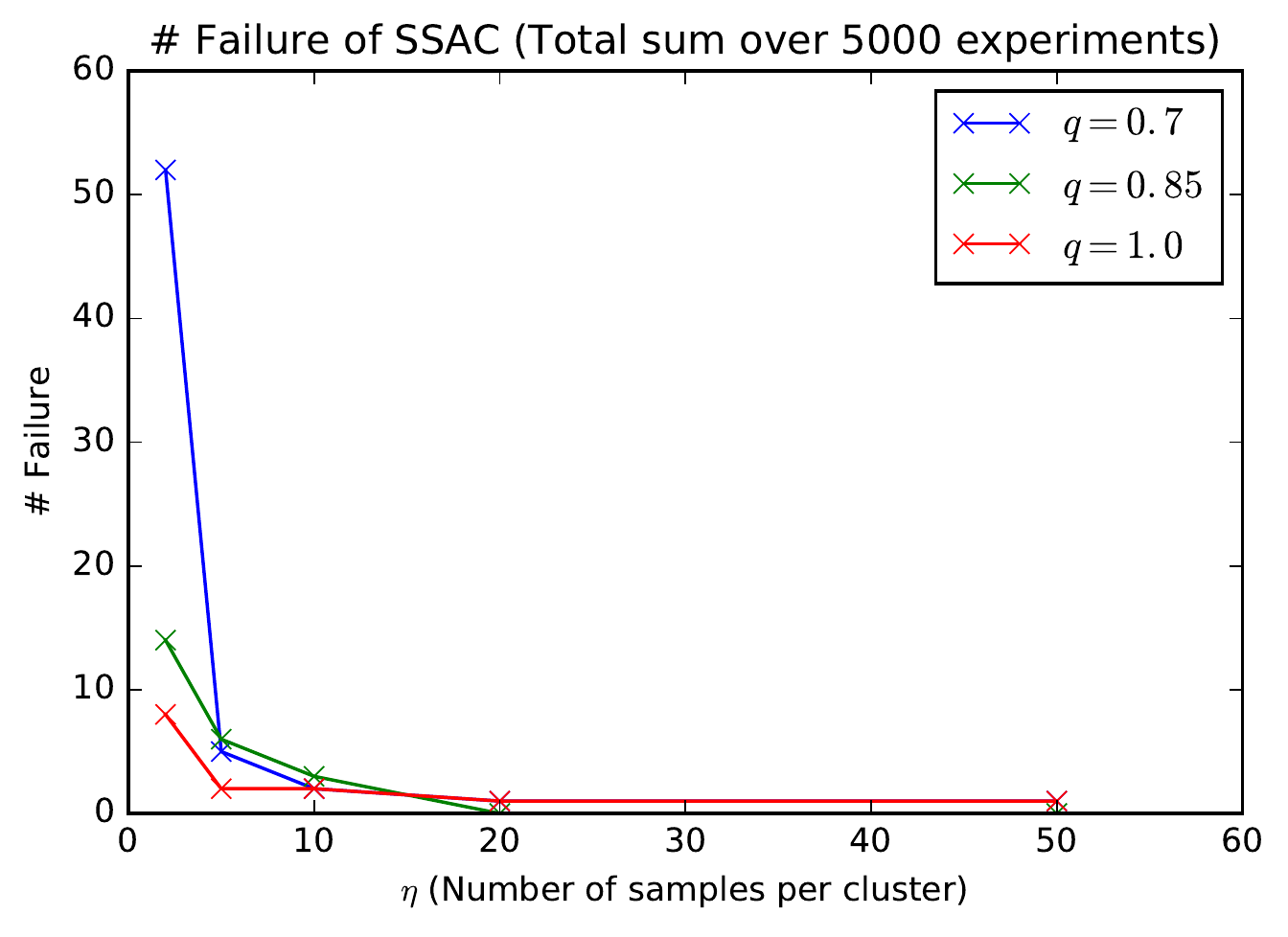}
		\caption{$\#~Failure$ of SSAC algorithm}\label{fig:res_fail}
	\end{subfigure}
	\caption{Separable case with a narrow margin. $\gamma_{\min}=1.0$, $\gamma_{\max}=1.1$. (\subref{fig:res_acc_main}) Averaged over $5000$ experiments. \textit{x-axis}: $\eta$ (Number of samples), \textit{y-axis}: $Accuracy~(\%)$. (\subref{fig:res_fail}) Total sum over $5000$ experiments. \textit{x-axis}: $\eta$, \textit{y-axis}: $\#~Failure$}
	\label{fig:res_main}
	\vskip -0.1in
\end{figure}
To focus on scenarios with narrow margins, $\gamma_{\min}=1.0$ and $\gamma_{\max}=1.1$ are chosen. Figure \ref{fig:res_main} shows $Accuracy$ in percentage and $\#~Failure$ on different parameter pairs $(q,\eta)$. An accuracy of recovering the oracle's clustering increases as $\eta$ increases. This shows the importance of enough number of samples to succeed in clustering even with uncertainties caused by an weak oracle. In fact, even small number of samples are sufficient in practice. 

Failures of the SSAC algorithm can happen as it is a probabilistic algorithm. When $\eta$ is really small, the possibility of failure increases as we have only few chances to ask cluster-assignment queries. For example, if $\eta=2$, only $r=\lceil k\eta \rceil=6$ points are sampled. Then, if all 6 cluster-assignment queries fail, Phase 1 fails. This leads to the recovery of less than $k$ clusters because the SSAC algorithm repeats Phase 1 and Phase 2 for $k$ times. However, such situations rarely occur if $\eta$ is large enough. Also, failure in binary search can happen, but we observed that only 2 out of 5000 rounds suffered from it with $\beta=10$.

\begin{figure}[ht]
	\centering
	\begin{subfigure}{.38\linewidth}
		\centering
		\includegraphics[width=\linewidth]{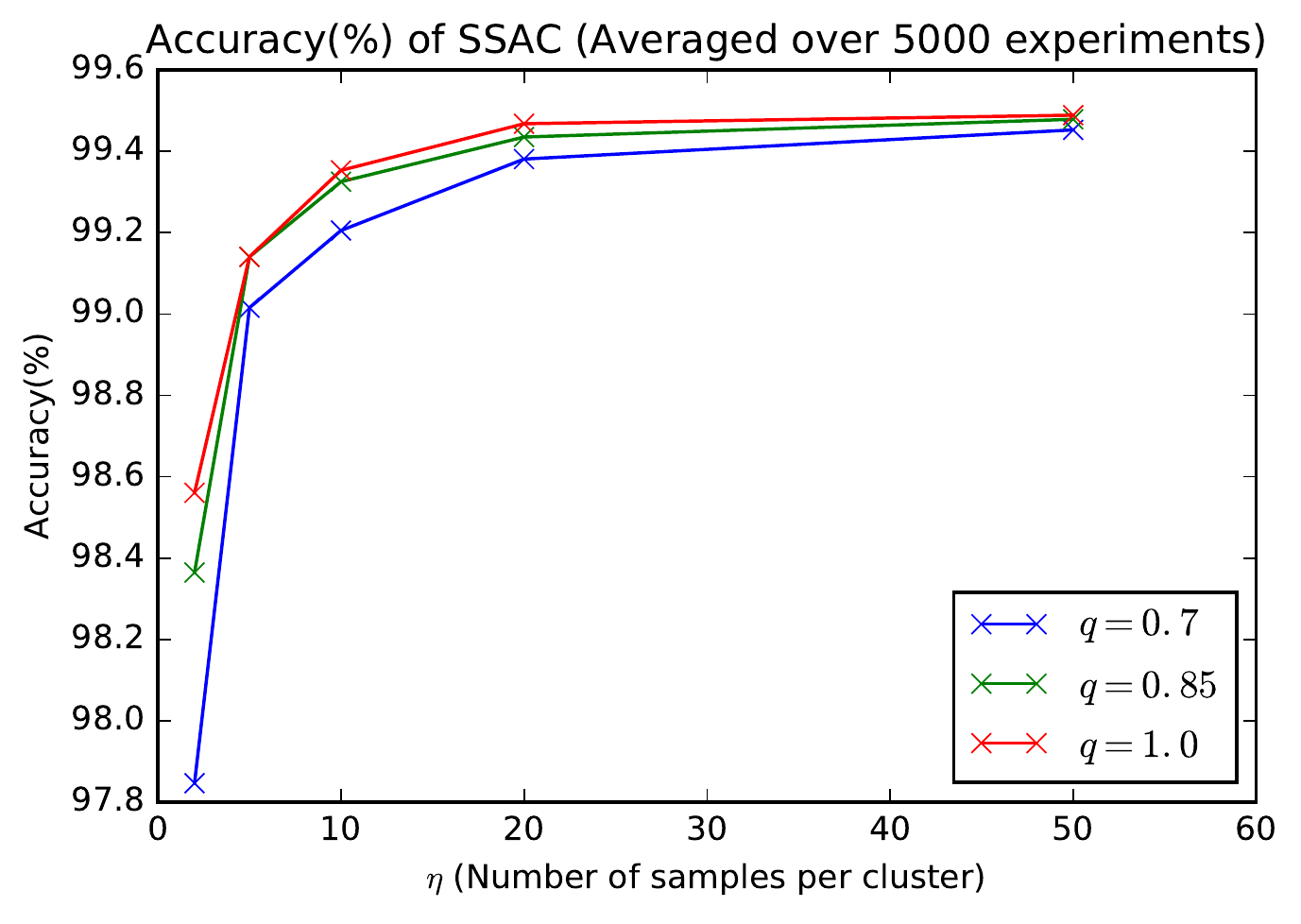}
		\caption{$Accuracy~(\%)$ of SSAC algorithm}\label{fig:res_acc_nonsep_main}
	\end{subfigure}
	\hspace {2em}
	\begin{subfigure}{.38\linewidth}
		\centering
		\includegraphics[width=\linewidth]{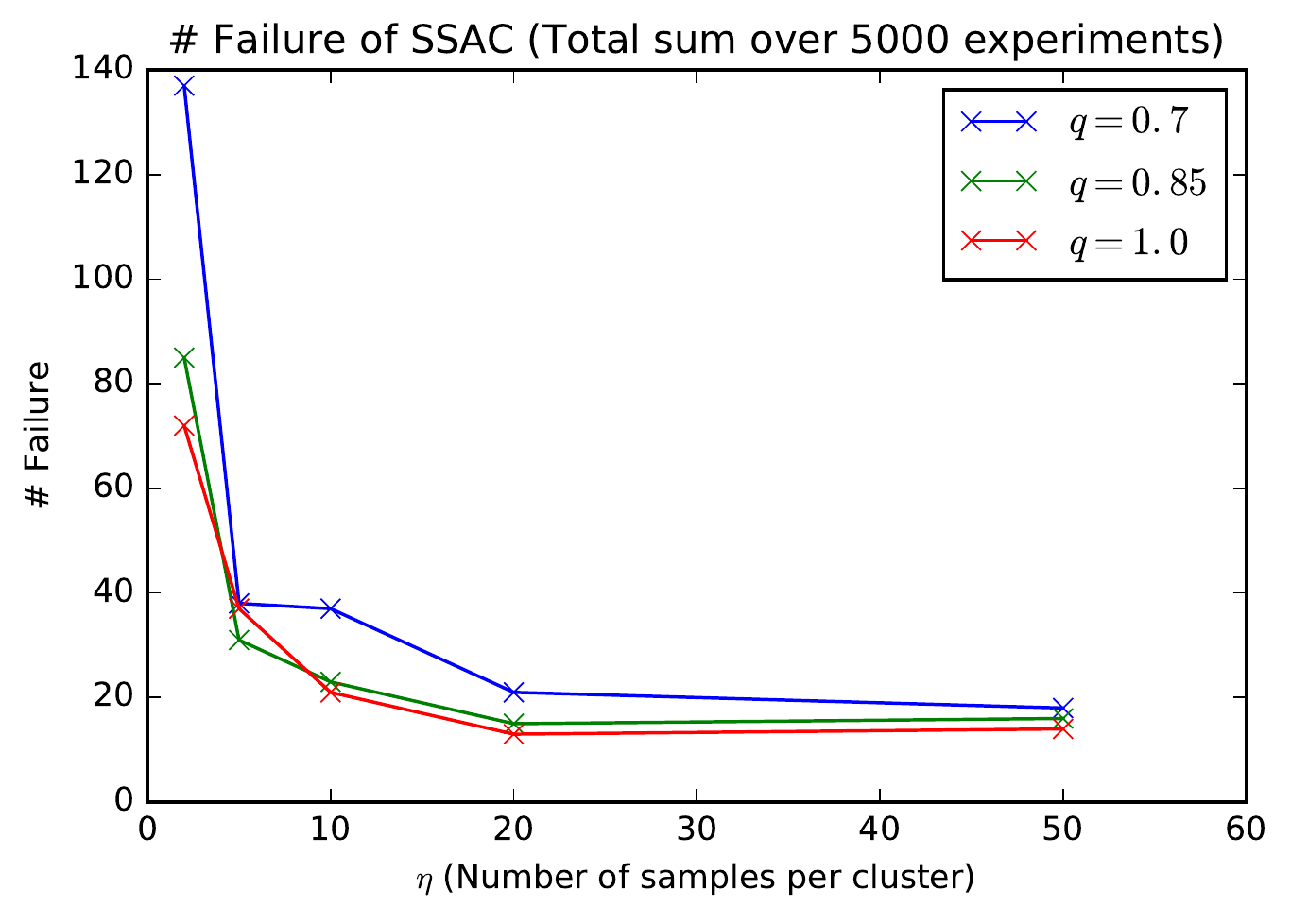}
		\caption{$\#~Failure$ of SSAC algorithm}\label{fig:res_fail_nonsep}
	\end{subfigure}
	\caption{Non-separable case. $\gamma_{\min}=0.6$, $\gamma_{\max}=1.0$. (\subref{fig:res_acc_nonsep_main}) Averaged over $5000$ experiments. \textit{x-axis}: $\eta$ (Number of samples), \textit{y-axis}: $Accuracy~(\%)$. (\subref{fig:res_fail_nonsep}) Total sum over $5000$ experiments. \textit{x-axis}: $\eta$, \textit{y-axis}: $\#~Failure$}
	\label{fig:res_nonsep_main}
	\vskip -0.1in
\end{figure}
Results on the non-separable case, $\gamma_{\min}=0.6$ and $\gamma_{\max}=1.0$, are also provided in Figure \ref{fig:res_nonsep_main}. Even if it does not get good theoretical guarantees, our algorithm still gives a reasonable clustering. See Appendix \ref{sec:append_exp_results} for additional results on different settings and scatter plots of the clusterings.


\section{Conclusion and Future Work}\label{sec:conclusion}
This paper presents approaches for utilizing a weak oracle in the semi-supervised active clustering (SSAC) framework. Specifically, we suggest two different types of domain experts that can output an answer ``not-sure'' for the same-cluster query. First, we consider a random-weak oracle that does not know the answer with at most some fixed probability. Secondly, two distance-based weak oracle models are considered to simulate realistic situations. For both of these models, probabilistic guarantees on discovering the oracle's clustering, with small dependency on the margin, are provided based on our devised binary search algorithms. In distance-based models, a single element close enough to the cluster center is able to mitigate ambiguous supervision. As our weak-oracle assumptions are designed to reflect practical scenarios, application to the real world clustering tasks with actual domain experts would be an interesting research topic. Another future direction is an extension of the framework to accommodate other distance functions or metric learning approaches.

\newpage
\bibliographystyle{plainnat}
\bibliography{kim2017semiweak}

\newpage
\clearpage
\appendix

\section{Concentration Inequality for Random Vectors}\label{sec:append_concentration}
To achieve high probability guarantees, we apply the Vector Hoeffding's inequality. Proof of Theorem \ref{thm:hoeff_vector} uses a Transpose Dilation technique on the Matrix Hoeffding results for symmetric matrices \cite{tropp2012user}.

\begin{lemma}[Matrix Hoeffding's Inequality \cite{tropp2012user}]
	\label{lemma:hoeff_matrix}
	Let $X_1, X_2,\cdots,X_s$ be i.i.d. random, symmetric matrices with dimension $m\times m$, and let $A_1,\cdots,A_s$ be fixed symmetric matrices. Assume that each random matrix satisfies,
	\begin{align*}
	\forall i \in [s],~\E[X_i]=0~\text{ and }~X_i^2 \preceq A_i^2.
	\end{align*}
	Then, for all $t\geq0$,
	\begin{gather*}
	P\left(\lambda_{\max}\left(\sum_{i=1}^{s}X_i\right) > t \right) \leq m\cdot e^{-t^2/8\sigma^2},~\text{ where }~\sigma^2 \triangleq \left\| \sum_{i=1}^s A_i^2 \right\|.
	\end{gather*}
\end{lemma}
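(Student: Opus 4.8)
The plan is to follow the matrix Laplace transform method, since the statement is exactly the Hoeffding instance of the general framework of \citet{tropp2012user}. Write $Y=\sum_{i=1}^s X_i$. First I would reduce the tail bound to a bound on the trace of a matrix moment generating function: for any $\theta>0$, Markov's inequality applied to the increasing map $\lambda\mapsto e^{\theta\lambda}$, together with the elementary inequality $e^{\theta\lambda_{\max}(Y)}\leq \operatorname{tr} e^{\theta Y}$ (the largest eigenvalue of the positive definite matrix $e^{\theta Y}$ is dominated by the sum of all of its eigenvalues), yields
\begin{gather*}
P\left(\lambda_{\max}(Y) > t\right) \leq e^{-\theta t}\,\E\,\operatorname{tr} e^{\theta Y}.
\end{gather*}
It then remains to control $\E\,\operatorname{tr} e^{\theta Y}$ and to optimize over $\theta$.

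The central step is to exploit independence to move the moment generating function of the sum into a sum of individual contributions inside the exponent. Because the $X_i$ need not commute, $e^{\theta Y}$ does not factor, so I would invoke Lieb's concavity theorem (the map $A\mapsto \operatorname{tr}\exp(H+\log A)$ is concave on positive definite $A$). Applying Jensen's inequality through this concave map while conditioning on all but one summand at a time gives the subadditivity estimate
\begin{gather*}
\E\,\operatorname{tr} e^{\theta Y} \leq \operatorname{tr}\exp\left(\sum_{i=1}^s \log \E\, e^{\theta X_i}\right).
\end{gather*}
I expect this to be the main obstacle: it is the one genuinely deep ingredient, and getting the conditioning argument and the operator monotonicity of $\log$ correct is where all the subtlety lives.

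The remaining steps are essentially mechanical. For each summand I would establish the matrix Hoeffding moment bound: since $\E X_i=0$ and $X_i^2\preceq A_i^2$, transferring the scalar estimate $e^{\theta x}\leq \cosh\theta + x\sinh\theta$ to the semidefinite order gives $\E\, e^{\theta X_i}\preceq \exp\!\big(g(\theta)\,A_i^2\big)$ for a quadratic $g(\theta)=c\theta^2$ with an absolute constant $c$, hence $\log \E\, e^{\theta X_i}\preceq g(\theta)\,A_i^2$ by operator monotonicity of the logarithm. Substituting into the subadditivity bound and using monotonicity of the trace exponential,
\begin{gather*}
\E\,\operatorname{tr} e^{\theta Y} \leq \operatorname{tr}\exp\left(g(\theta)\sum_{i=1}^s A_i^2\right) \leq m\,\exp\!\big(g(\theta)\,\sigma^2\big),
\end{gather*}
where the last inequality bounds the trace of an $m\times m$ positive matrix by $m$ times its largest eigenvalue and uses $\lambda_{\max}(\sum_i A_i^2)=\|\sum_i A_i^2\|=\sigma^2$. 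Combining this with the Laplace bound and minimizing $e^{-\theta t}\, m\, e^{g(\theta)\sigma^2}$ over $\theta>0$ reproduces the claimed $m\,e^{-t^2/(8\sigma^2)}$, the constant $8$ emerging from the exact quadratic constant $c$ in the mgf bound after the optimization. Since the result is the Hoeffding case of a standard toolkit, in the paper itself I would simply cite \citet{tropp2012user} rather than reproduce this argument.
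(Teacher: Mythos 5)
The paper does not actually prove this lemma: it is imported verbatim from \citet{tropp2012user} with a citation and used as a black box, the paper's only original contribution in that appendix being the transpose-dilation argument that converts it into the Vector Hoeffding inequality (Theorem \ref{thm:hoeff_vector}). Your decision to cite rather than reprove therefore matches the paper's treatment exactly, and your outline (Laplace-transform reduction via $P(\lambda_{\max}(Y)>t)\leq e^{-\theta t}\E\,\mathrm{tr}\,e^{\theta Y}$, Lieb/Jensen subadditivity of the trace mgf, per-summand mgf bounds, optimization in $\theta$) is a faithful sketch of Tropp's own proof. One caveat on the step you call ``essentially mechanical'': the chord bound $e^{\theta x}\leq\cosh\theta+x\sinh\theta$ is valid only for $x\in[-1,1]$, so its semidefinite transfer requires $-I\preceq X_i\preceq I$ and yields $\E\,e^{\theta X_i}\preceq\cosh(\theta)\,I\preceq e^{\theta^2/2}I$, i.e.\ a bound with the scalar $\|A_i\|^2$ in the exponent rather than the matrix $A_i^2$. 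Pushing that through Lieb's inequality gives $m\,e^{-t^2/2\tilde\sigma^2}$ with $\tilde\sigma^2=\sum_i\|A_i^2\|$, which is a different (incomparable) statement from the one claimed, whose variance proxy is $\sigma^2=\|\sum_i A_i^2\|$. Keeping the matrix $A_i^2$ inside the exponent is exactly where non-commutativity of $X_i$ and $A_i$ bites; Tropp handles it with a separate mgf lemma of the form $\E\,e^{\theta X_i}\preceq\exp\left(2\theta^2A_i^2\right)$, and the constant $8$ in the tail (rather than the scalar-Hoeffding $2$) is precisely the price of that step, not an artifact of routine optimization. Since you defer to the citation for the exact constant anyway, this does not undermine your conclusion, but the one step you dismissed as mechanical is the only genuinely matrix-theoretic difficulty in the proof.
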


\begin{defn}[Transpose Dilation]\label{def:dilation}
	Given a matrix $A\in \R^{d_1\ \times d_2}$, transpose dilation of $A$ is defined as a function $D_T(A): \R^{d_1 \times d_2} \rightarrow \R^{(d_1+d_2) \times (d_1 + d_2)}$:
	\begin{align*}
	D_T(A)= \begin{bmatrix}
	0 & A \\
	A^T & 0
	\end{bmatrix}
	\end{align*}
\end{defn}
Well known property about the transpose dilation is the fact that it preserves spectral information of the input matrix \cite{tropp2012user}, i.e. $\|A\|_2 = \sigma_{\max}(A) = \sigma_{\max}(D_T(A)) = \|D_T(A)\|_2=\lambda_{\max}(D_T(A))$. In short, for each singular value $\sigma_A$ of $A$, there exist two corresponding eigenvalues $+\sigma_A$ and $-\sigma_A$ of $D_T(A)$.

\begin{theorem}[Vector Hoeffding's Inequality]\label{thm:hoeff_vector}
	Let $Y_1,Y_2,\cdots,Y_s$ be $i.i.d.$ random vectors with dimension $m$, and $r_1,r_2,\cdots,r_s>0$ be a sequence of positive values. Assume that each random vector satisfies:
	\begin{align*}
	\forall i\in [s],~ \E[Y_i]=0,~ \text{ and }~\|Y_i\|_2 \leq r_i.
	\end{align*}
	Then, for any $t\geq 0$,
	\begin{align*}
	P\left(\left\|\frac{1}{s}\sum_{i=1}^{s}Y_i\right\|_2 > t \right) \leq (m+1)\cdot e^{-t^2/2\sigma^2},~\text{where }\sigma^2 \triangleq \frac{1}{s^2}\sum_{i=1}^s r_i^2
	\end{align*}
\end{theorem}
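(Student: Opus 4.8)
The plan is to reduce the vector inequality to the matrix Hoeffding's inequality (Lemma \ref{lemma:hoeff_matrix}) through the transpose dilation of Definition \ref{def:dilation}. I would view each $Y_i \in \R^m$ as an $m\times 1$ matrix and set $X_i \triangleq D_T(Y_i) = \begin{bmatrix} 0 & Y_i \\ Y_i^T & 0\end{bmatrix} \in \R^{(m+1)\times(m+1)}$. Because $D_T$ is linear and the $Y_i$ are i.i.d., the $X_i$ form an i.i.d. sequence of symmetric matrices of dimension $m+1$, which already accounts for the $(m+1)$ factor appearing in the target bound.

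First I would verify the hypotheses of Lemma \ref{lemma:hoeff_matrix}. Linearity of $D_T$ together with $\E[Y_i]=0$ gives $\E[X_i] = D_T(\E[Y_i]) = 0$. A direct block multiplication yields $X_i^2 = \begin{bmatrix} Y_iY_i^T & 0 \\ 0 & \|Y_i\|_2^2\end{bmatrix}$, whose eigenvalues are $\|Y_i\|_2^2$ (with multiplicity two) and $0$; hence $X_i^2 \preceq \|Y_i\|_2^2\, I_{m+1} \preceq r_i^2\, I_{m+1}$. Choosing the fixed symmetric matrices $A_i = r_i\, I_{m+1}$ then satisfies $X_i^2 \preceq A_i^2$, and $\|\sum_i A_i^2\| = \|\sum_i r_i^2\, I_{m+1}\| = \sum_i r_i^2$.

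Next I would translate the spectral quantity back to the Euclidean norm. By linearity $\sum_i X_i = D_T(\sum_i Y_i)$, and the spectral-preservation property of the transpose dilation gives $\lambda_{\max}(\sum_i X_i) = \|\sum_i Y_i\|_2$, since the dilation of the single column $\sum_i Y_i$ has eigenvalues $\pm\|\sum_i Y_i\|_2$ and otherwise zeros. Consequently the event $\{\|\frac{1}{s}\sum_i Y_i\|_2 > t\}$ coincides with $\{\lambda_{\max}(\sum_i X_i) > st\}$, so applying Lemma \ref{lemma:hoeff_matrix} at threshold $st$ and substituting $\sigma^2 = \frac{1}{s^2}\sum_i r_i^2$ produces a tail of the form $(m+1)\,e^{-c\,t^2/\sigma^2}$; the strict inequality ``$>t$'' causes no difficulty.

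I expect the delicate point to be the constant in the exponent rather than the dilation bookkeeping. Inserting Lemma \ref{lemma:hoeff_matrix} exactly as stated gives the weaker $e^{-t^2/8\sigma^2}$, whereas the claim is the sharper $e^{-t^2/2\sigma^2}$. To recover the factor of four I would replace the generic constant by the tighter matrix moment-generating bound $\E e^{\theta X_i} \preceq \exp(\tfrac{\theta^2}{2}A_i^2)$, which is available precisely because each dilated summand $X_i$ has the rank-two spectrum $\{\pm\|Y_i\|_2\}$ and hence behaves like a bounded, mean-zero scalar variable on a two-dimensional invariant subspace; the scalar Hoeffding lemma for a range of width $2\|Y_i\|_2$ furnishes exactly this estimate. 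Pushing it through the matrix Laplace-transform method and optimizing over $\theta$ then yields the denominator $2\sigma^2$, matching the statement.
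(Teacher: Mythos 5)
Your reduction is exactly the one in the paper's own proof: the paper likewise sets $X_i=\frac{1}{s}D_T(Y_i)$ (your unscaled $X_i=D_T(Y_i)$ evaluated at threshold $st$ is the same computation), checks $\E X_i=0$ and $X_i^2\preceq \frac{r_i^2}{s^2}I_{m+1}$, converts $\left\|\frac{1}{s}\sum_i Y_i\right\|_2$ into $\lambda_{\max}\left(\sum_i X_i\right)$ via spectral preservation of the dilation, and then invokes Lemma \ref{lemma:hoeff_matrix}. You have also correctly isolated the only delicate step: Lemma \ref{lemma:hoeff_matrix} as stated yields $(m+1)\,e^{-t^2/(8\sigma^2)}$, a factor of $4$ weaker in the exponent than what Theorem \ref{thm:hoeff_vector} claims. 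Be aware that the paper does not actually resolve this point: its proof says ``by applying directly to the matrix Hoeffding's inequality'' and then writes the denominator $2\sigma^2$, i.e., it silently makes the very jump you flagged.

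The trouble is that your proposed repair does not go through as stated. The bound $\E e^{\theta X_i}\preceq \exp\left(\frac{\theta^2}{2}A_i^2\right)$ cannot be deduced by ``applying the scalar Hoeffding lemma on the two-dimensional invariant subspace,'' because that subspace is random: it is spanned by $\frac{1}{\sqrt{2}}\left(Y_i/\|Y_i\|_2,\,\pm 1\right)$ and rotates with the direction of $Y_i$. The semidefinite inequality you need concerns the averaged matrix $\E e^{\theta D_T(Y_i)}$, which superposes exponentials supported on different random eigenspaces, so there is no eigenspace-by-eigenspace reduction to a scalar mean-zero variable; $\E D_T(Y_i)=0$ is a statement about the full average, not about the spectrum conditional on a direction. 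The subspace picture does work when the eigenvectors are deterministic (e.g., $m=1$), or when $Y_i$ is symmetric in distribution: then one conditions on $Y_i$, inserts an independent Rademacher sign $\varepsilon$, and uses $\E_\varepsilon e^{\theta\varepsilon X}=\cosh(\theta X)\preceq e^{\theta^2X^2/2}\preceq e^{\theta^2 r_i^2/2}I_{m+1}$, which is exactly where the constant $1/2$ is available in Tropp's framework. For general mean-zero bounded summands --- all that Theorem \ref{thm:hoeff_vector} assumes --- Tropp reaches the Hoeffding bound by symmetrization $X_i\mapsto X_i-X_i'$, which doubles the range and inflates the variance proxy by a factor of $4$; that is precisely the origin of the constant $8$ in Lemma \ref{lemma:hoeff_matrix}, and the rank-two structure of the dilation does not remove it. So the honest output of this proof strategy (yours and the paper's alike) is $(m+1)\,e^{-t^2/(8\sigma^2)}$; the claimed constant $2$ would require a genuinely different argument (e.g., a Pinelis-type dimension-free inequality), and settling for the constant $8$ would propagate into the paper's sampling bounds, turning the $e^{-(\gamma-1)^2/8}$ factors into $e^{-(\gamma-1)^2/32}$.
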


\begin{proof}
	The overall proof is motivated by dilation technique introduced by \citet{tropp2012user} so that we can apply concentration inequalities for symmetric random matrices to random vectors.
	
	Let $X_i = \frac{1}{s}D_T(Y_i)$, where $D_T(\cdot)$ is a transpose dilation defined in Definition \ref{def:dilation}. By the definition of transpose dilation, $\sum_i D_T(X_i) = D_T(\sum_i X_i)$, and $D_T(cX_i)=cD_T(X_i)$. Combining these with the fact that $D_T(\cdot)$ preserves spectral information gives,
	\begin{align*}
	\left\| \frac{1}{s}\sum_{i=1}^s Y_i \right\|_2 
	&= \left\| D_T\left(\sum_{i=1}^s\frac{1}{s}Y_i\right)\right\|_2\\
	&= \lambda_{\max} \left( D_T\left(\sum_{i=1}^s\frac{1}{s}Y_i\right)\right)\\
	&= \lambda_{\max}\left( \sum_{i=1}^s D_T\left(\frac{1}{s}Y_i\right)\right)\\ 
	&= \lambda_{\max}\left( \sum_{i=1}^s X_i \right).
	\end{align*}
	This equality indicates that $\ell_2$-norm of the sum of vectors can be transformed to the spectral norm, or the largest eigenvalue, of the sum of matrices constructed by transpose dilation.
	
	Now let's bound the square of the random matrix $X_i$.
	\begin{align*}
	X_i^2 = \begin{bmatrix}
	\frac{1}{s^2}Y_i Y_i^T&0\\
	0 &\frac{1}{s^2}\|Y_i\|_2^2
	\end{bmatrix}
	\end{align*}
	This gives 
	\begin{align*}
	\sigma_{\max}(X_i^2)= \max\left\{\sigma_{\max}\left(\frac{1}{s^2}Y_i Y_i^T\right),\sigma_{\max}\left(\frac{1}{s^2}\|Y_i\|_2^2\right)\right\}
	=\frac{1}{s^2}\|Y_i\|_2^2
	\end{align*}
	Since a random vector $Y_i$ is bounded as $\|Y_i\|_2 \leq r_i$, we can say that $X_i^2 \preceq \frac{r_i^2}{s^2} I_{m+1}$ for any $i\in[s]$, where $I_{m+1}$ represents a $(m+1) \times (m+1)$ identity matrix. Finally, we can define the following constant:
	\begin{align*}
	\sigma^2 = \left\| \sum_{i=1}^s \frac{r_i^2}{s^2}I_{m+1}\right\|_2 = \frac{1}{s^2}\sum_{i=1}^s r_i^2
	\end{align*}
	
	Therefore, by applying directly to the matrix Hoeffding's inequality, we have,
	\begin{gather*}
	P\left(\left\|\frac{1}{s}\sum_{i=1}^{s}Y_i\right\|_2 > t \right) \leq (m+1)\cdot e^{-t^2/2\sigma^2},~\text{ where }~\sigma^2 = \frac{1}{s^2}\sum_{i=1}^s r_i^2
	\end{gather*}
\end{proof}

\section{Proofs and Supplementary Analyses}\label{sec:append_proof}
In this section, proofs for theoretical results on both random-weak oracles and distance-weak oracles are provided. Also, supplementary analyses like query complexities and feasible ranges of parameters for distance-weak oracles are presented.

First, we state Lemma \ref{lemma:close_center} which assists theoretical results by introducing a characteristic of points close enough to the cluster center.
\begin{lemma}[Lemma 5 of \citet{ashtiani2016clustering}]\label{lemma:close_center}
	For given data $\cX$ and a distance metric $d(\cdot,\cdot)$, let $\cC=\{C_1,\cdots,C_k\}$ be a center-based clustering with the $\gamma$-margin property, and $\mu=\{\mu_1,\cdots,\mu_k\}$ be the set of centers (mean of each cluster) of $\cC$. Let $\mu_i'$ be a point close to the center $\mu_i$ such that $d(\mu_i',\mu_i)<\epsilon r(C_i)$, where $r(C_i)\triangleq \max_{x \in C_i}d(x,\mu_i)$. Then if $\epsilon \leq \frac{\gamma-1}{2}$ holds, points in the cluster $C_i$ are closer to $\mu'_i$ than the points of other clusters, i.e.,
	\begin{align*}
	\forall x\in C_i, \forall y \in \cX \setminus C_i,~ d(x,\mu'_i) < d(y,\mu'_i)
	\end{align*}
\end{lemma}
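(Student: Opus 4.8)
The plan is to control both $d(x,\mu_i')$ and $d(y,\mu_i')$ through the triangle inequality, exploiting that $\mu_i'$ lies within $\epsilon r(C_i)$ of the true center $\mu_i$, and then to separate the two resulting bounds using the $\gamma$-margin property together with the hypothesis $\epsilon \le \frac{\gamma-1}{2}$. The whole argument reduces to comparing a clean upper bound on the in-cluster distance against a clean lower bound on the out-of-cluster distance.

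First I would upper bound the in-cluster distance. For any $x \in C_i$, the triangle inequality gives $d(x,\mu_i') \le d(x,\mu_i) + d(\mu_i,\mu_i')$. Since $d(x,\mu_i)\le r(C_i)$ by the definition of the radius $r(C_i)$, and $d(\mu_i,\mu_i') < \epsilon\, r(C_i)$ by hypothesis, this yields $d(x,\mu_i') < (1+\epsilon)\, r(C_i)$. Next I would lower bound the out-of-cluster distance: for $y\in\cX\setminus C_i$ the reverse triangle inequality gives $d(y,\mu_i') \ge d(y,\mu_i) - d(\mu_i,\mu_i') > d(y,\mu_i) - \epsilon\, r(C_i)$. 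The decisive input here is the $\gamma$-margin property, which I would apply at the radius-attaining element $\bar x \in C_i$ with $d(\bar x,\mu_i)=r(C_i)$, giving $d(y,\mu_i) > \gamma\, d(\bar x,\mu_i) = \gamma\, r(C_i)$, and hence $d(y,\mu_i') > (\gamma-\epsilon)\, r(C_i)$.

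Combining the two bounds, it suffices that $(1+\epsilon)\, r(C_i) \le (\gamma-\epsilon)\, r(C_i)$, i.e. $2\epsilon \le \gamma-1$, which is exactly the assumed condition $\epsilon\le\frac{\gamma-1}{2}$. Chaining the strict inequalities on each side through this comparison then yields $d(x,\mu_i') < d(y,\mu_i')$ for every $x\in C_i$ and $y\in\cX\setminus C_i$, as desired.

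The only point that needs genuine care is invoking the $\gamma$-margin property at the right element of $C_i$: applying it at the radius-attaining point $\bar x$ makes the lower bound on $d(y,\mu_i)$ as large as possible, whereas a generic $x\in C_i$ would give a weaker bound and break the final comparison. I would also track strictness carefully — each side contributes one strict inequality (from $d(\mu_i,\mu_i')<\epsilon r(C_i)$ and from the strict margin bound), so the conclusion is strict even though the condition on $\epsilon$ is stated with a non-strict inequality.
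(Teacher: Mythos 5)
Your proof is correct and follows essentially the same argument as the one this paper defers to (Lemma 5 of \citet{ashtiani2016clustering}): upper bound $d(x,\mu_i')\le(1+\epsilon)r(C_i)$ by the triangle inequality, lower bound $d(y,\mu_i')>(\gamma-\epsilon)r(C_i)$ by the reverse triangle inequality together with the $\gamma$-margin property applied at the radius-attaining point, and compare the two via $\epsilon\le\frac{\gamma-1}{2}$. Your care about where to invoke the margin property and about strictness of the inequalities is exactly right.
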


\subsection{Proofs for Random-Weak Oracles}\label{subsec:append_proof_ran}
\textbf{Analysis on Weak Pairwise Cluster-assignment Query}\\
A single weak pairwise cluster-assignment query is composed of $k$ weak same-cluster queries on $k$ different pairs $(x,y_i$), where $x$ is a given point and $y_i$ is an assignment-known point from each cluster $C_{\pi(i)},i\in[k]$. Therefore, if an oracle outputs yes or no answer for at least $k-1$ weak queries, we can perfectly discover the cluster assignment of $x$. This probability is lower bounded by $q^{k-1}$ as $k(1-q)q^{k-1}+q^k\geq q^{k-1}$. So, we can conclude that the probability of having not-sure answer for a given $x$ on a cluster-assignment query is at most $1-q^{k-1}$.

Also, if we only have $k'<k$ clusters defined during the process, cluster assignment of $x$ can be identified if weak-oracle gives yes or no answers for all $k'$ same-cluster queries. In detail, if one yes answer is provided among $k'$ weak queries, $x$ can be assigned to the corresponding cluster. And $k'$ no answers is handled by assigning a new cluster $k'+1$ for $x$. Then the probability of failure in identifying a cluster-assignment is at most $1-q^{k'}\leq 1-q^{k-1}$ in this case. Accordingly, we use $1-q^{k-1}$ as an upper bound for the failure probability of a cluster-assignment query to consider the worst case for further analysis on sampling complexity.
\begin{lemma}\label{lemma:distance_ran}
	For given data $\cX$ and a distance metric $d(\cdot,\cdot)$, let $\cC=\{C_1,\cdots,C_k\}$ be a center-based clustering with the $\gamma$-margin property, and $\mu=\{\mu_1,\cdots,\mu_k\}$ be the set of centers (mean of each cluster) of $\cC$. Define $Z_p, C_p, \mu_p,$ and $\mu_p'$ as in Algorithm \ref{alg:weak_SSAC_ran} with $\epsilon\leq\frac{\gamma-1}{2}$. If number of samples $s\geq|Z_p|$ including not-sure cluster assignment is at least $\etaran$, then the probability of $d(\mu_p',\mu_p)>\epsilon r(C_p)$ is at most $\frac{\delta}{2k}$, where $r(C_i)\triangleq \max_{x \in C_i}d(x,\mu_i)$.
\end{lemma}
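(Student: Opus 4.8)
The plan is to cast the claim as a concentration statement for the empirical mean of the \emph{surviving} samples and to fold the not-sure erasure into that bound. First I would recall that $\mu_p$ is the exact mean of $C_p$, so for a uniform draw $x$ from $C_p$ the centered vector $x-\mu_p$ has zero mean and satisfies $\|x-\mu_p\|_2\le r(C_p)$ by the definition of $r(C_p)$. This is exactly the setting of the Vector Hoeffding inequality (Theorem \ref{thm:hoeff_vector}): if $Z_p$ consisted of $\ell$ i.i.d.\ uniform points of $C_p$, then with $t=\epsilon\,r(C_p)$ one gets $\sigma^2=r(C_p)^2/\ell$, hence a conditional failure probability at most $(m+1)\,e^{-\ell\epsilon^2/2}$. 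Taking the operative value $\epsilon=\tfrac{\gamma-1}{2}$ (the largest permitted, which both makes Lemma \ref{lemma:close_center} applicable and minimizes the sample size) turns this into
\begin{equation*}
P\big(d(\mu_p',\mu_p)>\epsilon\, r(C_p)\,\big|\,|Z_p|=\ell\big)\le (m+1)\,b^{\ell},\qquad b\triangleq e^{-(\gamma-1)^2/8}.
\end{equation*}

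The second ingredient is to control the random size $|Z_p|$ induced by not-sure answers. In the random-weak model (Definition \ref{def:weak_oracle_ran}) the oracle never returns a wrong label, so every point in $Z_p$ genuinely belongs to $C_p$; thus $Z_p$ is obtained from the $s$ sampled representatives of $C_p$ by independently keeping each one precisely when its cluster-assignment query succeeds. By the pairwise cluster-assignment analysis stated just before the lemma, each such success probability is at least $q^{k-1}$. I would then average the conditional bound over $|Z_p|$:
\begin{equation*}
P\big(d(\mu_p',\mu_p)>\epsilon\, r(C_p)\big)\le (m+1)\,\E\!\big[b^{|Z_p|}\big]\le (m+1)\big(1-q^{k-1}+q^{k-1}b\big)^{s},
\end{equation*}
where the last step uses independence of the keep-indicators so that $\E[b^{|Z_p|}]=\prod_j(1-a_j+a_j b)$, together with the monotonicity in each keep-probability $a_j\ge q^{k-1}$ (since $b<1$, higher survival only helps, so the worst case is $a_j=q^{k-1}$).

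It then only remains to solve the resulting inequality. Setting the right-hand side to at most $\delta/(2k)$ and taking logarithms gives $s\log\!\big(1/(1-q^{k-1}+q^{k-1}b)\big)\ge \log(2k)+\log(m+1)+\log(1/\delta)$, which is exactly $s\ge\etaran$; the factor $2k$ (rather than $k$) reserves half of the total failure budget for the Phase-2 binary search and half for the $k$ center estimates under a later union bound.

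The step I expect to be the main obstacle is justifying that $Z_p$ is an \emph{unbiased} uniform subsample of $C_p$, i.e.\ that the not-sure erasures are independent of where the point sits in the cluster; if the abstention probability could depend on a point's position, conditioning on survival would distort the distribution of the survivors and bias $\mu_p'$. I would argue this from the random-weak model's erasure-channel interpretation (each same-cluster query abstains independently with probability at most $1-q$, irrespective of the points), which makes the keep-indicators independent of the sampled vectors and legitimizes both the conditioning on $|Z_p|=\ell$ and the product form above. Two minor technical points to dispatch along the way are the use of uniform sampling so that Theorem \ref{thm:hoeff_vector} applies verbatim (with-replacement, or without-replacement at no worse concentration), and the implicit conditioning that earlier phases succeeded so that $C_p\subseteq\cS_i$ and a uniform draw from $\cS_i$ restricted to $C_p$ is uniform on $C_p$.
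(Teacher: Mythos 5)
Your proposal is correct and follows essentially the same route as the paper's proof: condition on the number of surviving (assignment-known) samples, apply the Vector Hoeffding inequality to the centered, $r(C_p)$-bounded vectors, average over the independent keep-indicators (your $\E\bigl[b^{|Z_p|}\bigr]$ is exactly the paper's explicit binomial sum), and substitute the worst-case assignment success probability $q^{k-1}$ by monotonicity. The only cosmetic difference is that the paper fixes i.i.d.\ Bernoulli($q_{assign}$) erasures and then argues the bound is decreasing in $q_{assign}$, whereas you allow heterogeneous keep-probabilities $a_j \geq q^{k-1}$ from the start; both arguments rest on the same independence-of-erasures modeling assumption that you rightly flag.
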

\begin{proof}
	Let $C_p = \{x_1,x_2,\cdots,x_{n_p}\}\subset \cX$ without loss of generality ($|C_p|=n_p$). Let $\{X_i\}_{i=1}^{s}$ be i.i.d. random vectors having values $x_j\in C_p$ with probability $\frac{1}{n_p}$ for any $j\in[n_p]$. $X_i$ represents a point randomly sampled from the cluster $C_p$. Also, let $\{\xi_i\}_{i=1}^{s}$ be i.i.d. random variables having $1$ with probability $q_{assign}$ and $0$ with probability $1-q_{assign}$, which are independent of $X_i$'s. Note that $\{\xi_i\}_{i=1}^s$ indicate the cluster-assignment queries where an oracle succeeds in the assignment with probability $q_{assign}$. Then a sample mean using only assignment-known data points from $s$ samples can be represented as follows:
	\begin{equation*}
	\frac{1}{\sum_{j=1}^s \xi_j}\sum_{i=1}^s X_i\xi_i
	\end{equation*}
	
	Now, define a new random vector $Y_i=X_i-\mu_p$ for any $i \in [s]$. Then, $\E[Y_i]=0$, and its $\ell_2$ norm is bounded as $\|Y_i\|_2=\|(X_i-\mu_p)\|_2 = d(X_i,\mu_p) \leq r(C_p)$ by definition. By combining $\xi_i$ for the chance of having not-sure samples, we can achieve an upper bound of the probability of the sample mean being not close to the true mean.
	\begin{align*}
	P\left(d(\mu_p',\mu_p)>\epsilon r(C_p)\right)&=P\left(\left\|\frac{1}{\sum_j \xi_j}\sum_{i=1}^s X_i\xi_i - \mu_p\right\|_2 >\epsilon r(C_p)\right)\\
	&=P\left(\left\|\frac{1}{\sum_j \xi_j}\sum_{i=1}^s Y_i\xi_i \right\|_2 >\epsilon r(C_p)\right)\\
	&=\sum_{\ell=0}^{s}P\left(\left\|\frac{1}{\sum_j \xi_j}\sum_{i=1}^s Y_i\xi_i \right\|_2 >\epsilon r(C_p)\middle|~\sum_{j=1}^s\xi_j=\ell\right)P\left( \sum_{j=1}^s\xi_j=\ell \right)\\
	&=\sum_{\ell=1}^{s}P\left(\left\|\frac{1}{\ell}\sum_{i=1}^\ell Y_i \right\|_2 >\epsilon r(C_p)\right)P\left( \sum_{j=1}^s\xi_j=\ell \right)\\
	&\leq \sum_{\ell=1}^{s} (m+1) e^{-\ell\epsilon^2/2}\binom{s}{\ell}q_{assign}^\ell(1-q_{assign})^{s-\ell}\\
	&\leq (m+1)\left(1-q_{assign}+e^{-\epsilon^2/2}q_{assign} \right)^s
	\end{align*}
	
	The forth equality holds as $\xi_i$ and $Y_i$ are independent, and $Y_i$'s are i.i.d. random vectors. Then the first inequality can be shown by applying Theorem \ref{thm:hoeff_vector}, or Vector Hoeffding's inequality. As $\epsilon\leq\frac{\gamma-1}{2}$, we can conclude that if a number of samples $s$ from the cluster $C_p$ including not-sure ones is at least $\etaass$, then $P\left(d(\mu_p,\mu_p')>\epsilon r(C_p)\right) \leq \frac{\delta}{2k}$.
	
	Also, the last equation is a decreasing function of $q_{assign}$ and therefore can be upper bounded by replacing $q_{assign}$ with the lower bound of the cluster-assignment success probability. This concludes the proof because we showed that $q_{assign}\geq q^{k-1}$, and the sufficient number of samples including not-sure for the guarantee becomes $\etaran$.
\end{proof}

The sampling number stated in Lemma \ref{lemma:distance_ran} is a generalized version of the original same-cluster query case. If queries are not weak, i.e. $q=1$, and the target probability is $\delta/k$, you can achieve the sampling complexity $\etaorg$, which is required to have a close empirical mean using a perfect oracle.

\begin{reptheorem}{thm:optimal_cover_ran}
	For given data $\cX$ and a distance metric $d(\cdot,\cdot)$, let $\cC$ be a center-based clustering with the $\gamma$-margin property. Let $\delta \in (0,1)$ and $\gamma>1$. If parameters $(\eta,\beta)$ for the sampling satisfy $\eta\geq \etaran$ and $\beta \geq \betaran$, then combination of Algorithm \ref{alg:weak_SSAC_ran} and \ref{alg:rBinary} outputs the oracle's clustering $\cC$ with probability at least $1-\delta$.
\end{reptheorem}
\begin{proof}
	For $i\in [k]$, the phase 1 of Algorithm \ref{alg:weak_SSAC_ran} samples $r=\lceil k\eta \rceil$ points from the set $\cS_i$. Let $C_p$ be a cluster corresponds to the sample set $Z_p$. Then, at least $\eta$ number of cluster-assignment queries, including not-sure outcomes, are processed related to $C_p$ cluster by the pigeonhole principle. Let's elaborate on this claim. If we sample $\lceil k\eta \rceil$ data, there exists both cluster assignment-known points and not-sure ones. By matching not-sure data to each $Z_i$ proportional to $|Z_i|$, it can be concluded that at least $\eta$ points are sampled from one class with the chance of failure. Then Lemma \ref{lemma:close_center} and \ref{lemma:distance_ran} ensures that a sample mean $\mu_p'$ constructed by the algorithm satisfies the property $d(x,\mu_p')<d(y,\mu_p')$ for all $x\in C_p$ and $y\in \cX\setminus C_p$ with probability at least $1-\frac{\delta}{2k}$.
	
	In Phase 2, a binary search algorithm can estimate the radius $r_i'=\max_{x\in C_p}\{d(x,\mu_p')\}$ of a cluster $C_p$ from $\mu_p'$ with $O(\log|\cS_i|)$ same-cluster queries if an oracle is perfect. However, the binary search fails if at least one search step receives not-sure output from the weak oracle. The worst case probability of a failure in the search step can be calculated as $(1-q)^\beta$. By applying union bound and use $|\cS_i|\leq n$, we can conclude that Algorithm \ref{alg:rBinary} recovers the correct $r_i'$ with probability at least $1-\frac{\delta}{2k}$ if $\beta \geq \betaran$. Note that $\lim_{q\rightarrow1^-}\log\left(\frac{1}{1-q}\right)=\infty$ and condition becomes $\beta\geq 0$. This shows the generality of our result as a same-cluster query case with a perfect oracle requires only 1 query per search.
	
	By combining the above two results, we can say that the output $C_p'$ of each iteration in the Algorithm \ref{alg:weak_SSAC_ran} is a perfect recovery of $C_p$ with probability at least $1-\frac{\delta}{k}$. Again, union bound concludes the proof as iteration runs $k$ times, i.e. the SSAC algorithm with the modified binary search recovers a clustering $\cC$ of the oracle with probability at least $1-\delta$.
\end{proof}

Sufficient complexities of same-cluster queries and running time excluding queries for random-weak oracles can be calculated based on Theorem \ref{thm:optimal_cover_ran}.

\begin{corollary}\label{cor:complexity_ran}
	Let the setting be as in Theorem \ref{thm:optimal_cover_ran} and parameters $\eta$ and $\beta$ are set to be minimum sufficient values. Then the query and computational complexity for the combination of Algorithm \ref{alg:weak_SSAC_ran} and \ref{alg:rBinary} are as follows:
	\vspace{-.5em}
	\begin{itemize}
		\item[-] $q$-weak same-cluster queries:\\
		$\text{\hspace{1em}}O\left(\frac{\log k +\log\log n + \log(1/\delta)}{\log(1/(1-q))}k\log n+ k^2 \frac{\log k+\log m +\log(1/\delta)}{\log\left(1/(1-q^{k-1}+q^{k-1} e^{-(\gamma-1)^2})\right)}\right)$
		\item[-] Running time excluding queries: $O(kmn+kn\log n)$
	\end{itemize}
\end{corollary}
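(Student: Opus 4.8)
The plan is to treat the two cost sources of Algorithm~\ref{alg:weak_SSAC_ran} separately---Phase~1, which estimates a center through cluster-assignment queries, and Phase~2, which estimates a radius through the repeated binary search of Algorithm~\ref{alg:rBinary}---bound the work each contributes in a single outer iteration, and then multiply by the $k$ iterations of the outer loop. Throughout I would set $\eta$ and $\beta$ to their minimal admissible values $\etaran$ and $\betaran$ from Theorem~\ref{thm:optimal_cover_ran}, so that every per-iteration count collapses to one of the two summands once the parameters are substituted. Because each quantity is just an aggregate of elementary per-iteration work, the argument is essentially careful bookkeeping rather than a new idea.

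For the query count I would first handle Phase~2, which I am most confident about: the binary search over the sorted set $\hat{\cS_i}$ has depth $O(\log|\cS_i|)=O(\log n)$, and each Search step of Algorithm~\ref{alg:rBinary} issues $\beta$ weak same-cluster queries, so a single iteration costs $O(\beta\log n)$; summing over the $k$ iterations and inserting $\beta=\betaran$ reproduces the first summand exactly (up to $\log(2k)=\Theta(\log k)$). For Phase~1, each iteration draws $r=\lceil k\eta\rceil$ samples and resolves each with one weak pairwise cluster-assignment query (Definition~\ref{def:weak_assign_query}), giving $O(k^2\eta)$ such queries over the whole run; substituting $\eta=\etaran$ yields the second summand. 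Adding the two contributions gives the stated query complexity.

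For the running time excluding queries I would argue that Phase~2 dominates. In iteration $i$, forming $\mu_p'$ costs $O(|Z_p|\,m)$, evaluating $d(x,\mu_p')$ for every $x\in\cS_i$ costs $O(|\cS_i|\,m)=O(nm)$, and the ensuing sort of $\hat{\cS_i}$ costs $O(n\log n)$, while the binary search itself adds only $O(\log n)$ comparisons. Using $|\cS_i|\le n$ and $|Z_p|\le r\le n$, the mean computation is dominated by the distance evaluations, and summing over the $k$ iterations leaves $O(kmn)$ from the distances and $O(kn\log n)$ from the sorts, i.e.\ $O(kmn+kn\log n)$.

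The work is not conceptually deep, so the main obstacle is accurate accounting. I must verify that the minimal sufficient $\eta,\beta$ from Theorem~\ref{thm:optimal_cover_ran} substitute cleanly into the two logarithmic denominators; that the ``including not-sure'' samples counted by the pigeonhole step in the proof of Theorem~\ref{thm:optimal_cover_ran} leave the per-iteration sample size at $r=\lceil k\eta\rceil$; and---most delicately---that the inner factor of $k$ arising because each cluster-assignment query expands into as many as $k$ weak same-cluster queries is charged consistently, so that no lower-order term (the Phase~1 mean computation, or the $O(k\log n)$ binary-search comparisons) is mistakenly allowed to dominate either bound.
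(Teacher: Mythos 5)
Your proposal takes essentially the same route as the paper's proof, which likewise charges each of the $k$ iterations $O(k\eta)$ queries for Phase~1 and $O(\beta\log n)$ queries for the binary search, plus $O(mn)$ for distance evaluations and $O(n\log n)$ for sorting, and then multiplies by $k$. The factor-of-$k$ subtlety you flag (each cluster-assignment query expanding into up to $k$ same-cluster queries) is handled in the paper exactly as in your accounting: its proof simply counts Phase~1 at one query per sampled point, i.e.\ $O(k\eta)$ per iteration, so your bookkeeping and the paper's agree.
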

\begin{proof}
	For each iteration with given sampling parameters $(\eta,\beta)$, phase 1 requires $O(k\eta)$ weak same-cluster queries, and phase 2 takes $O(\beta \log n)$ queries. Also, distance calculation and sorting in phase 2 can be done in $O(mn)$ and $O(n\log n)$ respectively per each iteration.
\end{proof}

\subsection{Proofs for Distance-Weak Oracles}\label{subsec:append_proof_dist}
Before we prove the results on distance-weak oracles, additional bounds on the pairs of data points are stated. Proof of Proposition \ref{prop:bound_given} is straightforward by using the definition and the triangle inequality.

\begin{prop}\label{prop:bound_given}
	If a clustering $\cC$ of data $\cX$ satisfies the $\gamma$-margin property and has a maximum radius $r(C_i)$, the following conditions hold:
	\vspace{-.5em}
	\begin{itemize}
		\item[(a)] $d(x,y)>(\gamma-1)\max\{d(x,\mu_i),d(y,\mu_j)\}$, for all
		$x\in C_i, y\in C_j, i\neq j$
		\item[(b)] $d(x,y)\leq 2r(C_i)$,  for all $x,y\in C_i$
	\end{itemize}
\end{prop}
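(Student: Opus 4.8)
The plan is to prove each of the two bounds directly from the definitions of the $\gamma$-margin property (Definition \ref{def:gamma}) and the maximum radius $r(C_i)$, treating them as two independent elementary facts.

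\textbf{Part (b).} This is the easier claim. For any two points $x,y \in C_i$, I would simply apply the triangle inequality through the center $\mu_i$:
\begin{align*}
d(x,y) \leq d(x,\mu_i) + d(\mu_i,y).
\end{align*}
Since both $x$ and $y$ lie in $C_i$, the definition $r(C_i) \triangleq \max_{z\in C_i} d(z,\mu_i)$ gives $d(x,\mu_i) \leq r(C_i)$ and $d(y,\mu_i) \leq r(C_i)$. Substituting yields $d(x,y) \leq 2 r(C_i)$, which is exactly (b). This requires only that $d$ be a metric (so the triangle inequality holds) and the definition of $r(C_i)$.

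\textbf{Part (a).} Here I would use the $\gamma$-margin property, which states that for $x \in C_i$ and any $y \in \cX \setminus C_i$ we have $\gamma\, d(x,\mu_i) < d(y,\mu_i)$. The goal is to lower-bound the \emph{inter-point} distance $d(x,y)$ rather than a distance to a center, so the idea is again to route through $\mu_i$ via the triangle inequality $d(y,\mu_i) \leq d(y,x) + d(x,\mu_i)$, which rearranges to $d(x,y) \geq d(y,\mu_i) - d(x,\mu_i)$. Combining with the margin inequality $d(y,\mu_i) > \gamma\, d(x,\mu_i)$ gives
\begin{align*}
d(x,y) > \gamma\, d(x,\mu_i) - d(x,\mu_i) = (\gamma-1)\, d(x,\mu_i).
\end{align*}
By symmetry, swapping the roles of the two clusters (applying the margin property with center $\mu_j$ and the point $x \in \cX \setminus C_j$) yields $d(x,y) > (\gamma-1)\, d(y,\mu_j)$. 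Since $d(x,y)$ exceeds both $(\gamma-1)d(x,\mu_i)$ and $(\gamma-1)d(y,\mu_j)$, it exceeds their maximum, establishing (a).

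The only mild subtlety — and the step I would flag as the place to be careful rather than a genuine obstacle — is the symmetry argument in (a): one must verify that the $\gamma$-margin property can be invoked from the perspective of cluster $C_j$ as well, i.e. that $x \in C_j^{\,c}$ so that $\gamma\, d(y,\mu_j) < d(x,\mu_j)$ holds, and then repeat the triangle-inequality rearrangement with $\mu_j$ in place of $\mu_i$. Because the margin property is quantified over all clusters and all external points, this is immediate, so the proof is genuinely short; this is consistent with the paper's remark that it follows straightforwardly from the definitions and the triangle inequality.
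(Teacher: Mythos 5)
Your proof is correct and follows the same route as the paper's: part (b) via the triangle inequality through $\mu_i$ together with the definition of $r(C_i)$, and part (a) by rearranging the triangle inequality and invoking the $\gamma$-margin property from each cluster's perspective (the symmetry step you flag is exactly the paper's ``similarly'' step). No gaps; nothing further is needed.
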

\begin{proof} For $x,y$ from different clusters,
	\begin{align*}
	d(x,y)&\geq d(y,\mu_i)-d(x,\mu_i)\hspace{2em}(\because \text{Triangle inquality})\\
	&>\gamma d(x,\mu_i)-d(x,\mu_i)\hspace{2em}(\because \text{Definition \ref{def:gamma}})\\
	&=(\gamma-1)d(x,\mu_i)
	\end{align*}
	Similarly, $d(x,y)>(\gamma-1)d(y,\mu_j)$, which gives (a).
	
	Also, if $x,y$ are from the same cluster $C_i$,
	\begin{align*}
	d(x,y)\leq d(x,\mu_i)+d(y,\mu_i)&\leq 2r(C_i)\hspace{2em}(\because r(C_i)=\max_{x \in C_i}d(x,\mu_i))
	\end{align*}
	which proves (b).
\end{proof}

These inequalities imply feasible ranges of parameters $\rho$ and $\nu$ for the quality of distance-weak oracles, $\rho \in (0,1]$ and $\nu\geq \gamma$. Now let's prove our main theoretical results on ditance-weak oracles.

\begin{reptheorem}{thm:optimal_cover_local}
	For given data $\cX$ and a distance metric $d(\cdot,\cdot)$, let $\cC$ be a center-based clustering with the $\gamma$-margin property. Let $\delta \in (0,1)$, $\gamma>1$, $\rho\in(0,1]$, $\epsilon\leq\frac{\gamma-1}{2}$, and $\gamma\leq \nu \leq \gamma+1$. If a cluster $C_i$ contains at least one point $x^*\in C_i$ satisfying $d(x^*,\mu_i)<\left(\min\{2\rho-1,\gamma-\nu+1\}-2\epsilon\right) r(C_i)$ for all $i\in[k]$, then combination of Algorithm \ref{alg:weak_SSAC_ran} and \ref{alg:dBinary} outputs the oracle's clustering $\cC$ with probability at least $1-\delta$ by asking weak same-cluster queries to a $(\nu,\rho)$ local distance-weak oracle.
\end{reptheorem}
\begin{proof}
	First, we show that the local distance-weak oracle always gives yes or no answer if a given weak-query includes any $x^*\in C_i$ located close enough to the center $\mu_i$. Let $x^*\in C_i$ be a data point satisfying $d(x^*,\mu_i)<\min\{2\rho-1,\gamma-\nu+1\}\cdot r(C_i)$, and an oracle is $(\nu,\rho)$ local distance-weak. If a weak query $Q(x^*,y)$ contains $y\in C_j,~i\neq j$, then,
	\begin{align*}
	d(x^*,y)&\geq d(y,\mu_i)-d(x^*,\mu_i)>\gamma r(C_i)- (\gamma-\nu+1) r(C_i)>(\nu-1)d(x^*,\mu_i)\\
	\Rightarrow d(x^*,y)&>\gamma(\nu-1)d(y,\mu_j)>(\nu-1)d(y,\mu_j)
	\end{align*}
	Moreover, if $y\in C_i$, then,
	\begin{equation*}
	d(x^*,y)\leq d(x^*,\mu_i) + d(y,\mu_i)\leq 2\rho r(C_i)
	\end{equation*}
	Therefore, two sufficient conditions for a local distance-weak oracle stated in Definition \ref{def:weak_oracle_dist_local} are violated. Hence, any weak same-cluster query including $x^*$ can be answered by the oracle without any uncertainty. Note that additional margin of $2\epsilon$ is not used at this point, which will give a higher chance of estimating a good empirical mean.
	
	Let's define $G_i(c) \subseteq C_i$ for each cluster $C_i$ as a set of data points close to center $\mu_i$,
	\begin{align*}
	G_i(c)\triangleq\{x\in C_i:d(x,\mu_i)< c\cdot r(C_i)\}
	\end{align*}
	We know that if $Q(x,y)=0$, none of the given two points $x$ and $y$ belongs to these sets $\{G_i(c)\}_{i\in [k]}$ for $c=\min\{2\rho-1,\gamma-\nu+1\}$. Let's define a probability $q_d$ as:
	\begin{equation*}
	q_d\triangleq \min_{i\in[k]}\frac{|G_i(\min\{2\rho-1,\gamma-\nu+1\})|}{|C_i|}
	\end{equation*}
	Then for a randomly sampled pair $(x,y)$ from $\cX$, the probability of having not-sure or $Q(x,y)=0$ is at most $(1-q_d)^2$. Therefore, we can use the result of Section \ref{sec:weak_oracle_ran} by substituting $q=1-(1-q_d)^2$. Especially, if a sample complexity satisfies $\eta>\etadistlocal$ with $q=1-(1-q_d)^2$ for the phase 1, Lemma \ref{lemma:distance_ran} implies that the sample mean $\mu_p'$ is a good approximation of the cluster $C_p$'s true center with probability at least $1-\frac{\delta}{k}$.
	
	However, local distance-weak oracle does not affect the binary search part as a good empirical mean is estimated before the phase 2. By Lemma \ref{lemma:close_center}, all points of cluster $C_p$ are close to $\mu_p'$. From the assumption, there exists a point $x^*\in C_i$ that is bounded as $d(x^*,\mu_i)<\left(\min\{2\rho-1,\gamma-\nu+1\}-2\epsilon\right) r(C_i)$. Now, consider the closest point $x'$ from $\mu_p'$, i.e. $d(x',\mu_p')=\min_x d(x,\mu_p')$. Then this point $x'$ is in $G_i(\min\{2\rho-1,\gamma-\nu+1\})$, which is what we want.
	\begin{align*}
	d(x',\mu_p)&\leq d(x',\mu_p')+d(\mu_p',\mu_p)\\
	&\leq d(x^*,\mu_p')+d(\mu_p',\mu_p)\hspace{2em}(\because d(x',\mu_p')=\min_x d(x,\mu_p'))\\
	&\leq d(x^*,\mu_p)+2d(\mu_p',\mu_p)\\
	&\leq \min\{2\rho-1,\gamma-\nu+1\}r(C_p)\hspace{2em}(\because d(\mu_p',\mu_p)\leq\epsilon r(C_p) \text{ by Lemma \ref{lemma:distance_ran}})
	\end{align*}
	Therefore, using the closest point $x'$ from $\mu_p'$ as in Algorithm \ref{alg:dBinary} guarantees a yes or no answer from every weak same-cluster query. This concludes the proof as remaining steps are similar to Theorem \ref{thm:optimal_cover_ran}.
\end{proof}

\begin{reptheorem}{thm:optimal_cover_global}
	For given data $\cX$ and a distance metric $d(\cdot,\cdot)$, let $\cC$ be a center-based clustering with the $\gamma$-margin property. Let $\delta \in (0,1)$, $\gamma>1$, $\rho\in(0,1]$, and $\epsilon\leq\frac{\gamma-1}{2}$. If a cluster $C_i$ contains at least one point $x^*\in C_i$ satisfying $d(x^*,\mu_i)<\left(2\rho-1-2\epsilon\right) r(C_i)$ for all $i\in[k]$, then combination of Algorithm \ref{alg:weak_SSAC_ran} and \ref{alg:dBinary} outputs the oracle's clustering $\cC$ with probability at least $1-\delta$, by asking weak same-cluster queries to a $\rho$ global distance-weak oracle.
\end{reptheorem}
\begin{proof}
	Recall the definition of $G_i(c) \subseteq C_i$ provided in the proof of Theorem \ref{thm:optimal_cover_local}. Since $2\rho-1\leq \rho$, we can define a probability $q_d'$ with $G_i(2\rho-1)$.
	\begin{equation*}
	q_d'\triangleq \min_{i\in[k]}\frac{|G_i(2\rho-1)|}{|C_i|}
	\end{equation*}
	Then $P\left(Q(x,y)=0\right)\leq 1-{q_d'}^2$ because $x,y$ should be all in the sets $\{G_i(c)\}_{i\in[k]}$. Therefore, a same result from Theorem \ref{thm:optimal_cover_local} for the phase 1 can be obtained with $q={q_d'}^2$. Note that, $d(x^*,y)\leq 2\rho r(C_i)$ if $x^*\in G_i(2\rho-1)$ and $y\in C_i$. Then, the closest point $x'$ from $\mu_p'$ is also in $G_i(2\rho-1)$, and it can still remove the uncertainty in binary search. A global distance-weak oracle will always give $Q(x',y)=1$ for $x',y\in C_p$ and we know that $x'$ and $y$ are in different clusters otherwise. This concludes the proof.
\end{proof}

\section{Unified Binary Search Algorithm for Weak Oracles}\label{sec:append_unified_binary}
Algorithm \ref{alg:unified_Binary} is a unified version of our algorithm which can deal with both random-weak oracles and distance-based weak oracles. If the dataset satisfies sufficient conditions stated in Theorem \ref{thm:optimal_cover_local}, same guarantees can be achieved as the algorithm will always succeed in same-cluster queries. Also, a random-weak oracle case can be covered with the equivalent result since we are sampling $\beta$ points from $Z_p$. Although the algorithm does not return fail, guarantees are not affected by this difference.

\begin{algorithm} [ht]
	\caption{Unified-Weak BinarySearch}
	\label{alg:unified_Binary}
	\begin{algorithmic} [1] 
		\REQUIRE Sorted dataset $\hat{\cS_i}=\{x_1,\cdots,x_{|\hat{\cS_i}|}\}$ in increasing order of $d(x_j,\mu_p')$, an oracle for weak query $Q$, target cluster $p$, set of assignment-known points $Z_p$, empirical mean $\mu_p'$, and a sampling number $\beta \leq |Z_p|$.
		\STATE Standard binary search algorithm with following rules		
		\STATE \vspace{.5em}\textbf{- Search}($x_j\in\hat{\cS_i}$)\textbf{:}
		\STATE Select the point $x_1$ and use it for same-cluster queries
		\IF{$Q(x_1,x_j)=1$}
		\STATE Set left bound index as $j+1$
		\ELSIF{$Q(x_1,x_j)=-1$}
		\STATE Set right bound index as $j-1$
		\ELSE
		\STATE Sample $\beta-1$ points from $Z_p$. $B\subseteq Z_p,~|B|=\beta-1$
		\STATE Weak same-cluster query $Q(x_j,y)$, for all $y\in B$
		\IF{$x_j$ is in cluster $C_p$}
		\STATE Set left bound index as $j+1$
		\ELSE
		\STATE Set right bound index as $j-1$
		\ENDIF
		\ENDIF
		\STATE \vspace{.5em}\textbf{- Stop:}\hspace{.5em}Found the smallest index $j^*$ such that $x_{j^*}$ is not in $C_p$
		\ENSURE $r_i'=d(x_{j^*},\mu_p')$		
	\end{algorithmic}
\end{algorithm}

For the global distance-weak oracle, similar high probability result can be achieved, but $\beta$ times more query complexity is required in the binary search. This comes from the first \textit{else} clause in the Algorithm \ref{alg:unified_Binary}, as we cannot avoid processing all $\beta$ queries if we get a not-sure answer from $Q(x_1,x_j)$.

\newpage
\section{Additional Experimental Results}\label{sec:append_exp_results}
In this section, visualizations of the clustering results in Section \ref{sec:exp_results}, and additional experimental results are provided. First, experimental results in Section \ref{sec:exp_results} with $\gamma_{\min}=1.0$ and $\gamma_{\max}=1.1$ are provided in Table \ref{table:results}. Also, Figure \ref{fig:res_acc} depicts $Accuracy$ as a graph, and a histogram of $\gamma$ values generated in $N_{rep}=5000$ rounds can be seen in Figure \ref{fig:gamma_hist}.

\begin{figure}[ht]
	\centering
	\begin{subfigure}{.4\linewidth}
		\centering
		\includegraphics[width=\linewidth]{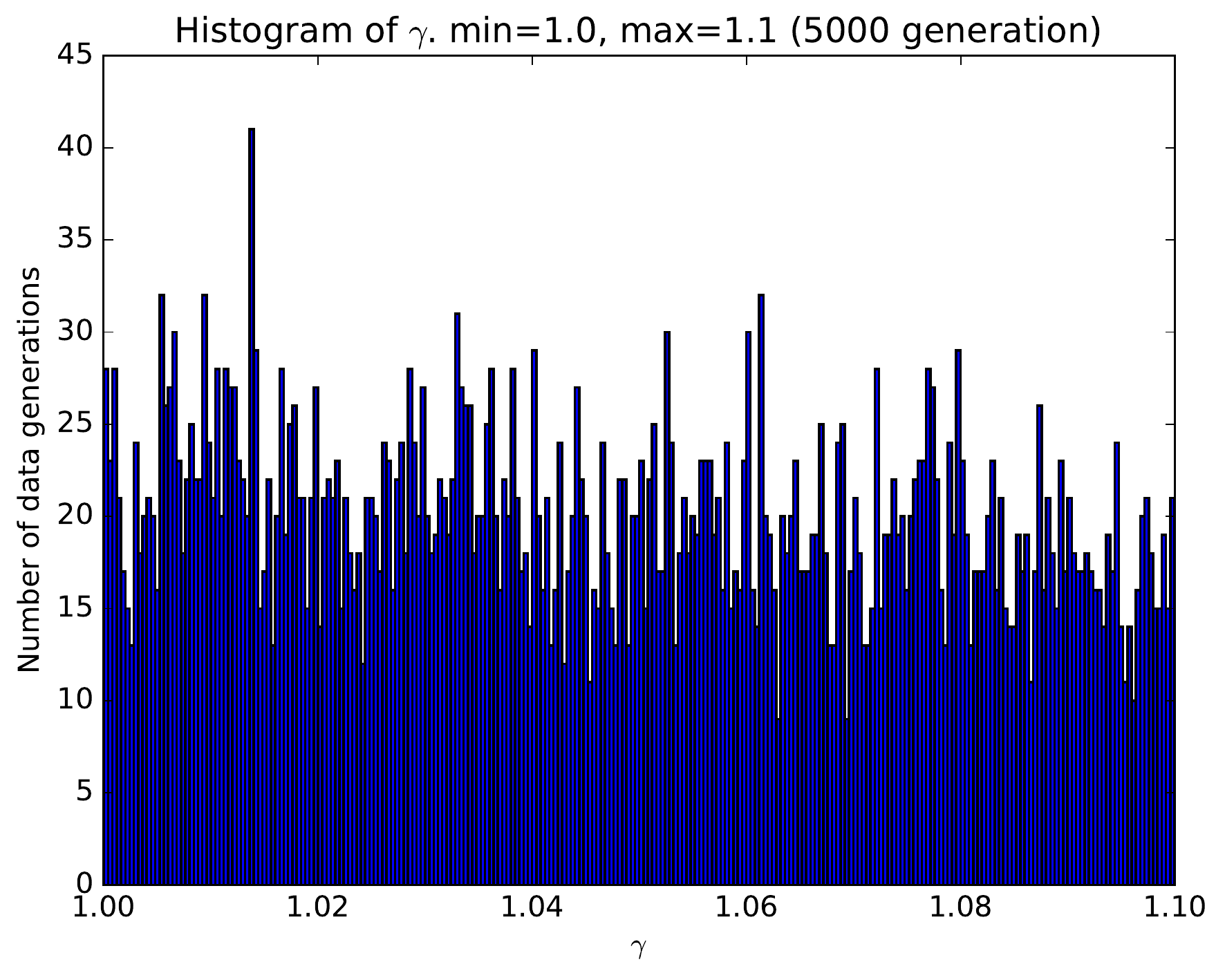}
		\caption{}\label{fig:gamma_hist}
	\end{subfigure}
	\hspace {2em}
	\begin{subfigure}{.4\linewidth}
		\includegraphics[width=\linewidth]{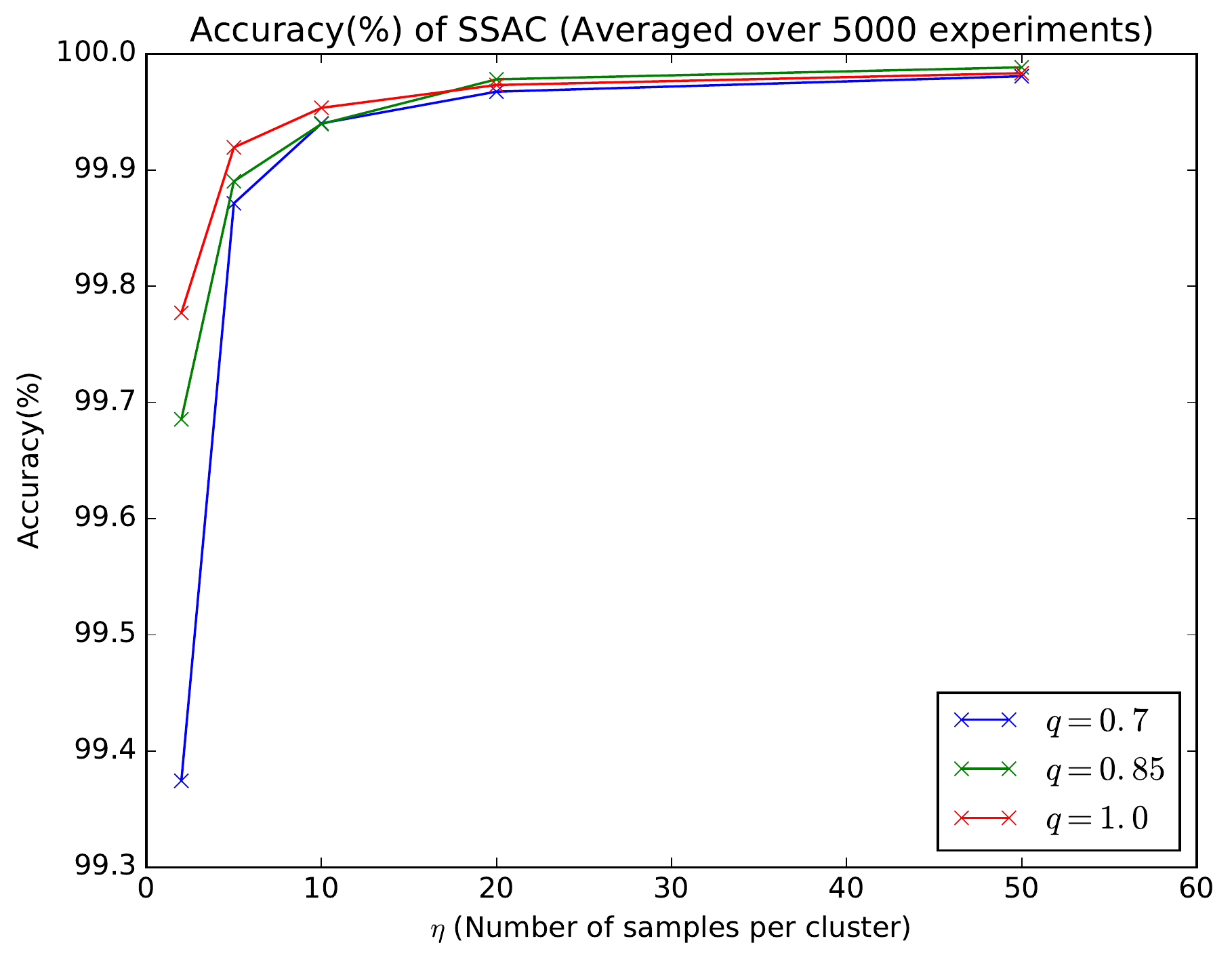}
		\caption{}\label{fig:res_acc}
	\end{subfigure}
	\caption{$\gamma_{\min}=1.0$, $\gamma_{\max}=1.1$, $m=2$. (\subref{fig:gamma_hist}) Histogram of $\gamma$ margins ($N_{rep}=5000$). \textit{x-axis}: $\gamma$ (Margin value), \textit{y-axis}: Normalized number of data generations corresponding to each $\gamma$. (\subref{fig:res_acc}) $Accuracy~(\%)$ of SSAC algorithm. Averaged over $N_{rep}=5000$ experiments. \textit{x-axis}: $\eta$ (Number of samples), \textit{y-axis}: $Accuracy~(\%)$.}
	\label{fig:res_append}
	\vskip -0.1in
\end{figure}

\begin{table}[ht]
	\begin{small}
		\centering
		\caption{$\gamma_{\min}=1.0$, $\gamma_{\max}=1.1$, $m=2$. (\textit{Left}) $Accuracy~(\%)$ of SSAC algorithm. Averaged over $N_{rep}=5000$ experiments. (\textit{Right}) $\# Failure$ of SSAC algorithm. Total sum over $N_{rep}=5000$ experiments.}
		\label{table:results}
		\vspace{1em}
		\begin{subtable}{.58\linewidth}
			\centering
			\begin{tabular}{|c|ccccc|}
				\hline
				\multirow{2}{*}{$q$}& \multicolumn{5}{c|}{$\eta$} \\
				& 2 & 5 & 10 & 20 & 50 \\
				\hline
				\hline
				0.70 & 99.374 & 99.871 & 99.940 & 99.967 & 99.981 \\
				0.85 & 99.685 & 99.890 & 99.940 & 99.978 & 99.988 \\
				1.00 & 99.777 & 99.919 & 99.953 & 99.973 & 99.983 \\
				\hline
			\end{tabular}
		\end{subtable}
		\begin{subtable}{.38\linewidth}
			\centering
			\begin{tabular}{|c|ccccc|}
				\hline
				\multirow{2}{*}{$q$}& \multicolumn{5}{c|}{$\eta$} \\
				& 2 & 5 & 10 & 20 & 50 \\
				\hline
				\hline
				0.70 & 52 &  5 & 2 & 1 & 1 \\
				0.85 & 14 &  6 & 3 & 0 & 0 \\
				1.00 &  8 &  2 & 2 & 1 & 1 \\
				\hline
			\end{tabular}
		\end{subtable}
	\end{small}
\end{table}

\subsection{Visualization of clustering results}\label{subsec:append_visualization}
Figure \ref{fig:res_q07}, \ref{fig:res_q85}, and \ref{fig:res_q10} visualize both the ground truth clustering of generated points and the recovered clustering by our algorithm. Also, cluster centers estimated in Phase 1 of SSAC algorithm are marked as white triangles. Each figure includes 5 subfigures to provide results on different values of $\eta$. Some points are assigned as cluster $0$, which represents the case where labels are not identified. This issue can happen when a cluster-assignment query fails in Phase 1, or an estimated radius cannot cover the whole points of a cluster due to the bad estimation of the center. Also, not-sure answers from same-cluster queries in Phase 2 can result in a shorter estimated radius.

\newpage
\begin{figure}[ht]
	\begin{center}
		\centerline{\includegraphics[width=.48\linewidth]{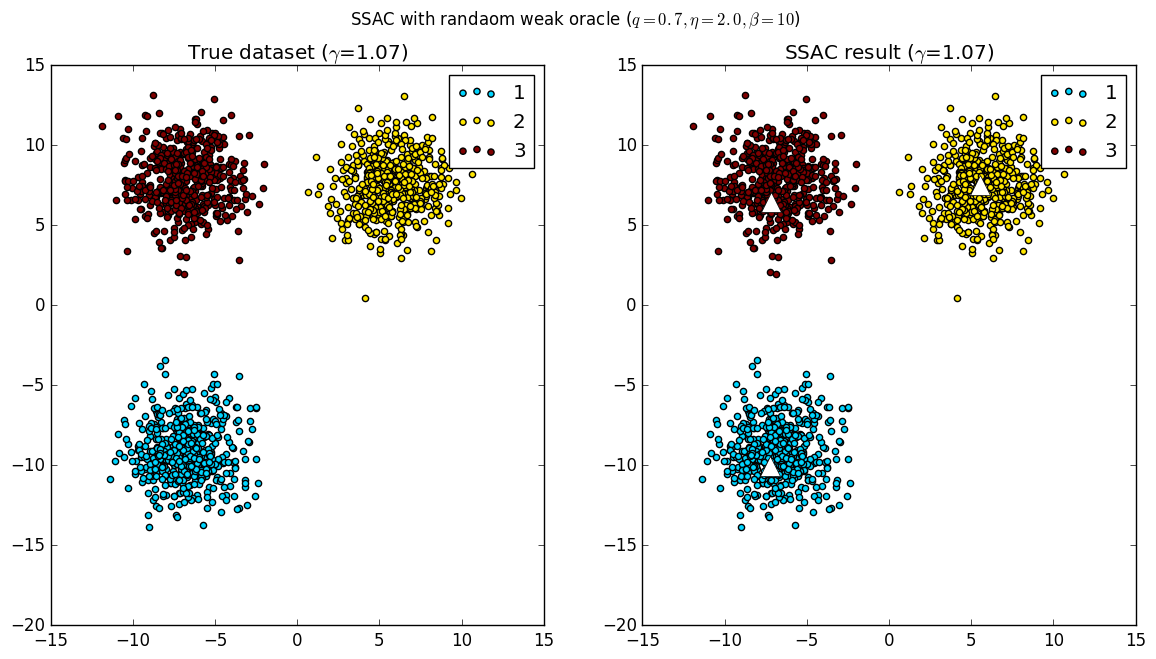}\hspace{1em}\includegraphics[width=.48\linewidth]{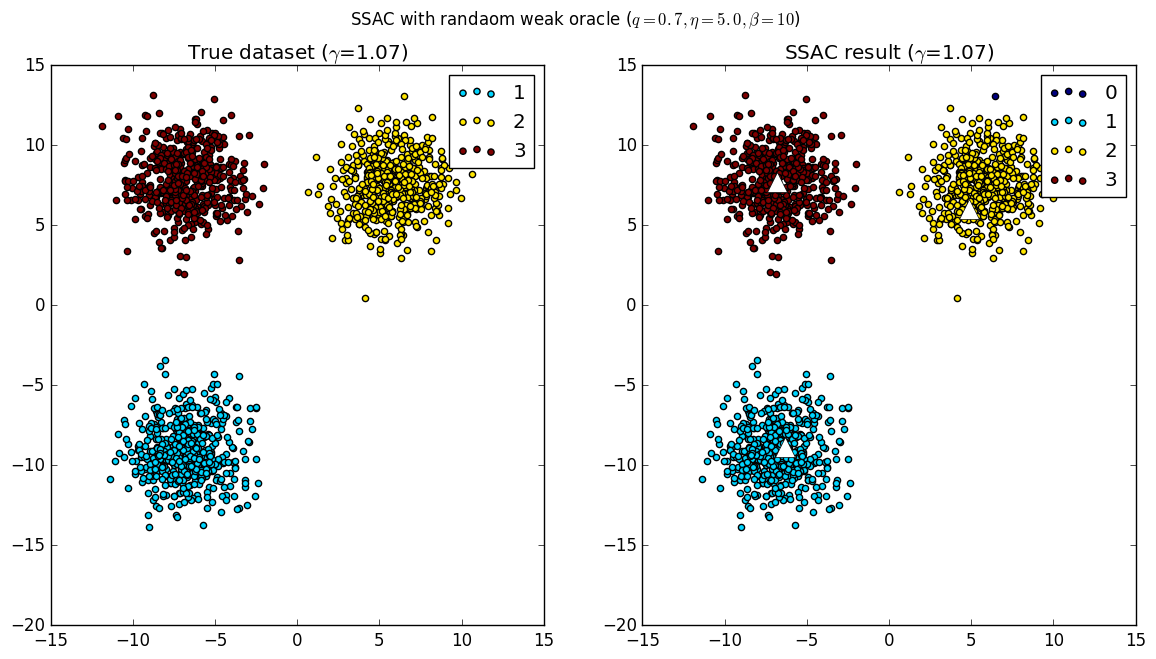}}
		\vspace{2em}
		\centerline{\includegraphics[width=.48\linewidth]{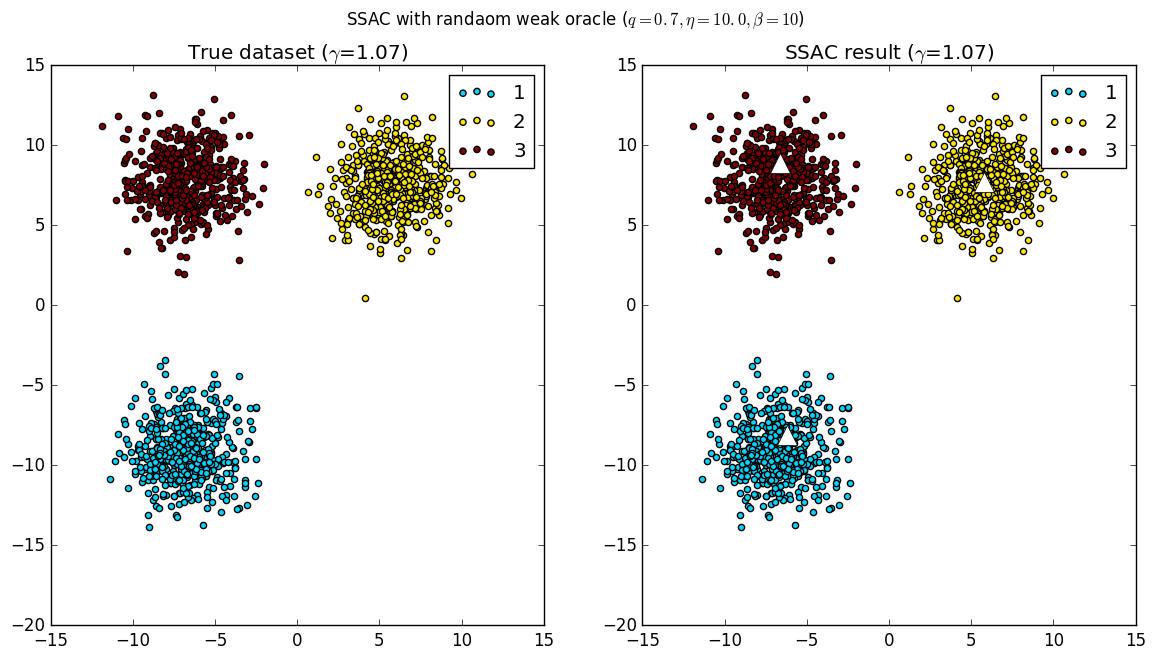}\hspace{1em}\includegraphics[width=.48\linewidth]{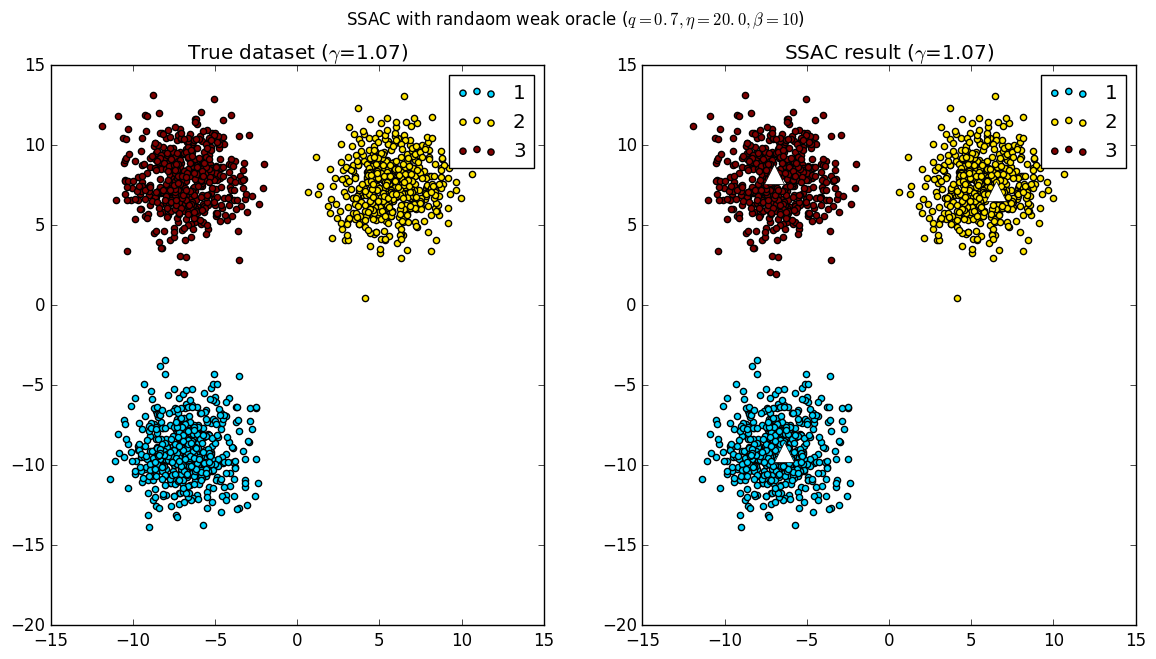}}
		\vspace{2em}
		\centerline{\includegraphics[width=.48\linewidth]{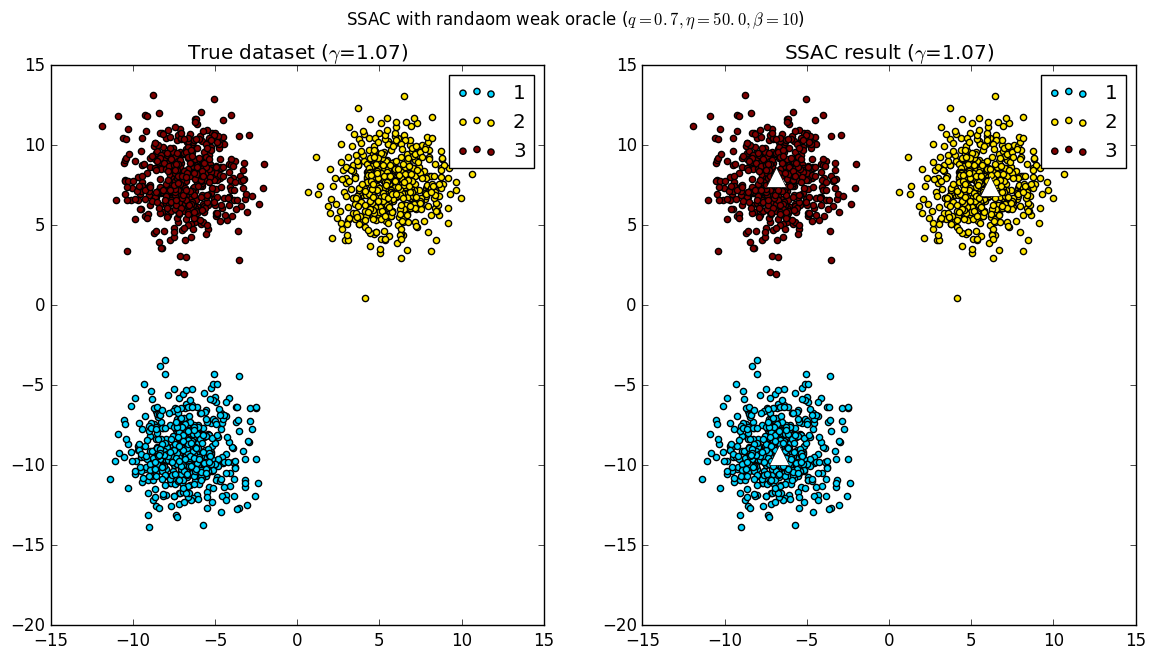}}
		\caption{Clustering results of SSAC algorithm with a $q$ random-weak oracle $(q=0.7)$. Five subfigures correspond to different $\eta$ values from $2$ to $50$ in order. In each subfigure, \textit{left} figure shows a ground truth dataset ($\gamma=1.07$), and \textit{right} figure shows the recovered clustering. White triangles represent cluster centers estimated from samples in Phase 1 of Algorithm \ref{alg:weak_SSAC_ran}.}
		\label{fig:res_q07}
	\end{center}
	\vskip -0.1in
\end{figure}
\newpage
\begin{figure}[ht]
	\begin{center}
		\centerline{\includegraphics[width=.48\linewidth]{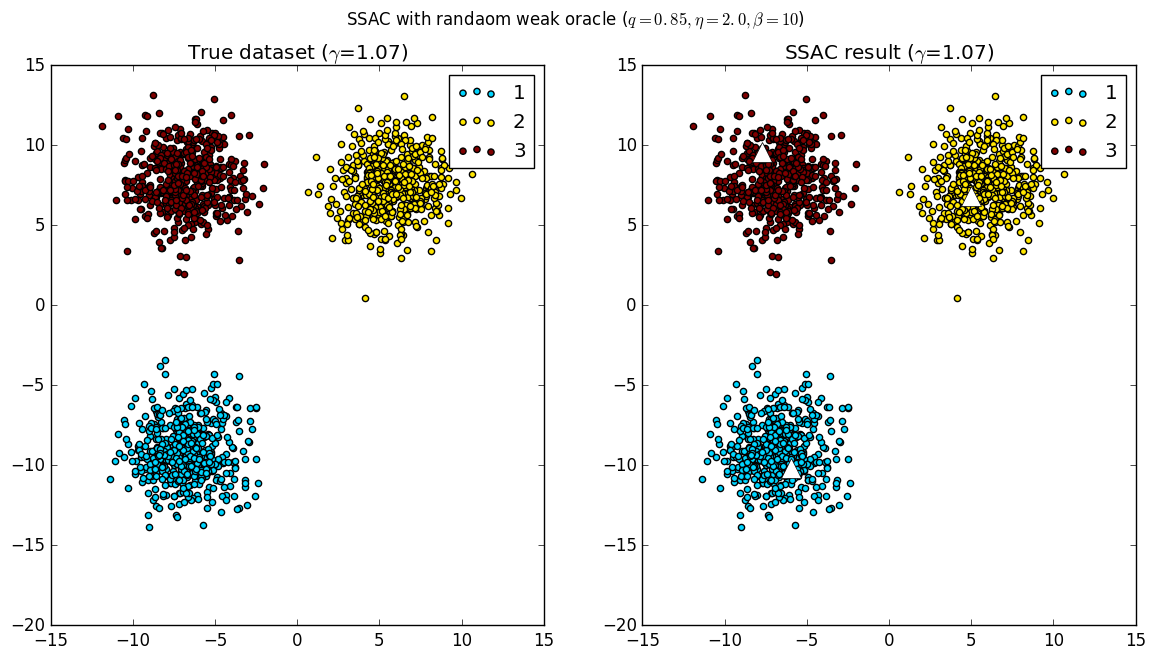}\hspace{1em}\includegraphics[width=.48\linewidth]{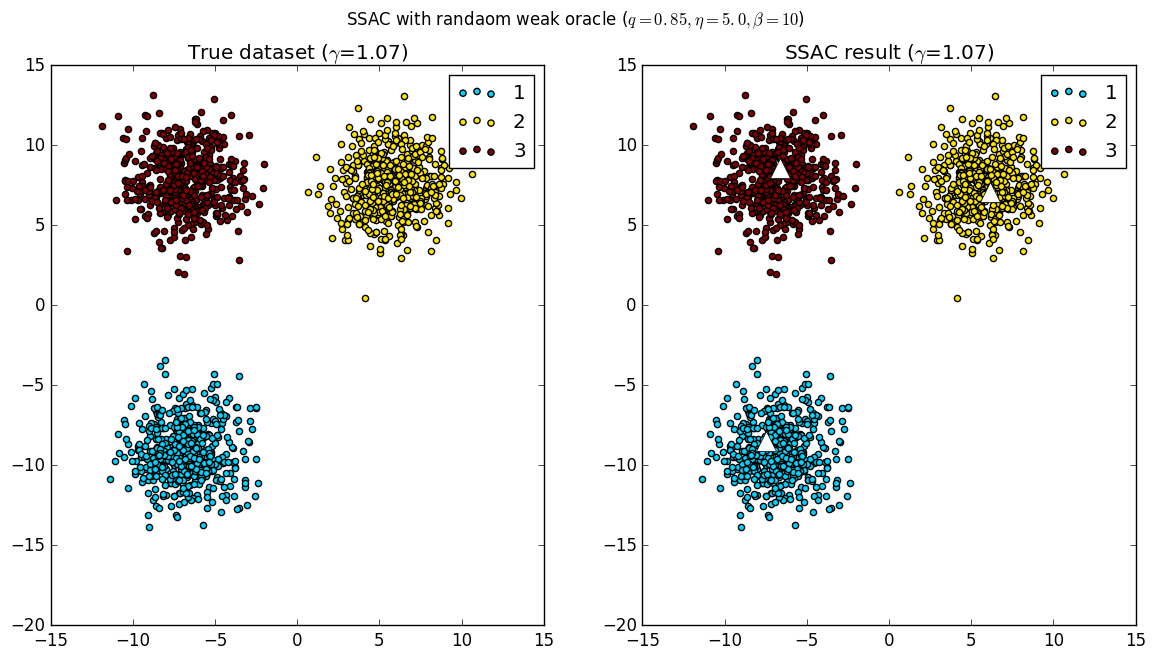}}
		\vspace{2em}
		\centerline{\includegraphics[width=.48\linewidth]{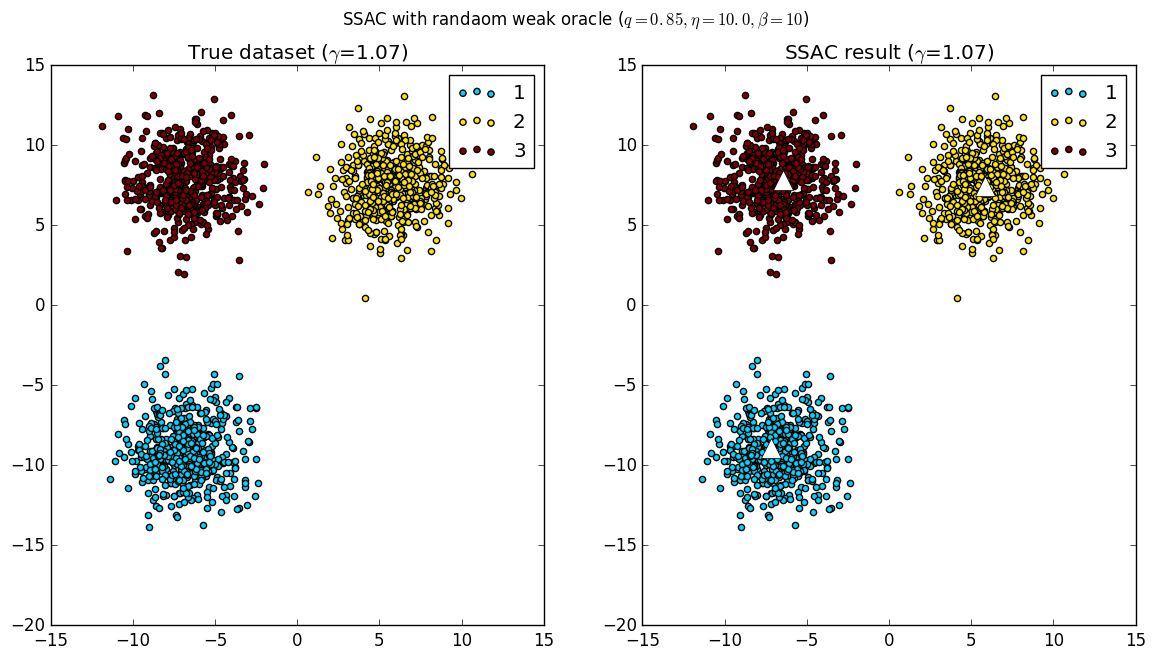}\hspace{1em}\includegraphics[width=.48\linewidth]{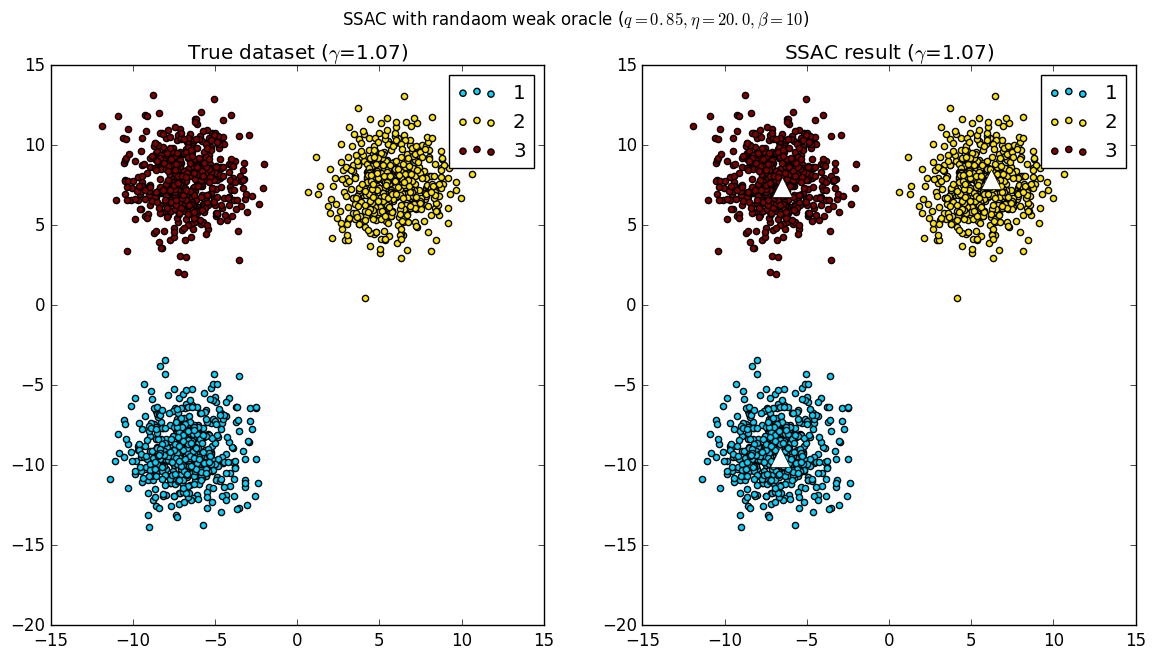}}
		\vspace{2em}
		\centerline{\includegraphics[width=.48\linewidth]{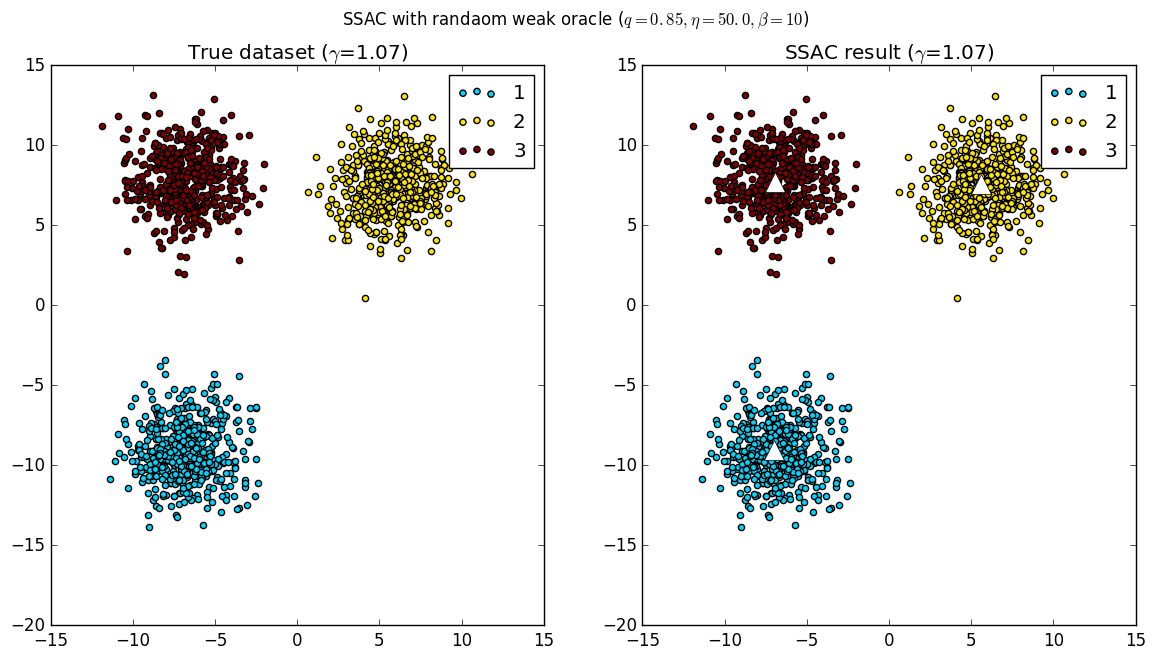}}
		\caption{Clustering results of SSAC algorithm with a $q$ random-weak oracle $(q=0.85)$. Five subfigures correspond to different $\eta$ values from $2$ to $50$ in order. In each subfigure, \textit{left} figure shows a ground truth dataset ($\gamma=1.07$), and \textit{right} figure shows the recovered clustering. White triangles represent cluster centers estimated from samples in Phase 1 of Algorithm \ref{alg:weak_SSAC_ran}.}
		\label{fig:res_q85}
	\end{center}
	\vskip -0.1in
\end{figure}
\newpage
\begin{figure}[ht]
	\begin{center}
		\centerline{\includegraphics[width=.48\linewidth]{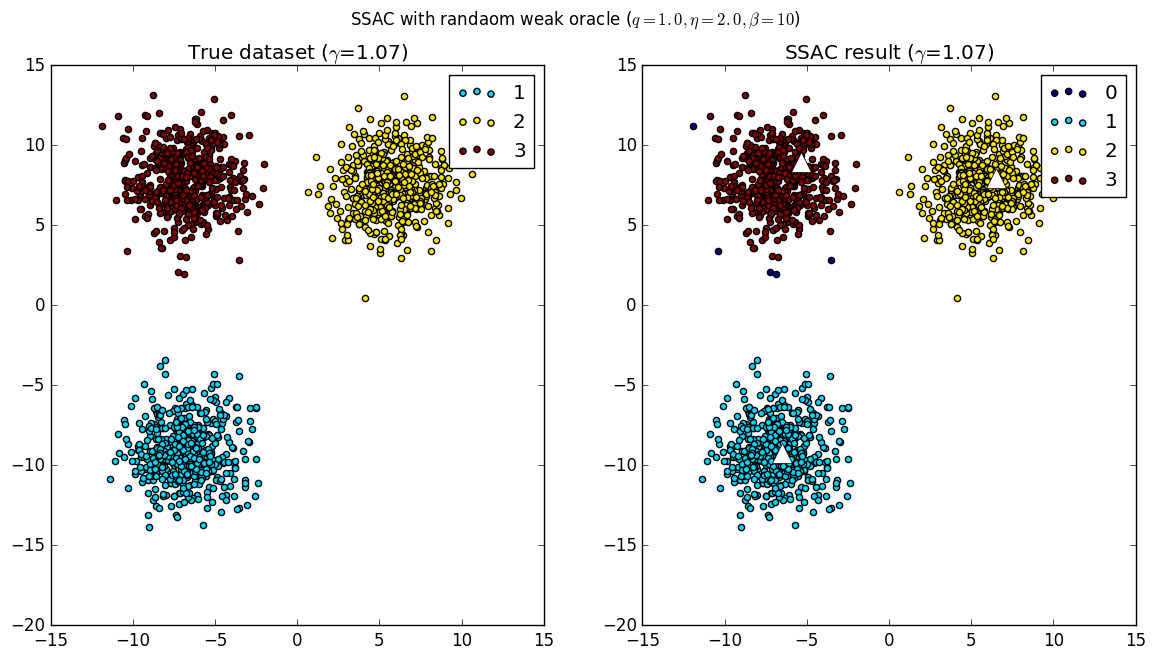}\hspace{1em}\includegraphics[width=.48\linewidth]{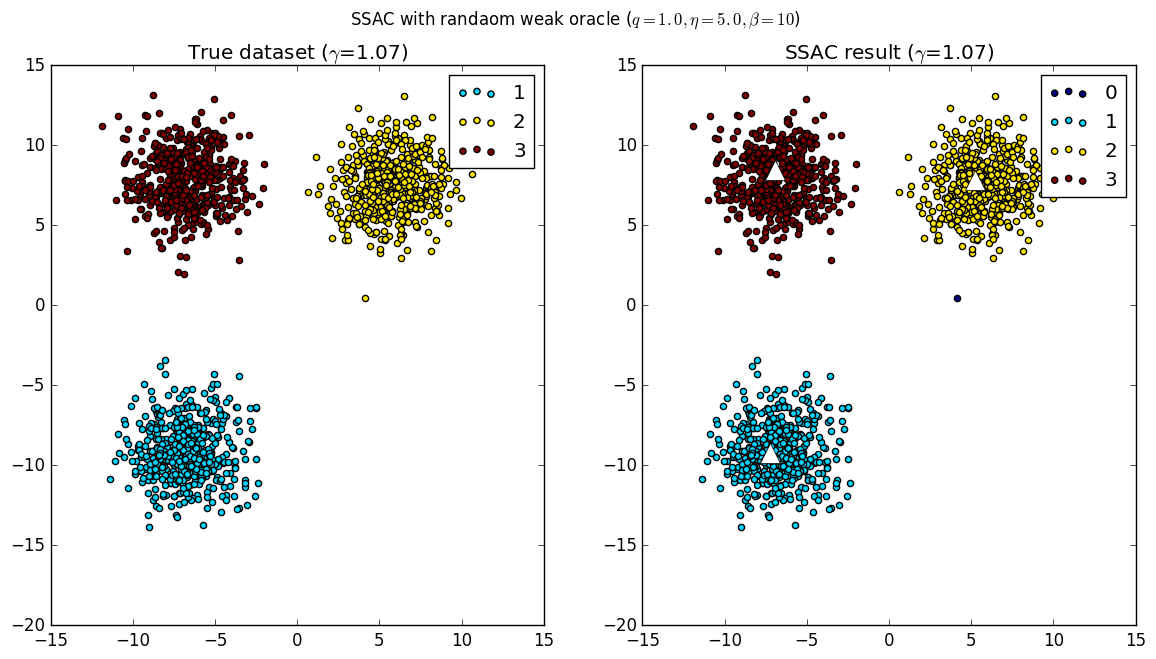}}
		\vspace{2em}
		\centerline{\includegraphics[width=.48\linewidth]{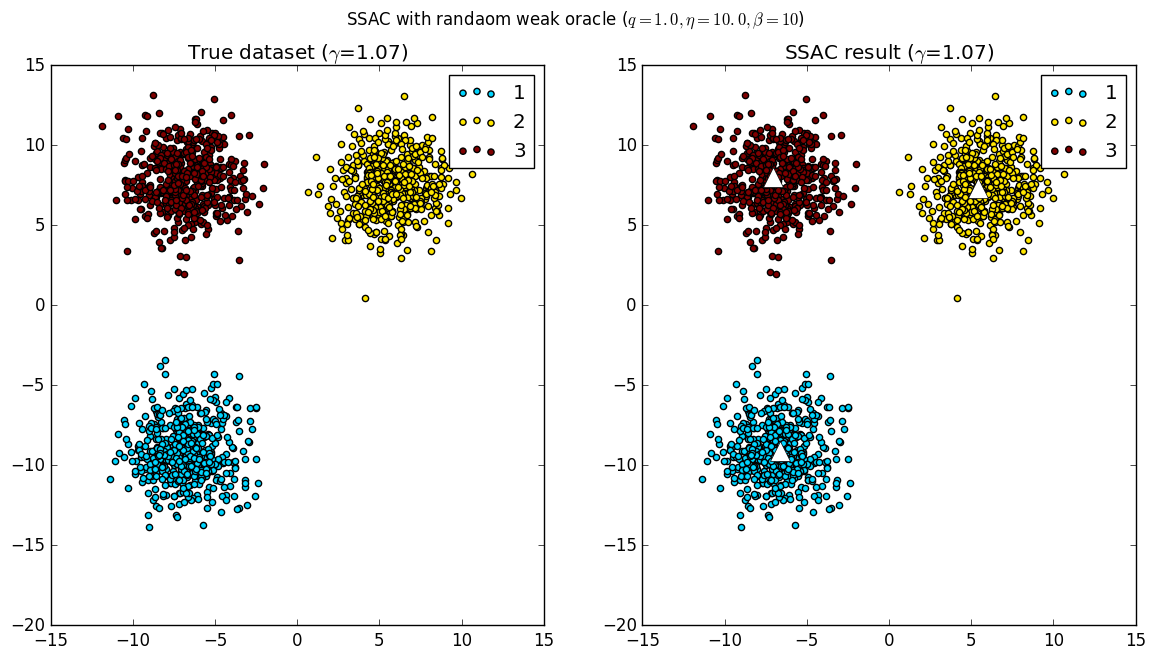}\hspace{1em}\includegraphics[width=.48\linewidth]{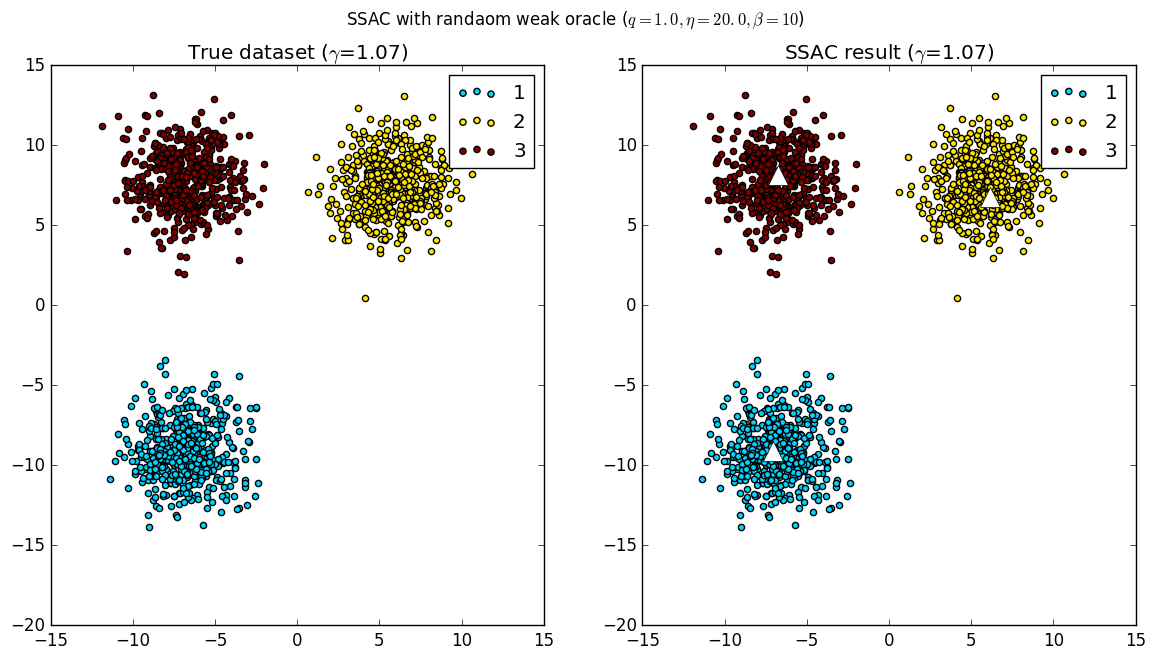}}
		\vspace{2em}
		\centerline{\includegraphics[width=.48\linewidth]{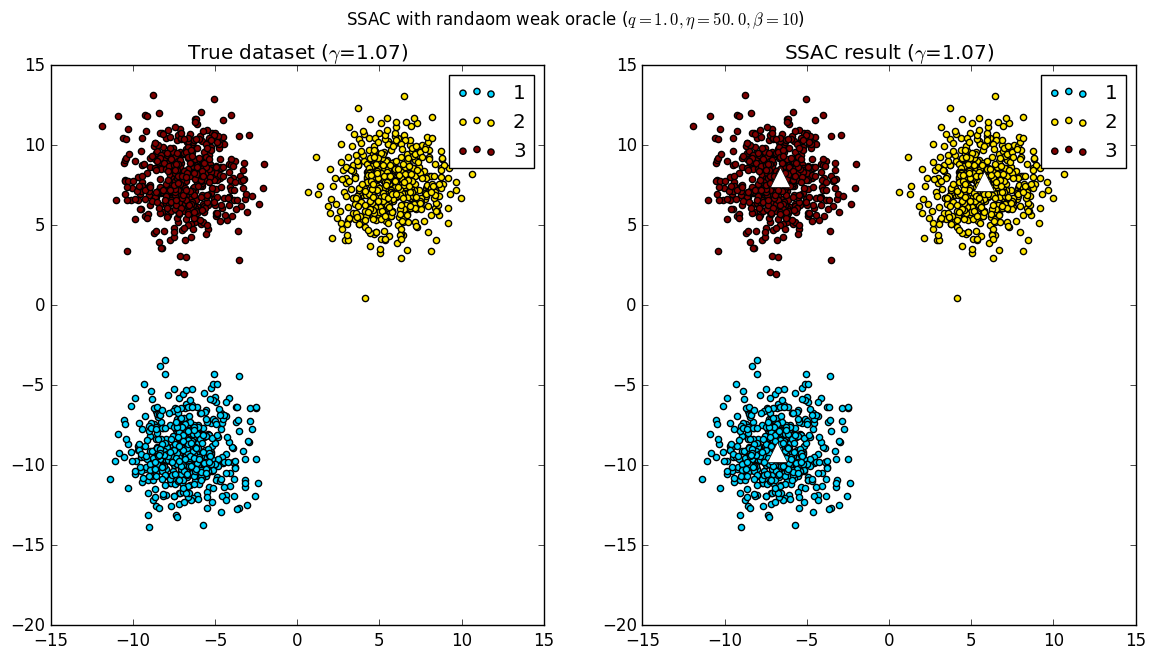}}
		\caption{Clustering results of SSAC algorithm with a $q$ random-weak oracle $(q=1.0)$. Five subfigures correspond to different $\eta$ values from $2$ to $50$ in order. In each subfigure, \textit{left} figure shows a ground truth dataset ($\gamma=1.07$), and \textit{right} figure shows the recovered clustering. White triangles represent cluster centers estimated from samples in Phase 1 of Algorithm \ref{alg:weak_SSAC_ran}.}
		\label{fig:res_q10}
	\end{center}
	\vskip -0.1in
\end{figure}

\newpage
\subsection{Additional Results}\label{subsec:append_results}
\paragraph{Higher Dimension} $N_{rep}=5000$, $n=3000$, $m=10$, $k=3$, $\sigma_{std}=1.75$, $\gamma_{\min}=1.0$, $\gamma_{\max}=1.1$, $\beta=10$.

First, we tested a case where data is 10-dimensional. Table \ref{table:append_results_highm} shows $Accuracy$ in percentage, and $\#Failure$ on different parameter pairs $(q,\eta)$ respectively. Also, Figure \ref{fig:append_res_acc_highm} visualizes the $Accuracy$ result as a graph.
\begin{figure}[ht]
	\centering
	\begin{subfigure}{.4\linewidth}
		\centering
		\includegraphics[width=\linewidth]{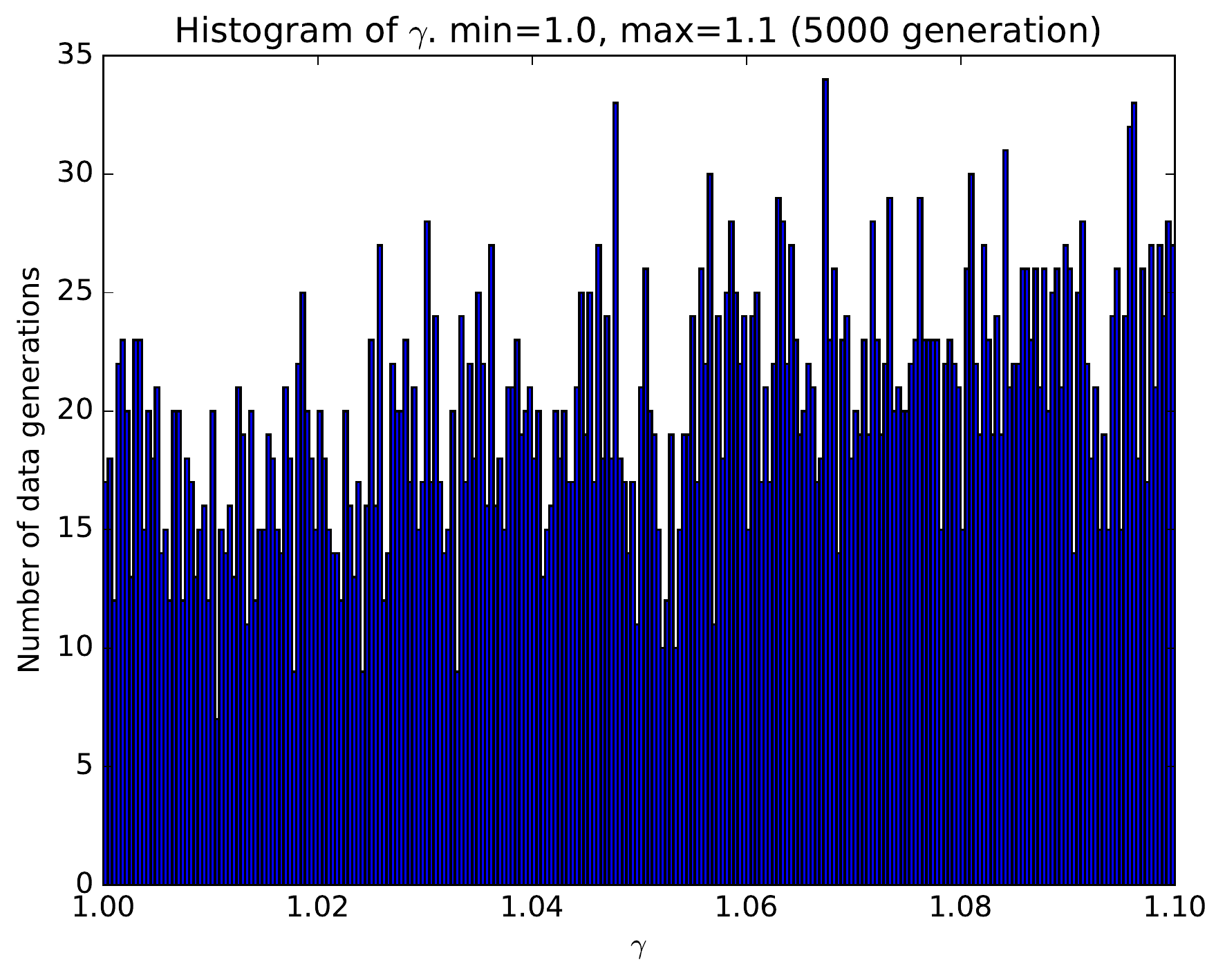}
		\caption{}\label{fig:append_gamma_hist_highm}
	\end{subfigure}
	\hspace {2em}
	\begin{subfigure}{.4\linewidth}
		\includegraphics[width=\linewidth]{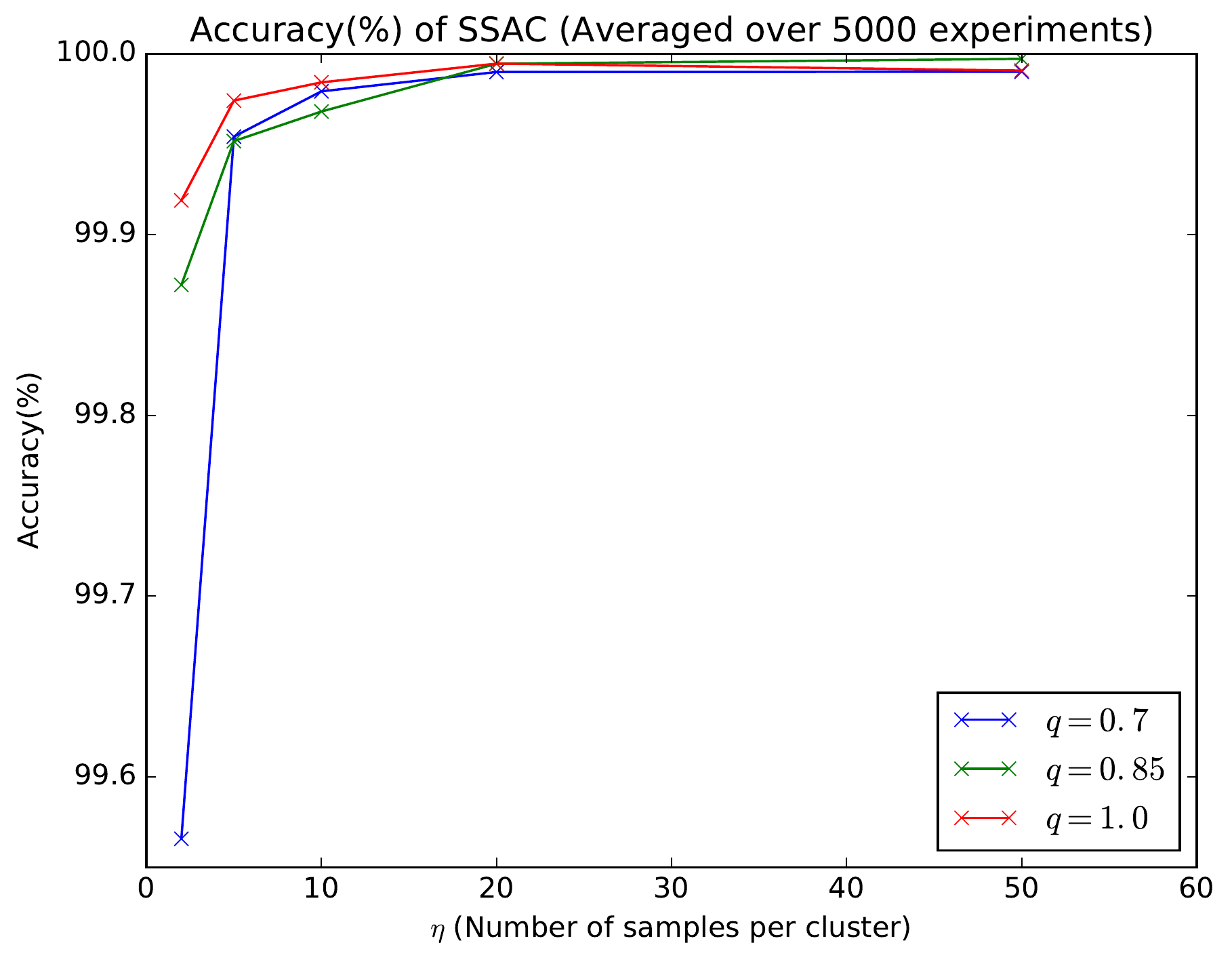}
		\caption{}\label{fig:append_res_acc_highm}
	\end{subfigure}
	\caption{$\gamma_{\min}=1.0$, $\gamma_{\max}=1.1$, $m=10$. (\subref{fig:append_gamma_hist_highm}) Histogram of $\gamma$ margins ($N_{rep}=5000$). \textit{x-axis}: $\gamma$ (Margin value), \textit{y-axis}: Normalized number of data generations corresponding to each $\gamma$. (\subref{fig:append_res_acc_highm}) $Accuracy~(\%)$ of SSAC algorithm. Averaged over $N_{rep}=1000$ experiments. \textit{x-axis}: $\eta$ (Number of samples) \textit{y-axis}: $Accuracy~(\%)$.}
	\vskip -0.1in
\end{figure} 
\begin{table}[ht]
	\begin{small}
		\centering
		\caption{$\gamma_{\min}=1.0$, $\gamma_{\max}=1.1$, $m=10$. (\textit{Left}) $Accuracy~(\%)$ of SSAC algorithm. Averaged over $N_{rep}=5000$ experiments. (\textit{Right}) $\# Failure$ of SSAC algorithm. Total sum over $N_{rep}=5000$ experiments.}
		\label{table:append_results_highm}
		\vspace{1em}
		\begin{subtable}{.58\linewidth}
			\centering
			\begin{tabular}{|c|ccccc|}
				\hline
				\multirow{2}{*}{$q$}& \multicolumn{5}{c|}{$\eta$} \\
				& 2 & 5 & 10 & 20 & 50 \\
				\hline
				\hline
				0.70 & 99.566 & 99.954 & 99.979 & 99.990 & 99.990 \\
				0.85 & 99.872 & 99.952 & 99.968 & 99.994 & 99.997 \\
				1.00 & 99.919 & 99.974 & 99.984 & 99.995 & 99.991 \\
				\hline
			\end{tabular}
		\end{subtable}
		\begin{subtable}{.38\linewidth}
			\centering
			\begin{tabular}{|c|ccccc|}
				\hline
				\multirow{2}{*}{$q$}& \multicolumn{5}{c|}{$\eta$} \\
				& 2 & 5 & 10 & 20 & 50 \\
				\hline
				\hline
				0.70 & 49 & 2 & 1 & 0 & 1 \\
				0.85 &  7 & 4 & 3 & 0 & 0 \\
				1.00 &  3 & 1 & 1 & 0 & 1 \\
				\hline
			\end{tabular}
		\end{subtable}
	\end{small}
\end{table}
\newpage
\paragraph{Non-separable} $N_{rep}=5000$, $n=1500$, $m=2$, $k=3$, $\sigma_{std}=1.75$, $\gamma_{\min}=0.6$, $\gamma_{\max}=1.0$.

Although our theoretical results assume $\gamma>1$ to have a clusterability, we tested our method cases where clusters overlap. Table \ref{table:append_results_nonsep} shows $Accuracy$ in percentage, and $\#Failure$ on different parameter pairs $(q,\eta)$ respectively. Also, Figure \ref{fig:append_res_acc_nonsep} visualizes the $Accuracy$ result as a graph. As expected, $Accuracy$ of the algorithm has decreased which is affected by points overlapping at the edge of clusters. Also, number of failures has increased compared to the ideal cases with $\gamma>1$. However, our result still shows that enough number of queries can give reasonable clustering if small portion of points overlap. Figure \ref{fig:append_res_q07_nonsep}, \ref{fig:append_res_q85_nonsep}, and \ref{fig:append_res_q10_nonsep} are also provided to show the visualization of clustering results.

\begin{figure}[ht]
	\centering
	\begin{subfigure}{.4\linewidth}
		\centering
		\includegraphics[width=\linewidth]{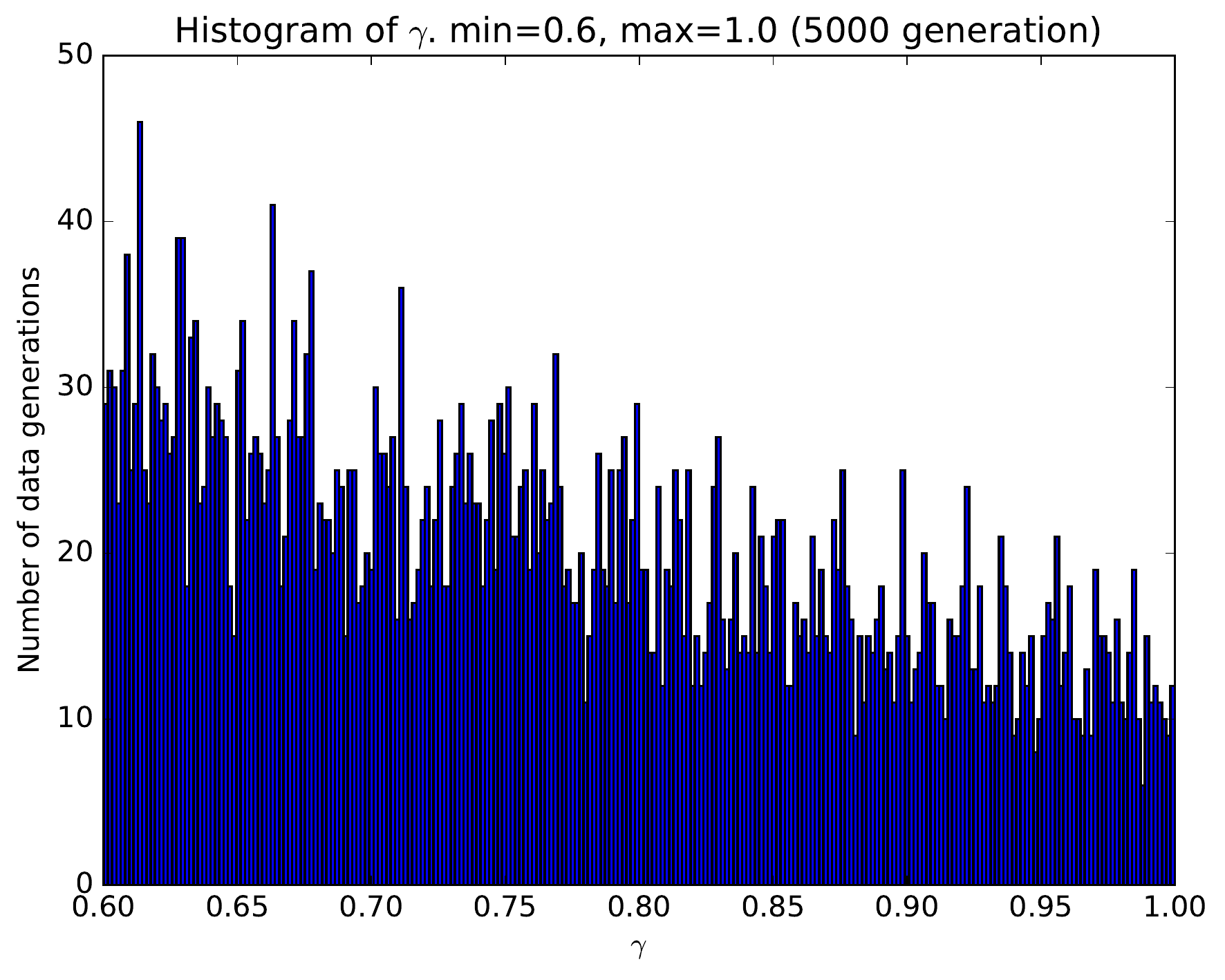}
		\caption{}\label{fig:append_gamma_hist_nonsep}
	\end{subfigure}
	\hspace {2em}
	\begin{subfigure}{.4\linewidth}
		\includegraphics[width=\linewidth]{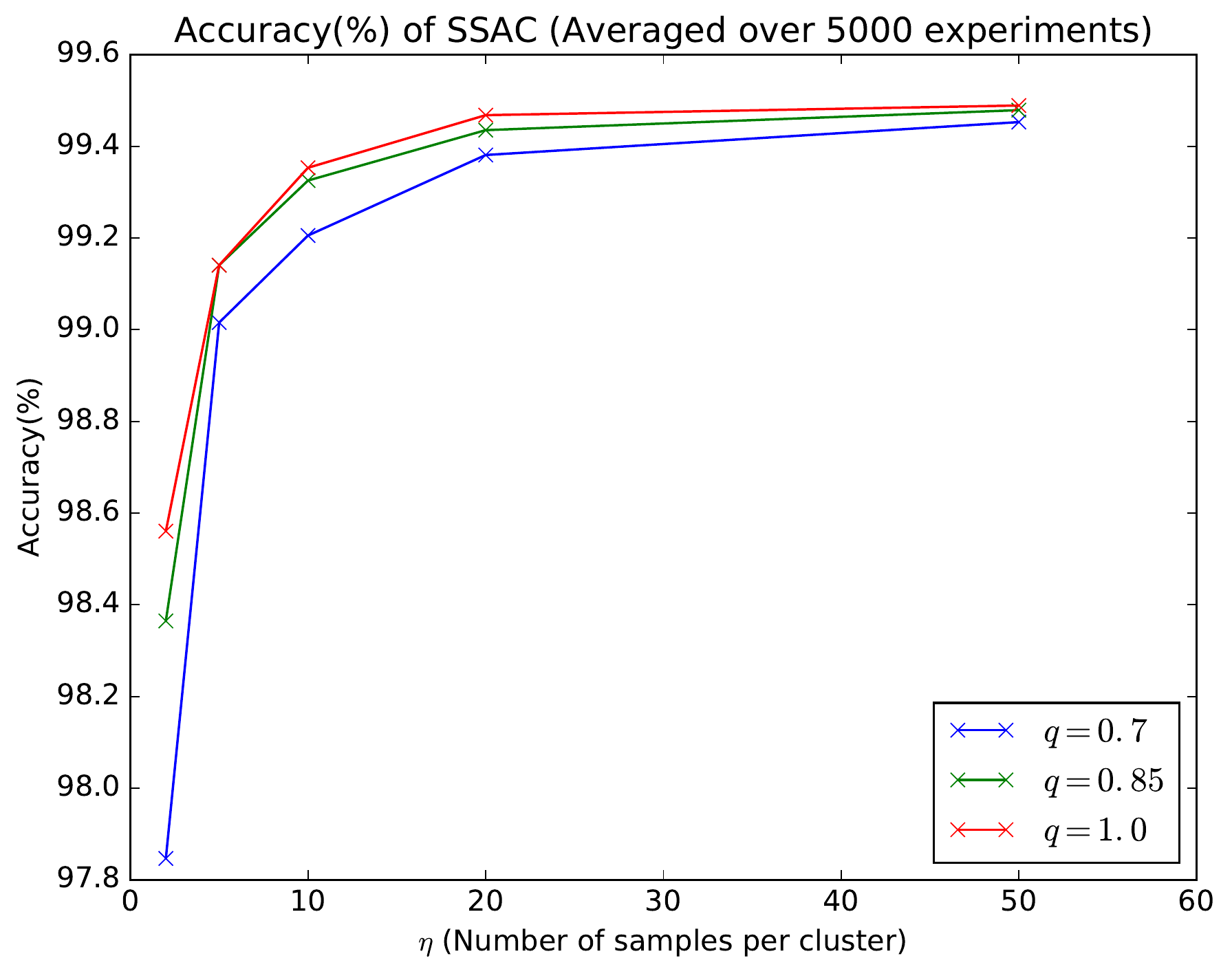}
		\caption{}\label{fig:append_res_acc_nonsep}
	\end{subfigure}
	\caption{$\gamma_{\min}=0.6$, $\gamma_{\max}=1.0$, $m=2$. (\subref{fig:append_gamma_hist_nonsep}) Histogram of $\gamma$ margins ($N_{rep}=5000$). \textit{x-axis}: $\gamma$ (Margin value), \textit{y-axis}: Normalized number of data generations corresponding to each $\gamma$. (\subref{fig:append_res_acc_nonsep}) $Accuracy~(\%)$ of SSAC algorithm. Averaged over $N_{rep}=1000$ experiments. \textit{x-axis}: $\eta$ (Number of samples) \textit{y-axis}: $Accuracy~(\%)$.}
	\vskip -0.1in
\end{figure} 
\begin{table}[ht]
	\begin{small}
		\centering
		\caption{$\gamma_{\min}=0.6$, $\gamma_{\max}=1.0$, $m=2$. (\textit{Left}) $Accuracy~(\%)$ of SSAC algorithm. Averaged over $N_{rep}=5000$ experiments. (\textit{Right}) $\# Failure$ of SSAC algorithm. Total sum over $N_{rep}=5000$ experiments.}
		\label{table:append_results_nonsep}
		\vspace{1em}
		\begin{subtable}{.58\linewidth}
			\centering
			\begin{tabular}{|c|ccccc|}
				\hline
				\multirow{2}{*}{$q$}& \multicolumn{5}{c|}{$\eta$} \\
				& 2 & 5 & 10 & 20 & 50 \\
				\hline
				\hline
				0.70 & 97.847 & 99.016 & 99.205 & 99.381 & 99.453 \\
				0.85 & 98.365 & 99.141 & 99.325 & 99.435 & 99.479 \\
				1.00 & 98.560 & 99.141 & 99.353 & 99.468 & 99.489 \\
				\hline
			\end{tabular}
		\end{subtable}
		\begin{subtable}{.38\linewidth}
			\centering
			\begin{tabular}{|c|ccccc|}
				\hline
				\multirow{2}{*}{$q$}& \multicolumn{5}{c|}{$\eta$} \\
				& 2 & 5 & 10 & 20 & 50 \\
				\hline
				\hline
				0.70 & 137 & 38 & 37 & 21 & 18 \\
				0.85 & 85 & 31 & 23 & 15 & 16 \\
				1.00 & 72 & 37 & 21 & 13 & 14 \\
				\hline
			\end{tabular}
		\end{subtable}
	\end{small}
\end{table}

\newpage
\begin{figure}[ht]
	\begin{center}
		\centerline{\includegraphics[width=.48\linewidth]{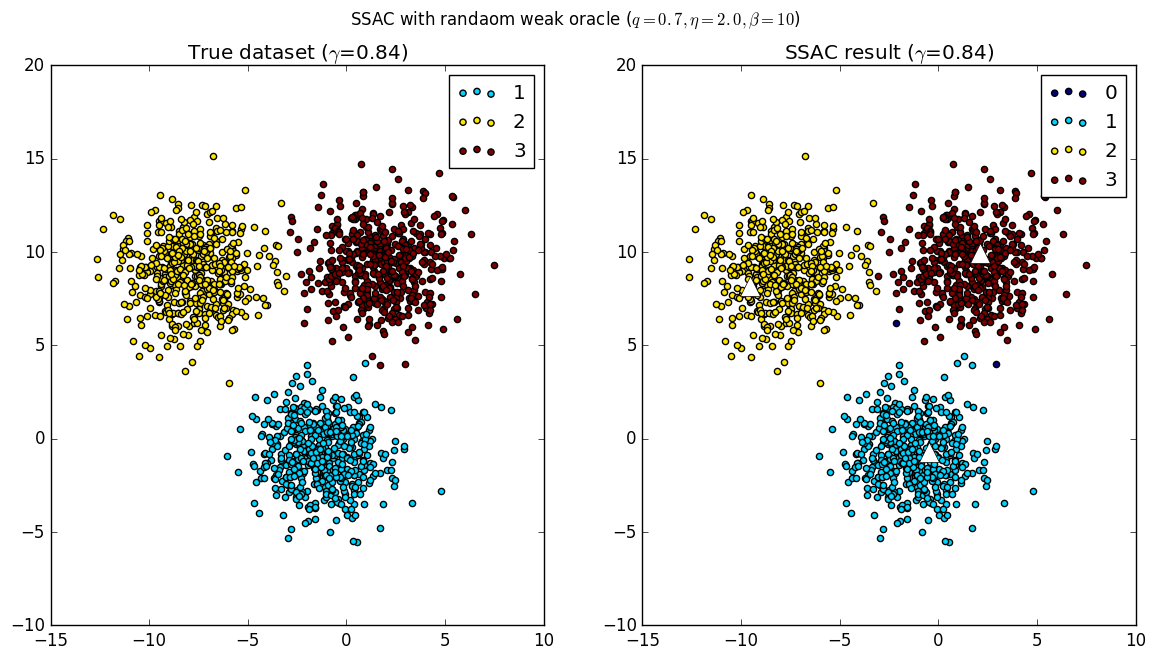}\hspace{1em}\includegraphics[width=.48\linewidth]{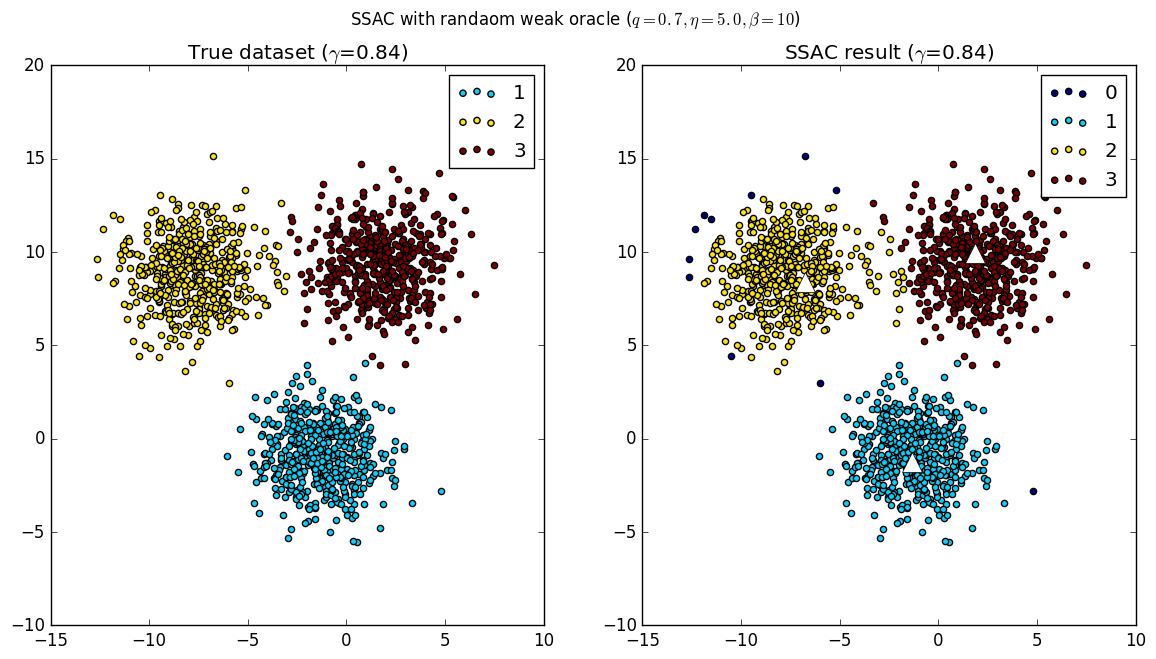}}
		\vspace{2em}
		\centerline{\includegraphics[width=.48\linewidth]{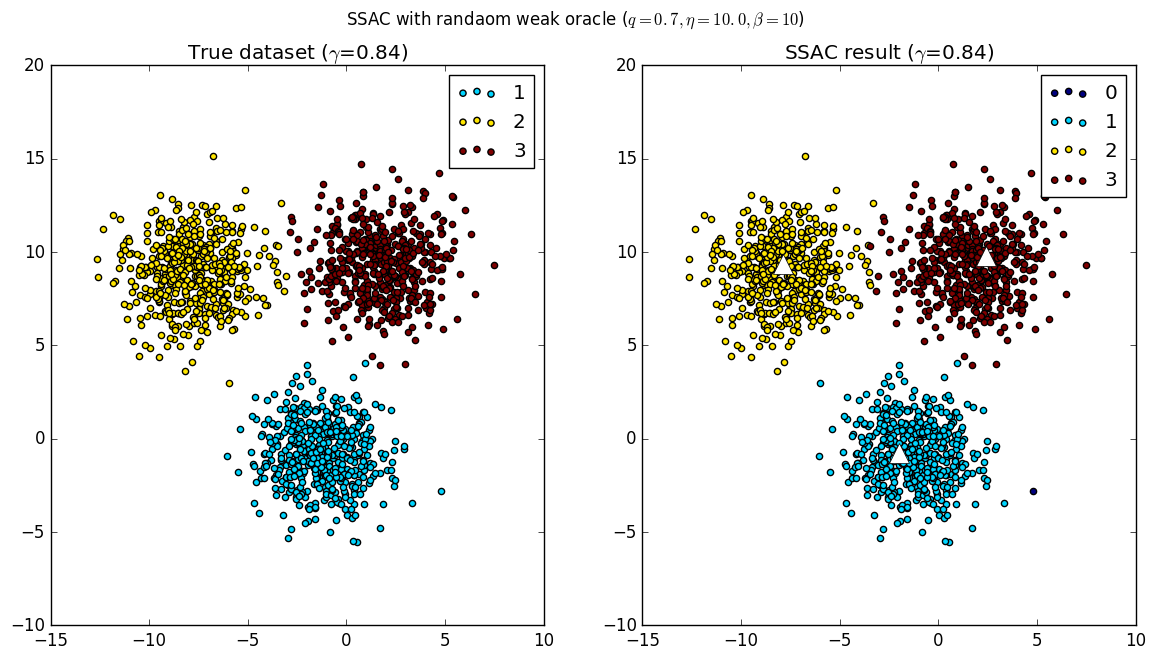}\hspace{1em}\includegraphics[width=.48\linewidth]{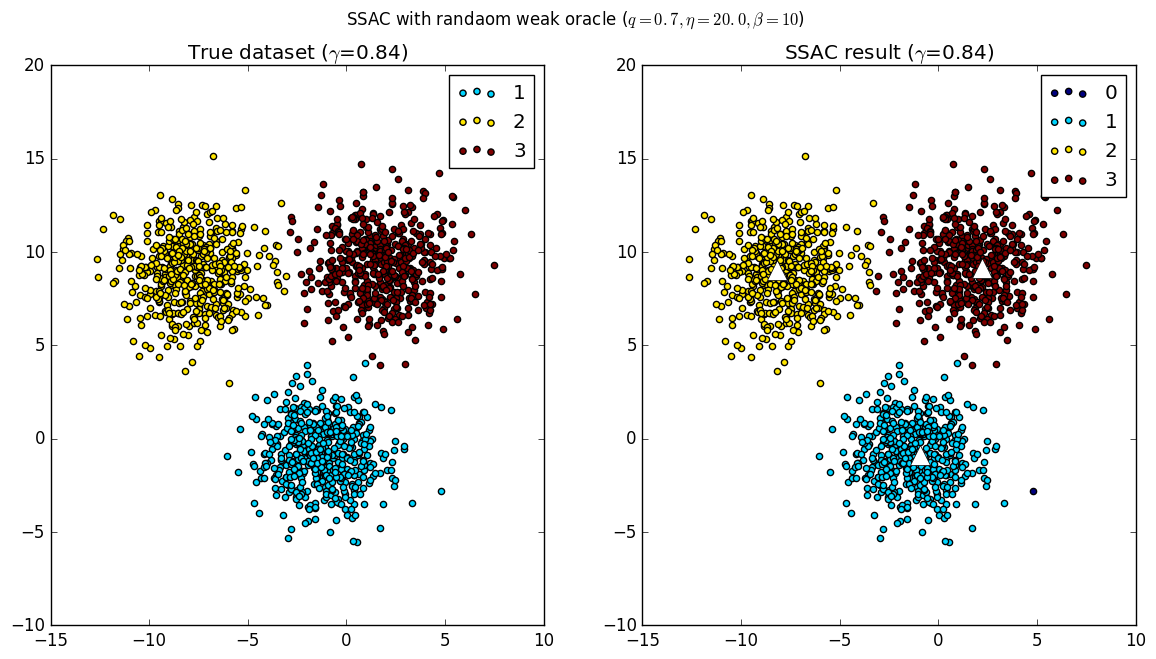}}
		\vspace{2em}
		\centerline{\includegraphics[width=.48\linewidth]{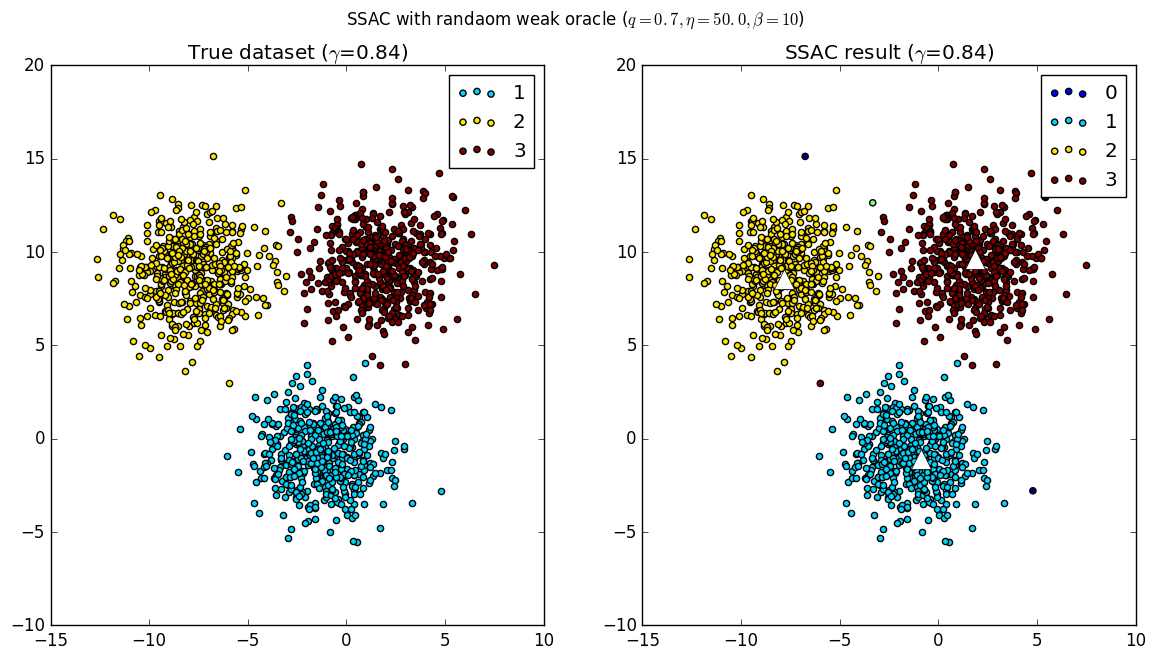}}
		\caption{Clustering results of SSAC algorithm with a $q$ random-weak oracle $(q=0.7)$. Five subfigures correspond to different $\eta$ values from $2$ to $50$ in order. In each subfigure, \textit{left} figure shows a ground truth dataset ($\gamma=0.84$), and \textit{right} figure shows the recovered clustering. White triangles represent cluster centers estimated from samples in Phase 1 of Algorithm \ref{alg:weak_SSAC_ran}.}
		\label{fig:append_res_q07_nonsep}
	\end{center}
	\vskip -0.1in
\end{figure}
\newpage
\begin{figure}[ht]
	\begin{center}
		\centerline{\includegraphics[width=.48\linewidth]{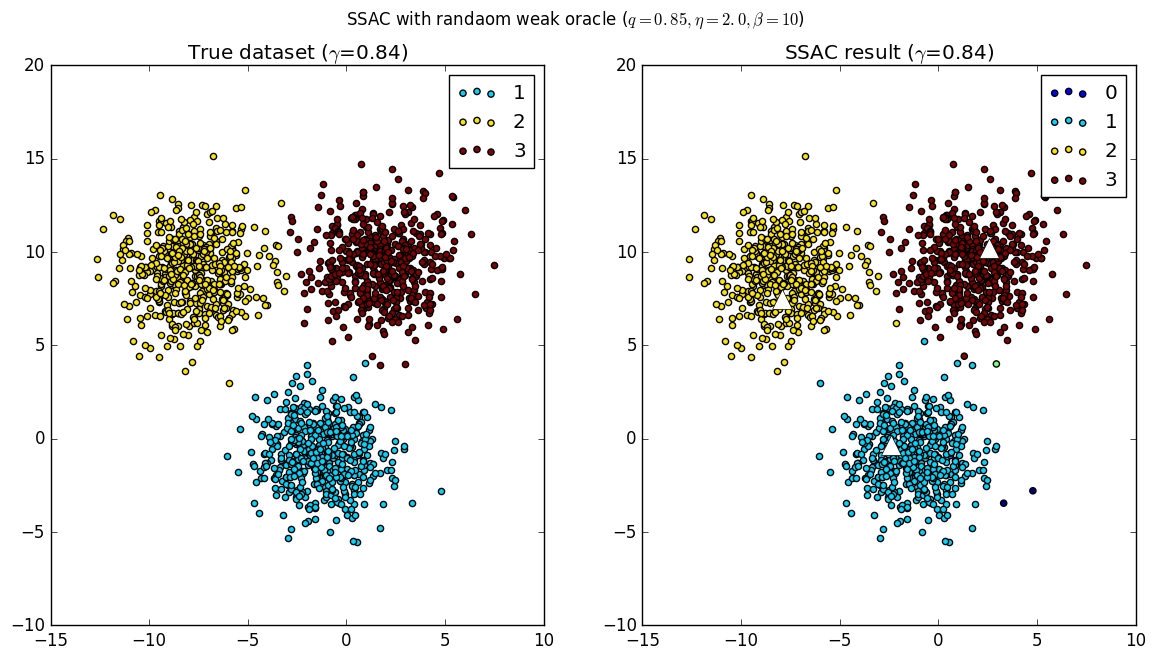}\hspace{1em}\includegraphics[width=.48\linewidth]{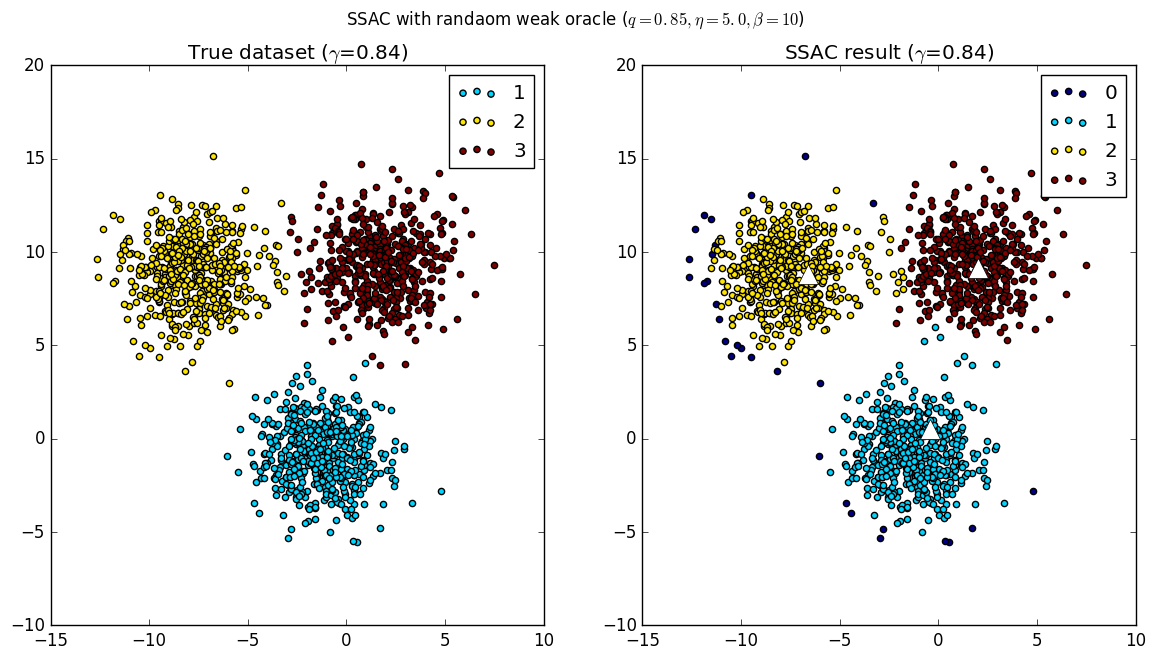}}
		\vspace{2em}
		\centerline{\includegraphics[width=.48\linewidth]{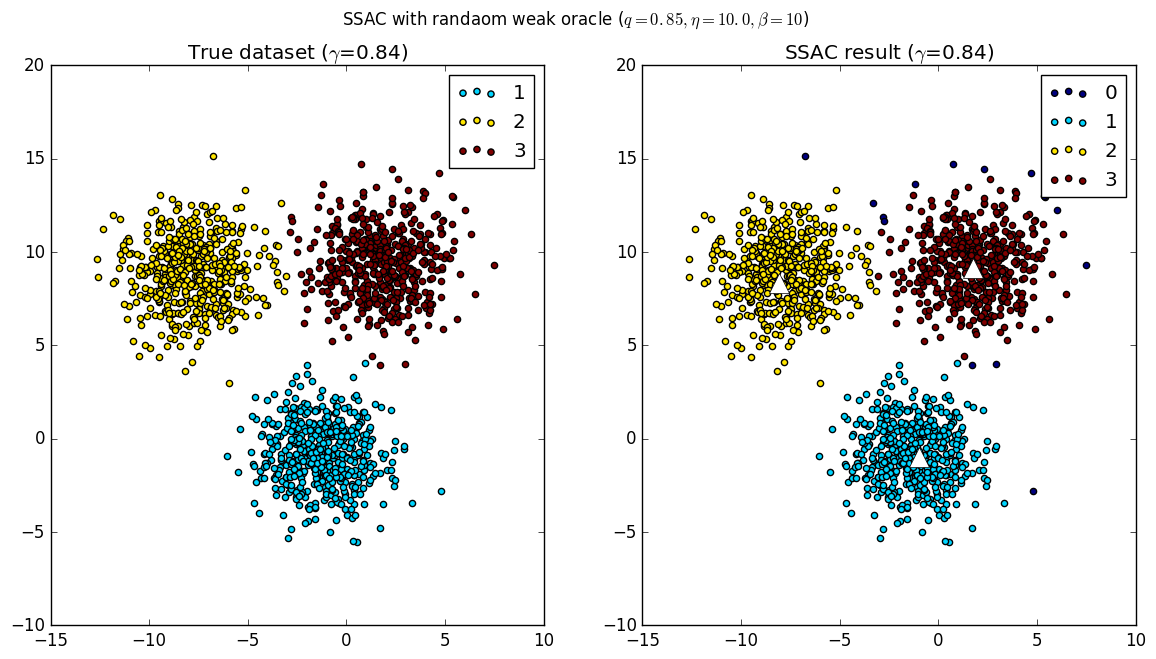}\hspace{1em}\includegraphics[width=.48\linewidth]{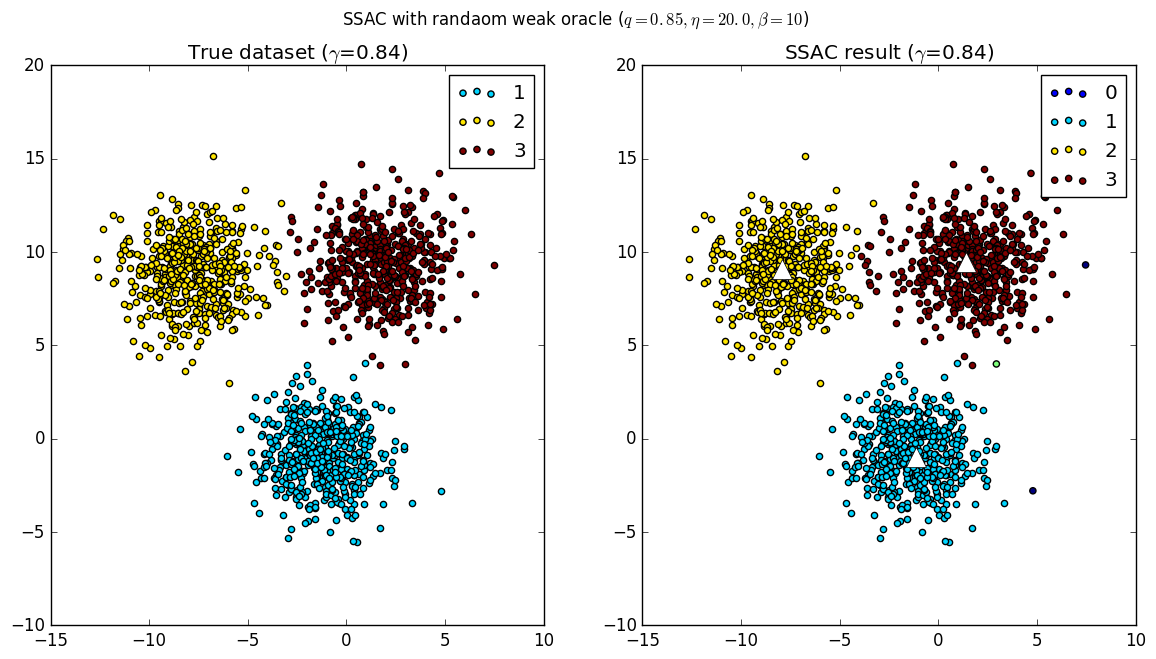}}
		\vspace{2em}
		\centerline{\includegraphics[width=.48\linewidth]{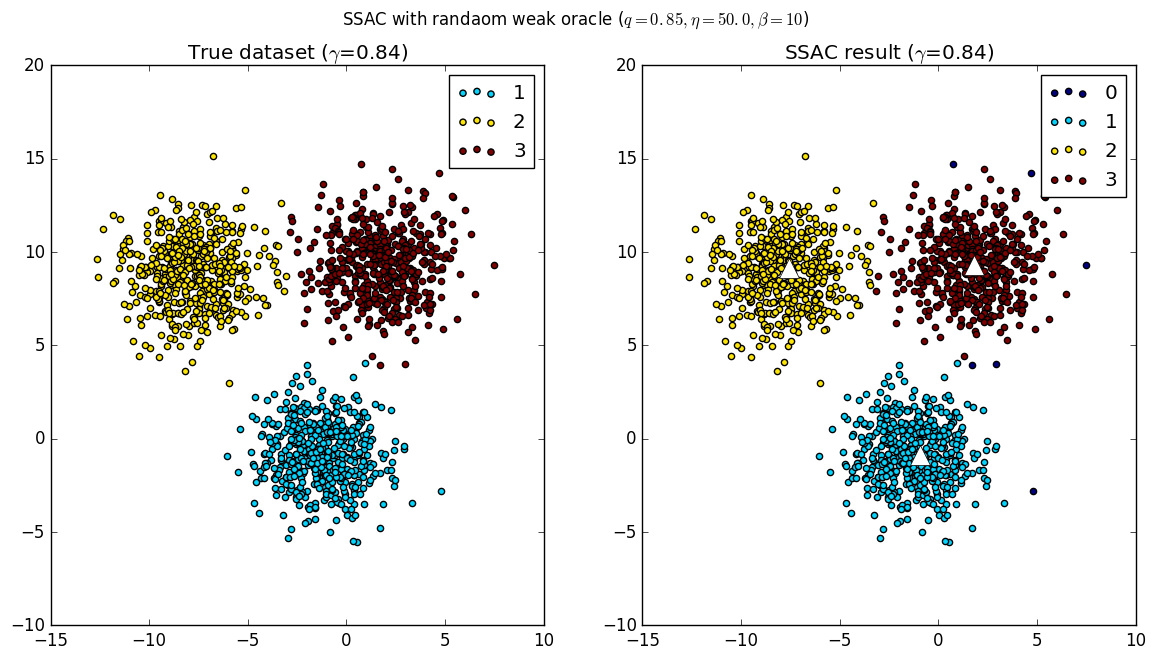}}
		\caption{Clustering results of SSAC algorithm with a $q$ random-weak oracle $(q=0.85)$. Five subfigures correspond to different $\eta$ values from $2$ to $50$ in order. In each subfigure, \textit{left} figure shows a ground truth dataset ($\gamma=0.84$), and \textit{right} figure shows the recovered clustering. White triangles represent cluster centers estimated from samples in Phase 1 of Algorithm \ref{alg:weak_SSAC_ran}.}
		\label{fig:append_res_q85_nonsep}
	\end{center}
	\vskip -0.1in
\end{figure}
\newpage
\begin{figure}[ht]
	\begin{center}
		\centerline{\includegraphics[width=.48\linewidth]{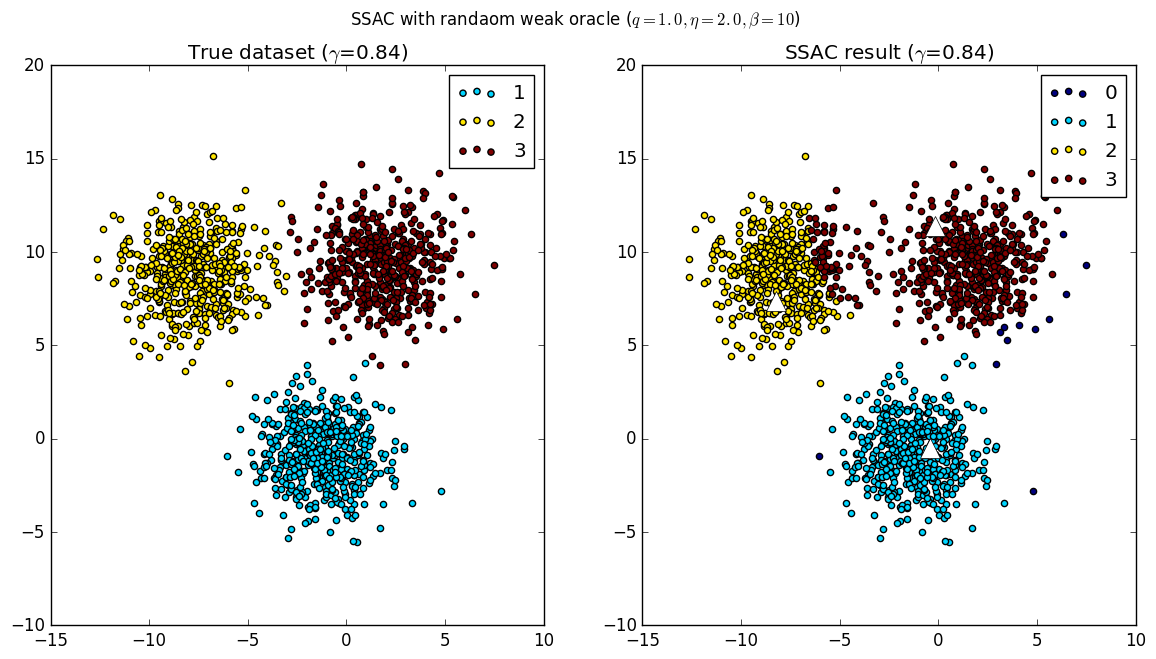}\hspace{1em}\includegraphics[width=.48\linewidth]{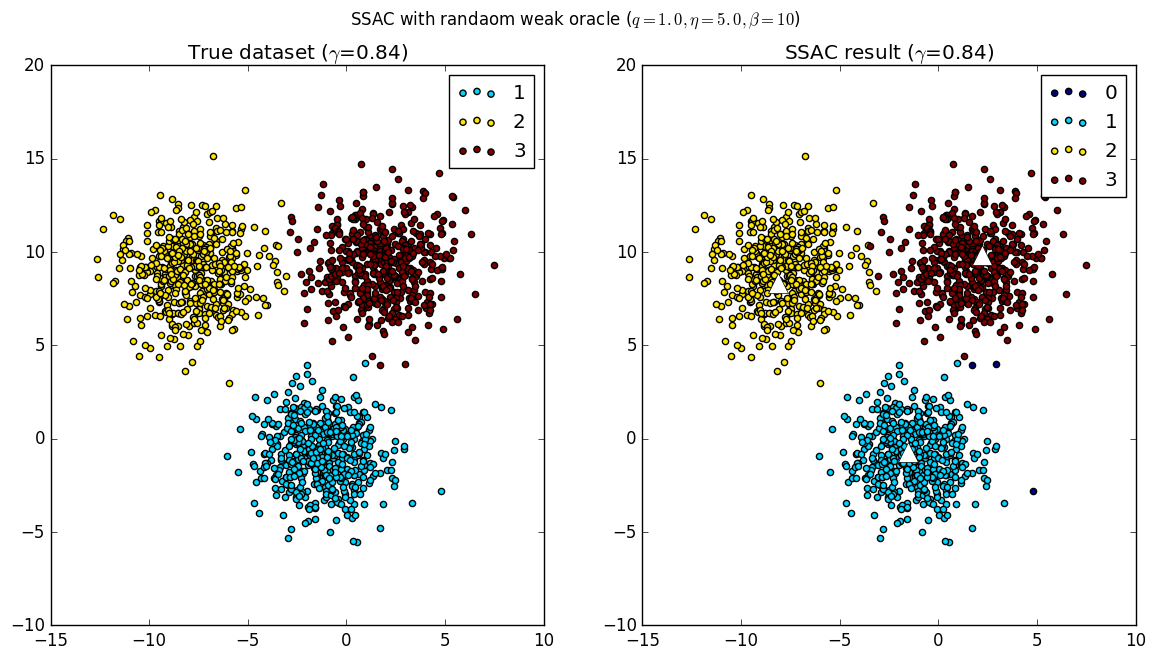}}
		\vspace{2em}
		\centerline{\includegraphics[width=.48\linewidth]{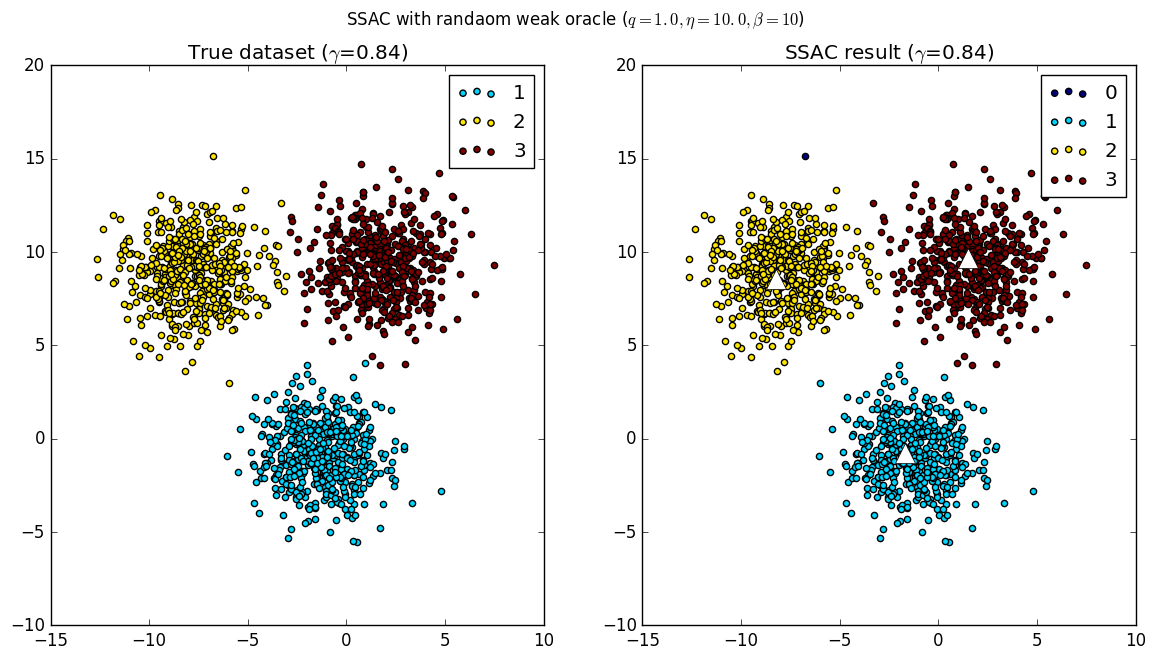}\hspace{1em}\includegraphics[width=.48\linewidth]{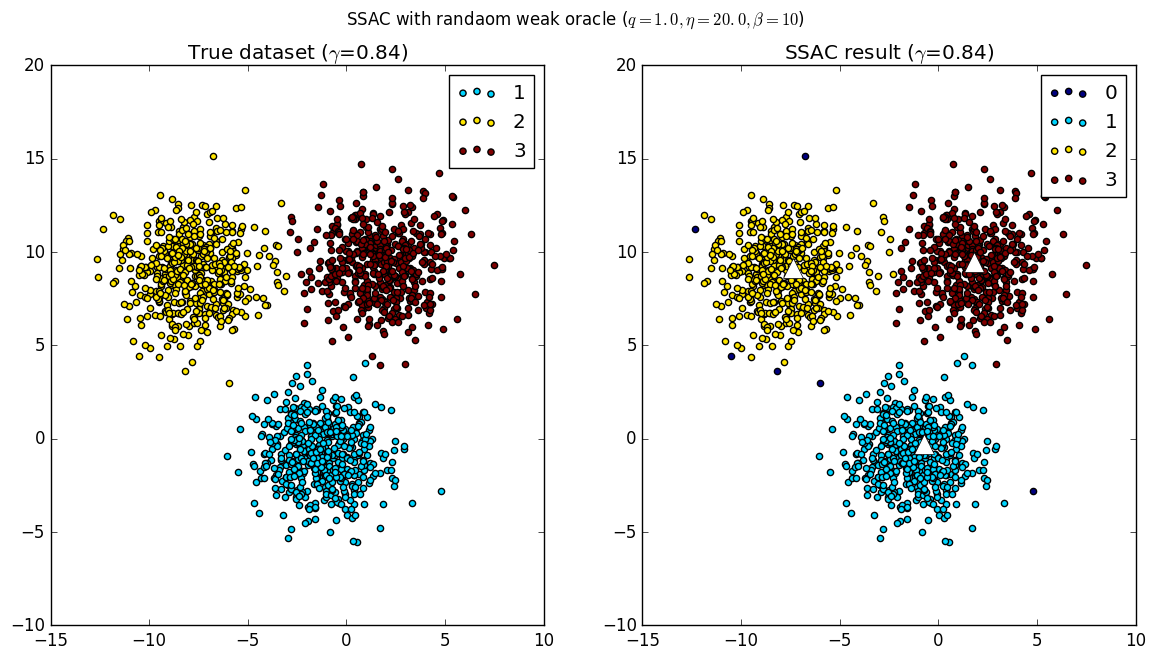}}
		\vspace{2em}
		\centerline{\includegraphics[width=.48\linewidth]{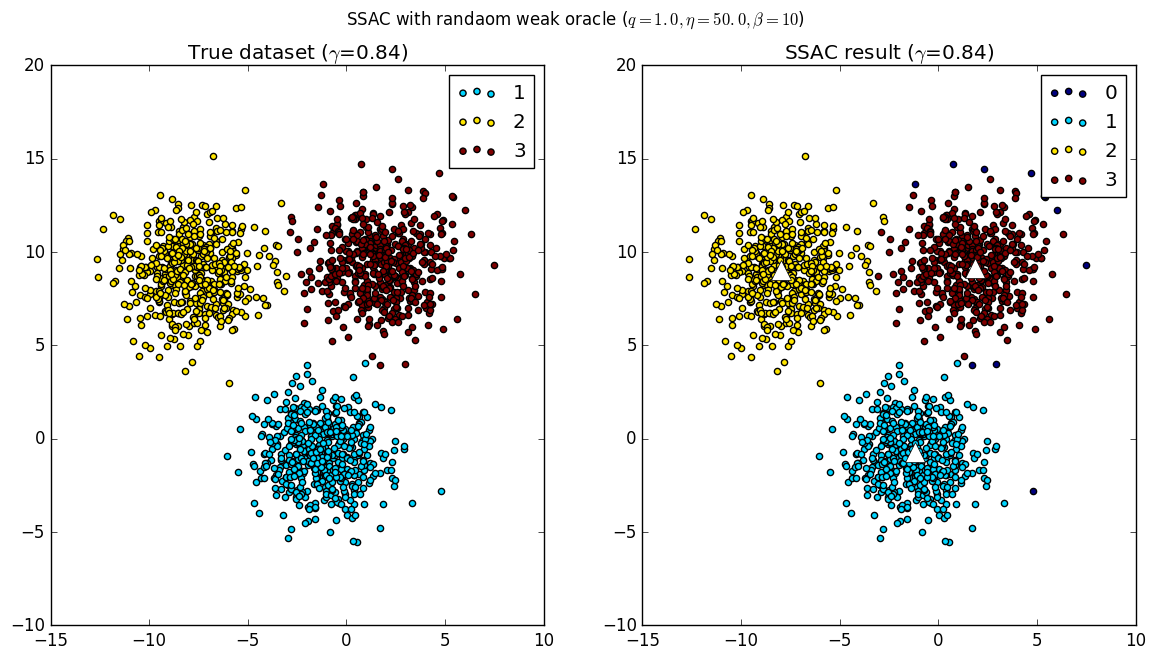}}
		\caption{Clustering results of SSAC algorithm with a $q$ random-weak oracle $(q=1.0)$. Five subfigures correspond to different $\eta$ values from $2$ to $50$ in order. In each subfigure, \textit{left} figure shows a ground truth dataset ($\gamma=0.84$), and \textit{right} figure shows the recovered clustering. White triangles represent cluster centers estimated from samples in Phase 1 of Algorithm \ref{alg:weak_SSAC_ran}.}
		\label{fig:append_res_q10_nonsep}
	\end{center}
	\vskip -0.1in
\end{figure}

\end{document}